\PassOptionsToPackage{table}{xcolor}
\documentclass{valence} 

\usepackage{ulem}
\usepackage{url}
\usepackage{enumitem}
\usepackage[T1]{fontenc}
\usepackage{wrapfig}
\usepackage{xcolor}
\usepackage{booktabs}
\usepackage{amsthm}
\usepackage{tcolorbox}
\tcbuselibrary{most}
\usepackage[ruled,vlined]{algorithm2e}
\usepackage{thmtools}
\usepackage{thm-restate}
\usepackage{cleveref}  

\crefname{figure}{Fig.}{Figs.}
\crefname{table}{Tab.}{Tabs.}
\crefname{section}{Sec.}{Secs.}
\crefname{equation}{Eq.}{Eqs.}
\crefname{algocf}{Alg.}{Algs.}

\tcbset{
  thmbox/.style={
    enhanced,
    breakable,
    colback=cyan!10!white,
    colframe=cyan!10!white,
    boxrule=0pt,
    arc=2pt,
    left=8pt, right=8pt, top=6pt, bottom=6pt,
    fonttitle=\bfseries,
  }
}

\newcounter{boxmaster}[section]
\renewcommand{\theboxmaster}{\thesection.\arabic{boxmaster}}

\renewcommand{\title}[1]{\newcommand{\titlelist}{{\LARGE\sffamily\bfseries #1}}}

\newcommand{\defbox}[5]{%
  \ifcsname c@#1Inner\endcsname\else\newcounter{#1Inner}\fi
  \expandafter\def\csname the#1Inner\endcsname{\theboxmaster}%
  \expandafter\let\csname #1Inner\endcsname\undefined
  \expandafter\let\csname end#1Inner\endcsname\undefined
  \newenvironment{#1Inner}[1][]{%
    \stepcounter{boxmaster}\refstepcounter{#1Inner}%
    {\bfseries #2~\theboxmaster}%
    \ifx\relax##1\relax\else\ (##1)\fi
    {\bfseries.}\ #3%
  }{}%
  \expandafter\let\csname #1\endcsname\undefined
  \expandafter\let\csname end#1\endcsname\undefined
  \newenvironment{#1}[1][]{%
    \begin{tcolorbox}[thmbox]\begin{#1Inner}[##1]%
  }{\end{#1Inner}\end{tcolorbox}}%
  \crefname{#1Inner}{#4}{#5}%
}

\defbox{theorem}{Theorem}{\itshape}{Thm.}{Thms.}
\defbox{lemma}{Lemma}{\itshape}{Lem.}{Lems.}
\defbox{proposition}{Proposition}{\itshape}{Prop.}{Props.}
\defbox{corollary}{Corollary}{\itshape}{Cor.}{Cors.}
\defbox{remark}{Remark}{\relax}{Rem.}{Rems.}
\defbox{definition}{Definition}{\relax}{Def.}{Defs.}

\newenvironment{restated}[1]
  {\par\medskip\noindent\textbf{#1} (restated).\ \itshape}
  {\par\medskip}

\usepackage{amsmath,amsfonts,bm}

\def\eqref#1{equation~\ref{#1}}

\def\1{\bm{1}}

\DeclareMathAlphabet{\mathsfit}{\encodingdefault}{\sfdefault}{m}{sl}
\SetMathAlphabet{\mathsfit}{bold}{\encodingdefault}{\sfdefault}{bx}{n}

\DeclareMathOperator*{\argmin}{arg\,min}

\title{Trajectory Replay: Continuous-Time Mixture Transport for Lifelong Generative Models}

\author[1,2,3\dagger]{Alessandro~Palma}
\author[4,5]{Charles~Jones}
\author[6\dagger]{Tristan~Cinquin}
\author[1,2,3]{Fabian~J.~Theis}
\author[4,5\star]{Frederik~Wenkel}

\affiliation[1]{Helmholtz Munich}

\affiliation[2]{Technical University of Munich}

\affiliation[3]{Munich Center for Machine Learning}

\affiliation[4]{Valence Labs}

\affiliation[5]{Recursion}

\affiliation[6]{University of Tübingen}

\contribution[\dagger]{Work done during an internship at Valence Labs \\
  $^{\star}$Corresponding author: \texttt{frederik@valencelabs.com}}

\abstract{
Adapting pre-trained continuous-time generative models to sequentially incoming data without forgetting earlier tasks is a central challenge in lifelong learning. A leading strategy, generative replay, uses a generative model trained on historical tasks to augment its own continual update with synthetic samples from experience. However, it requires full trajectory simulations at every training step, which accumulate discretization errors and can degrade the replayed historical distribution. Here, we propose \textit{trajectory replay}, which bypasses sample generation by training an updated velocity field to match both a frozen historical velocity along its trajectories and the incoming task's velocity field, weighted by a stability-plasticity coefficient. Through the lens of continuous-time transport, we show the resulting optimum averages the two task velocities by their respective densities and generates the mixture of their probability paths. Combined with mean velocity distillation, trajectory replay accesses historical trajectories using only a few network evaluations per continual update. Empirically, it outperforms generative replay and distillation baselines on class-incremental image generation. In two biological case studies, trajectory replay extends generative perturbation models to new contexts and experimental batches across two data modalities while preserving treatment-effect prediction. Code is available at \url{https://github.com/valence-labs/trajectory_replay}.
}

\begin{document}

\maketitle

\section{Introduction}
Continuous-time generative models, such as diffusion and flow matching \citep{song2020score, lipman2023flow}, have gained traction across image generation \citep{dhariwal2021diffusion, esser2024scaling}, language modeling \citep{li2022diffusionlm}, and scientific applications \citep{morehead2026flow, sriram2024flowllm}. Trained on large corpora of data to capture complex, multi-modal distributions, they yield reusable generative priors for a range of downstream tasks \citep{rombach2022high}. As novel data accumulates, however, effective strategies are needed to update pre-trained models with incoming knowledge while preserving generative capabilities over the historical distribution. Joint retraining on the union of historical and incoming data grows more expensive with every update, and is infeasible when the original data is no longer accessible, for instance due to privacy or licensing constraints.

Incorporating new data without full retraining motivates continual learning (CL) strategies \citep{wang2024comprehensive}, whereby a seed model is updated on incoming data streams while mitigating forgetting of previous tasks. In generative modeling, each \textit{task} corresponds to a new data distribution, absent from the historical dataset, that the model must learn to sample from \citep{guo2025continual}. CL approaches applied to generative models include architectural evolution \citep{smith2024continual}, regularization \citep{song2024towards}, and replay \citep{NEURIPS2019_fa7cdfad}. Among them, generative replay \citep{shin2017continual} is one of the most established \citep{van2022three}: the objective over incoming tasks is augmented with samples generated by the model itself over the historical distribution, alleviating catastrophic forgetting without storing or revisiting historical data.

Applying standard generative replay to continuous-time frameworks is expensive, as each replay sample requires iterative simulation of the underlying dynamics. Approximation errors compound along these trajectories, degrading replay fidelity and, in turn, subsequent CL updates \citep{gao2023ddgr}. One line of work stabilizes replay by renoising generated samples and regressing the frozen historical model's prediction on the renoised states, rather than using the samples directly as training data \citep{masip2025continual}, but generation is still required. Cheaper solutions that bypass data generation remain underexplored. Beyond this bottleneck, most work on CL for continuous-time models remains empirical and lacks a formal understanding of the distributional properties induced by sequential updates.

To address these shortcomings, we introduce \textit{trajectory replay} (\cref{fig: concept}), a flow-based generative replay framework for continuous-time models. Given a batch of incoming data and a vector field trained to generate the historical task distribution, our method splits the CL-update objective for a joint-task velocity field into two regression terms. The first is a standard conditional flow matching loss \citep{lipman2023flow} over the incoming data. The second regresses the joint-task velocity toward the frozen historical field along its own generative trajectory at intermediate times, so replay never runs through to fully generated samples. This halves the simulation cost in expectation and makes replay more robust to discretization error. A stability-plasticity coefficient weights the two terms and controls how much mass the optimum assigns to the historical and the incoming density. 

We validate trajectory replay in controlled simulations and class-incremental image-generation benchmarks, where it improves sample fidelity and retention of earlier tasks across sequential updates at a fraction of the network evaluations of existing CL strategies. Since continual adaptation is particularly important in biological discovery, where measurements arrive sequentially across cell lines, donors, and experimental batches, and where historical screens may be inaccessible, we evaluate two drug-discovery applications of generative perturbation prediction in single-cell gene expression and microscopy. Across these settings, trajectory replay offers the best stability-plasticity balance among methods that store no historical data. In summary, our contributions are as follows:

\begin{figure}[t]
\centering
\includegraphics[width=1\textwidth]{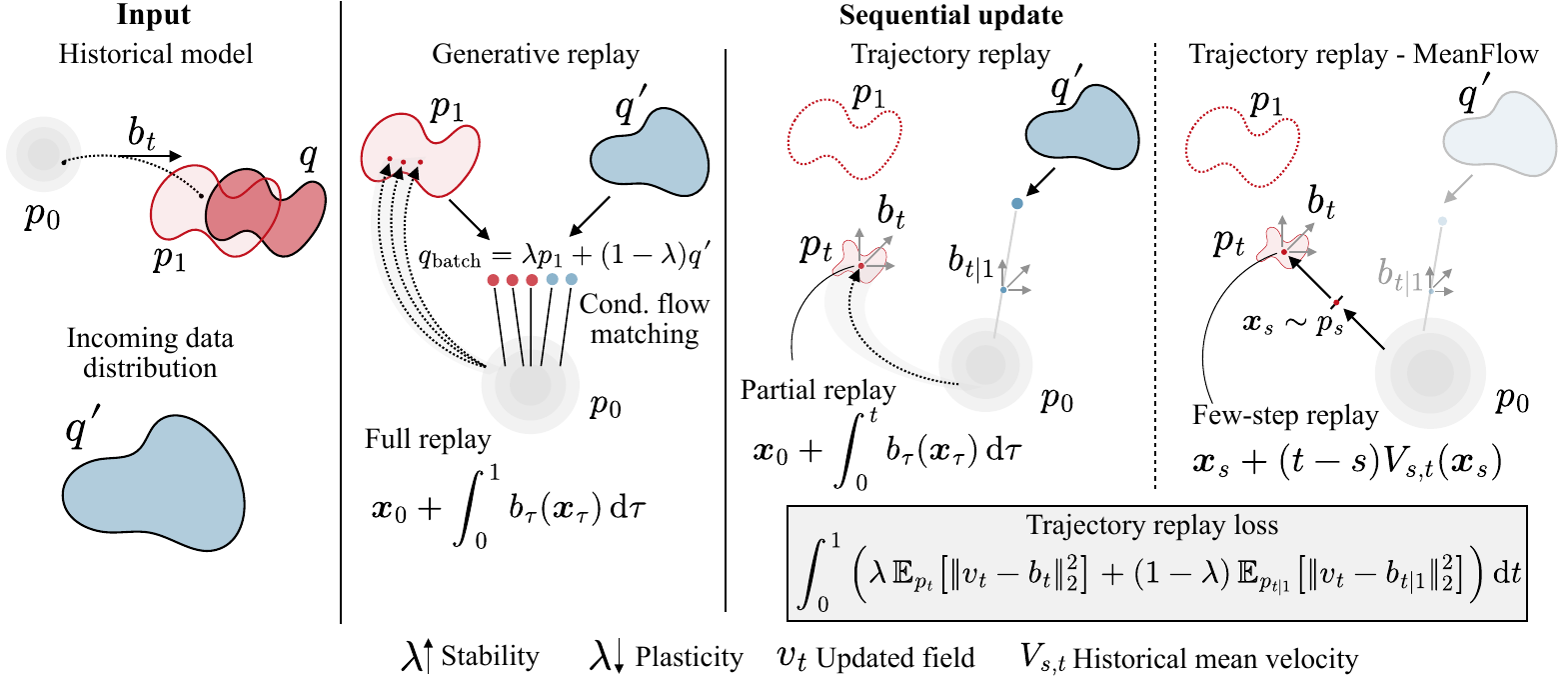}
\caption{\textbf{Trajectory replay for generative CL.} Given a historical flow model and an incoming task (left), generative replay simulates full trajectories to draw replay samples and trains on the mixture $q_{\mathrm{batch}}$. Trajectory replay instead regresses a single field against the historical field $b_t$ and the incoming task's data-conditioned velocity $b_{t|1}$ along their own paths, stopping the historical trajectory at an intermediate time. The MeanFlow variant optimizes the same objective with intermediate states drawn through the historical mean velocity $V_{s,t}$. Increasing $\lambda$ favors stability and decreasing it favors plasticity.}
\label{fig: concept}
\end{figure} 

\begin{itemize}[leftmargin=*, itemindent=0pt]
\item \textbf{Trajectory-level replay objective.} We formulate generative replay as velocity-field regression along the historical model's generative trajectories, and derive a matching continual update rule for its mean velocity \citep{geng2026mean}, reducing each update to fewer network evaluations.
\item \textbf{Optimality and stability-plasticity control.} We show the population optimum of our trajectory objective is a density-weighted mixture of the historical and incoming probability paths, with the mixture weight controlling the stability-plasticity trade-off, and that the objective recovers a weighted forward Kullback-Leibler (KL) divergence up to time reweighting.
\item \textbf{Gains across scales and domains.} Trajectory replay improves the fit-retention trade-off over baselines on simulations, image benchmarks, and two biological case studies.
\end{itemize}
\section{Background and preliminaries}
\subsection{Continuous-time generative flows}\label{sec: ct_generative_flows}
Let $\mathcal{X} \subseteq \mathbb{R}^d$ denote a continuous data space of dimensionality $d$, and let $q \in \mathcal{P}(\mathcal{X})$ denote the data density, where $\mathcal{P}(\mathcal{X})$ is the space of probability density functions supported on $\mathcal{X}$. Continuous normalizing flows sample from a probability path $\{p_t\}_{t \in [0, 1]}$, where $p_t \in \mathcal{P}(\mathcal{X})$, transitioning from a tractable prior $p_0:=\mathcal{N}(\bm{0}_d, \mathbb{I}_d)$ to a terminal density $p_1$ approximating $q$ \citep{chen2018neural}. Here, $p_t$ is generated by a \textit{marginal velocity} $b_t: \mathcal{X} \to \mathbb{R}^d$, for $t\in[0,1]$, according to the continuity equation $\partial_t p_t = -\nabla \cdot (p_t b_t)$, so that learning the generative process reduces to regressing $b_t$ with a parameterized function $b_t^{\theta}$.\looseness=-1

\textbf{Flow matching.} A prominent framework for learning these paths is flow matching \citep{lipman2023flow}, which optimizes a vector field $b_t^{\theta}$ to approximate $b_t$ across time via the following objective:
\begin{equation}\label{eq: flow_matching_obj}
    \mathcal{L}_{\text{FM}}(\theta) := \int_0^1 \mathbb{E}_{\bm{x} \sim p_t}\left[ \|b_t^{\theta}(\bm{x}) - b_t(\bm{x})\|_2^2 \right] \mathrm{d}t \, .
\end{equation}
As $b_t(\bm{x})$ is intractable, \citet{lipman2023flow} instead minimize its tractable conditional counterpart:
\begin{equation}\label{eq: c_flow_matching_obj}
    \mathcal{L}_{\text{CFM}}(\theta) := \mathbb{E}_{t\sim\mathcal{U}[0,1], \, \bm{x}_1\sim q, \, \bm{x} \sim p_{t|1}(\cdot\mid \bm{x}_1)}\left[ \|b_t^{\theta}(\bm{x}) - b_{t|1}(\bm{x} \mid \bm{x}_1)\|_2^2 \right] \, ,
\end{equation}
where $p_{t|1}(\cdot \mid \bm{x}_1):= \mathcal{N}(\alpha_t \bm{x}_1, \sigma_t^2 \mathbb{I}_d)$ is a Gaussian conditional path with scheduling parameters $\alpha_t,\sigma_t$, and $b_{t|1}$ its associated conditional velocity, equal to $\bm{x}_1-\bm{x}_0$ at $\bm{x}=t\bm{x}_1+(1-t)\bm{x}_0$, with $\bm{x}_0\sim p_0$, under the linear schedule $\alpha_t=t$, $\sigma_t=1-t$. Crucially, \cref{eq: flow_matching_obj} and \cref{eq: c_flow_matching_obj} differ only by a term independent of $\theta$, and therefore have identical gradients with respect to $\theta$ (details in \cref{sec: flow_matching_basics}).

\textbf{Mean velocity distillation.} Denote with $\bm{x}_t$ the integral solution of an ODE with marginal velocity $b_t$:\looseness=-1
\begin{equation}\label{eq: ode_solution}
    \frac{\mathrm{d}}{\mathrm{d}t}\bm{x}_t = b_t(\bm{x}_t), \qquad \bm{x}_t = \bm{x}_0 + \int_0^t b_{\tau}(\bm{x}_{\tau}) \, \mathrm{d}\tau \, ,
\end{equation}
where $\bm{x}_0 \sim p_0$, so $\bm{x}_t \sim p_t$ for all $t \in [0,1]$ (\cref{sec: flow_matching_basics}). As solving \cref{eq: ode_solution} requires many discretization steps and is therefore expensive, recent work learns a \textit{mean velocity} \citep{geng2026mean,geng2026improved}, $V_{s,t}(\bm{x}_s):=\frac{1}{t-s}\int_s^t b_{\tau}(\bm{x}_{\tau})\mathrm{d}\tau$, averaging $b_\tau$ over the interval $[s,t]$ for $0 \le s < t \le 1$ and $\bm{x}_s\sim p_s$. Differentiating this definition with respect to $s$ yields the \textit{MeanFlow Identity}, whose residual
\begin{equation}\label{eq: consistency_residual}
    \mathcal{R}_{s,t}(V,b)(\bm{x}) := V_{s,t}(\bm{x}) - \mathrm{sg}\Big[b_s(\bm{x}) + (t-s)\big(\partial_s V_{s,t}(\bm{x}) + \nabla_{\bm{x}} V_{s,t}(\bm{x})\, b_s(\bm{x})\big)\Big]
\end{equation}
vanishes exactly at the mean velocity of $b$, meaning $\mathcal{R}_{s,t}(V,b)=\bm{0}$ whenever $V=V_{s,t}$ as defined above. Here $\mathrm{sg}[\cdot]$ is the stop-gradient, so differentiating only outside it yields a \textit{semigradient} rather than a true gradient (\cref{sec: semigradient}). Minimizing $\|\mathcal{R}_{s,t}(V^\theta,b)\|_2^2$ thus gives a tractable objective for a parameterized mean field $V^{\theta}_{s,t}$, trained end-to-end or by distillation from a teacher field $b$ (\cref{sec: mean_flows_basics}). $V_{s,t}$ also defines a \textit{flow map} $\bm{X}_{s,t}(\bm{x}_s) := \bm{x}_s + (t-s)V_{s,t}(\bm{x}_s)$, which maps a state at $s$ directly to its corresponding state at $t$, satisfying $p_t = (\bm{X}_{s,t})_\sharp p_s$ \citep{boffi2024flow}. Substituting the learned $V^{\theta}_{s,t}$ for $V_{s,t}$ in $\bm{X}_{s,t}$ therefore allows sampling in a few discrete steps, without simulating \cref{eq: ode_solution}.

\subsection{Continual learning and replay for generative models}\label{sec: cl_and_replay_for_gen}
\textbf{Problem statement.} We consider a CL paradigm where a generative model trained on an initial data distribution is extended to sequentially arriving data streams. We illustrate our setting across a single update step, though our framework extends to multiple iterations. Denote $q\in\mathcal{P}(\mathcal{X})$ as the historical density, the ground-truth distribution of the initial task dataset before sequential updates, and $q'\in\mathcal{P}(\mathcal{X})$ as the incoming data's ground-truth density for the subsequent CL step. The goal is to approximate a \textit{target density} $\tilde{q}\in\mathcal{P}(\mathcal{X})$ that preserves the ability to sample from both, mitigating catastrophic forgetting on the old task while ensuring accurate adaptation to incoming information.

\textbf{Generative replay.} Consider a flow model $\{p_t\}_{t\in[0,1]}$ approximating $q$ at $t=1$ (\cref{sec: ct_generative_flows}). \textit{Naive fine-tuning} on samples from $q'$ leads to forgetting, as the velocity field is retrained to transport noise onto $q'$ alone, overwriting the transport onto $q$ in the regions the two paths share. Generative replay \citep{shin2017continual} mitigates this by augmenting training batches with samples generated from $p_1$ at rate $\lambda \in (0,1)$, modulating the \textit{stability-plasticity} trade-off that governs how far the model may deviate from the historical task. However, generating each replay sample requires simulating the full generative trajectory defined in \cref{eq: ode_solution}, making this augmentation expensive and susceptible to discretization errors.\looseness=-1

\section{Trajectory replay for continual learning}
First, we characterize the terminal distribution a model generates after a replay update. Training on replay-augmented batches with the objective in \cref{eq: c_flow_matching_obj} optimizes a vector field whose updated terminal marginal approximates $\tilde{q}$, the mixture of historical and incoming densities (details in \cref{sec: replay_as_mixture_modeling}):
\begin{equation}\label{eq: mixture}
    \tilde{q}(\bm{x}):=\lambda q(\bm{x})+(1-\lambda)q'(\bm{x}).
\end{equation}
This mixture behavior admits a variational justification: modeling CL updates as a $\lambda$-weighted mixture is equivalent to minimizing a forward KL divergence objective. Specifically, one can show that the $\tilde{q}$ in \cref{eq: mixture} is the unique global minimizer of the following optimization problem:
\begin{equation}\label{eq: kl_optimization_problem}
    \argmin_{\rho\in\mathcal{P}(\mathcal{X})}\ \lambda D_{\mathrm{KL}}(q\parallel\rho) +(1-\lambda)D_{\mathrm{KL}}(q'\parallel\rho) \, .
\end{equation}
The formalization and proof are in \cref{sec: static_kl_as_mixture_minimizer}. Since the forward $D_{\mathrm{KL}}(q\parallel\rho)$ penalizes low mass where $q$ has support, its minimizer is mode-covering, discouraging collapse onto either task and thus retaining history while adapting to incoming data, as desired for continual updates. In this work, we develop a more efficient continuous-time solution for such mixture-based updates at the trajectory level.

\subsection{A continuous-time transport framework for generative CL}\label{sec: ct_framework}
\textbf{Task.} We consider a historical model whose velocity field $b_t$ generates the path $p_t$, and an incoming one whose velocity field $b'_t$ generates $p'_t$ (\cref{sec: cl_and_replay_for_gen}). We aim to find a path $\rho_t^\star$ mixing these two generative trajectories and recovering \cref{eq: mixture} at $t=1$. To this end, we pose an optimization problem over a time-dependent velocity field $v:=\{v_t\}_{t\in[0,1]}$, with $v_t:\mathbb{R}^d\rightarrow\mathbb{R}^d$, tasked with generating $\rho_t^\star$. We write $b:=\{b_t\}_{t\in[0,1]}$, $b':=\{b'_t\}_{t\in[0,1]}$ for the respective families of historical and incoming task velocities.

\textit{Standing assumptions.} Throughout this section, $\{p_t\}_{t\in[0,1]}$ and $\{p'_t\}_{t\in[0,1]}$, with $p_t,p'_t:\mathbb{R}^d\rightarrow\mathbb{R}_{\geq0}$, are marginal paths induced by a shared Gaussian conditional kernel $p_{t|1}$ (\cref{sec: ct_generative_flows}), with scheduling parameters $\alpha_t,\sigma_t$ satisfying $\alpha_t,\sigma_t>0$ and $\dot{\alpha}_t\sigma_t-\alpha_t\dot{\sigma}_t>0$ on $(0,1)$. They are generated through the continuity equation of \cref{sec: ct_generative_flows} by velocity fields $b,b'\in C^0([0,1],(C^1(\mathbb{R}^d))^d)$, starting from the shared prior $p_0=p'_0:=\mathcal{N}(\bm{0}_d,\mathbb{I}_d)$ and ending at the task densities $p_1:=q$ and $p'_1:=q'$. We fix $\lambda\in(0,1)$ and assume $\lambda p_t(\bm{x})+(1-\lambda)p'_t(\bm{x})>0$ for all $t\in[0,1]$ and $\bm{x}\in\mathcal{X}$.

\textbf{Trajectory replay target.} Generative replay preserves the historical task through its terminal samples. We instead propose to preserve it through its velocity field along the whole path: the updated field should agree with $b_t$ where the historical path carries mass, and with $b'_t$ where the incoming path does. Under these assumptions, and taking $v$ in the same regularity class as $b$ and $b'$, we therefore define the \textit{trajectory replay target} $v^\star$ as the minimizer of a two-term regression objective
\begin{equation}\label{eq: mixture_objective}
    \mathcal{J}(v; b, b') := \int_0^1 \Big(\lambda\,\mathbb{E}_{p_t}\big[\|v_t(\bm{x})-b_t(\bm{x})\|_2^2\big] + (1-\lambda)\,\mathbb{E}_{p'_t}\big[\|v_t(\bm{x})-b'_t(\bm{x})\|_2^2\big]\Big)\,\mathrm{d}t \, ,
\end{equation}
so that $v^\star = \argmin_{v}\ \mathcal{J}(v; b, b')$. Each term regresses $v_t$ onto one task velocity under that task's marginal, with $\lambda$ weighing the old and new contributions. The objective admits a unique minimizer.

\begin{restatable}{proposition}{propobjmin}\label{prop: objective_minimizer}
Under the standing assumptions, $\mathcal{J}(\cdot\,;b,b')$ in \cref{eq: mixture_objective} admits
\begin{equation}\label{eq: dynamic_kl_minimizer}
    v_t^{\star}(\bm{x}) := \frac{\lambda\, p_t(\bm{x})\, b_t(\bm{x}) + (1-\lambda)\, p'_t(\bm{x})\, b'_t(\bm{x})}{\lambda\, p_t(\bm{x}) + (1-\lambda)\, p'_t(\bm{x})} \, ,
\end{equation}
as unique minimizer, which generates the mixture path $\rho_t^\star := \lambda p_t+(1-\lambda)p'_t$, satisfying the equation $\partial_t\rho_t^\star+\nabla_{\bm{x}}\cdot(v_t^\star\rho_t^\star)=0$ with $\rho_0^\star=\mathcal{N}(\bm{0}_d,\mathbb{I}_d)$ and $\rho_1^\star=\tilde{q}$ (\cref{eq: mixture}).\looseness=-1
\end{restatable}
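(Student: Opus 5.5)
The argument has two independent parts: a pointwise minimization of $\mathcal{J}$, and a verification that the minimizer transports the mixture path. Both rest only on the linearity of the continuity equation and on the positivity of $\rho_t^\star:=\lambda p_t+(1-\lambda)p'_t$ from the standing assumptions.

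\emph{Minimization.} Fix $t$ and write the two expectations as integrals against Lebesgue measure. The integrand of $\mathcal{J}$ at $(t,\bm{x})$ is then
\[
\lambda p_t(\bm{x})\|v_t(\bm{x})-b_t(\bm{x})\|_2^2+(1-\lambda)p'_t(\bm{x})\|v_t(\bm{x})-b'_t(\bm{x})\|_2^2 .
\]
Expanding the squares and completing the square in $v_t(\bm{x})$ gives the bias--variance decomposition
\[
\mathcal{J}(v)=\int_0^1\!\!\int \rho_t^\star(\bm{x})\,\|v_t(\bm{x})-v_t^\star(\bm{x})\|_2^2\,\mathrm{d}\bm{x}\,\mathrm{d}t + C(b,b'),
\]
where $v^\star_t$ is the density-weighted average in \cref{eq: dynamic_kl_minimizer}. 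The constant is
\[
C(b,b')=\int_0^1\!\!\int \frac{\lambda(1-\lambda)p_tp'_t}{\rho^\star_t}\,\|b_t-b'_t\|_2^2\,\mathrm{d}\bm{x}\,\mathrm{d}t ,
\]
and it does not depend on $v$. Because $\rho_t^\star>0$ everywhere, the first term vanishes exactly when $v_t=v_t^\star$ almost everywhere. Continuity of admissible $v$ then upgrades this to everywhere, which gives uniqueness within the stated regularity class.

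\emph{Admissibility.} I still need $v^\star$ to lie in that class. It is a quotient of $C^1$ functions with a nonvanishing denominator, and the Gaussian-kernel marginals $p_t,p'_t$ are smooth in $\bm{x}$ for $t<1$. This is the step I expect to be the main obstacle. At $t=1$ the marginals $q,q'$ may lack smoothness, and the $C^0$-in-time regularity of $v^\star$ near $t=1$ must be argued separately. I would try to handle this by restricting attention to $t\in[0,1)$ and arguing that the set $\{t=1\}$ has measure zero in the integral. Alternatively, I would simply note that the minimization is pointwise, so uniqueness holds among measurable fields regardless of regularity.

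\emph{Continuity equation.} For the transport claim, the key identity is $v_t^\star\rho_t^\star=\lambda p_tb_t+(1-\lambda)p'_tb'_t$. Using $\partial_t p_t=-\nabla\cdot(p_tb_t)$ and $\partial_t p'_t=-\nabla\cdot(p'_tb'_t)$ together with linearity,
\[
\partial_t\rho_t^\star=-\nabla\cdot\bigl(\lambda p_tb_t+(1-\lambda)p'_tb'_t\bigr)=-\nabla\cdot(\rho_t^\star v_t^\star).
\]
The boundary conditions follow directly: $\rho_0^\star=\lambda p_0+(1-\lambda)p_0=\mathcal{N}(\bm{0}_d,\mathbb{I}_d)$ since the prior is shared, and $\rho_1^\star=\lambda q+(1-\lambda)q'=\tilde q$ by \cref{eq: mixture}.
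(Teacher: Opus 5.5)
Your proposal is correct and follows the paper's proof. The paper also reduces $\mathcal{J}$ to a pointwise strictly convex quadratic, via the first-order condition and the Hessian $2\rho_t^\star\mathbb{I}_d\succ 0$ rather than your completing-the-square identity, which is an equivalent and slightly more explicit route to global minimality and a.e.\ uniqueness; it then sums the two continuity equations using $v_t^\star\rho_t^\star=\lambda p_tb_t+(1-\lambda)p'_tb'_t$. The regularity of $v^\star$ at $t=1$ that you flag is not addressed in the paper either, whose uniqueness claim is only almost everywhere, exactly as in your fallback of pointwise uniqueness among measurable fields.
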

The proof is in \cref{sec: proof_unique_minimizer}. The optimum is a pointwise average of the two task velocities, weighted by their densities and by $\lambda$, generating a path $\rho_t^\star$ that interpolates the two trajectories and reduces to $\tilde{q}$ at $t=1$. Where one task's density dominates, $v^\star_t$ approaches that task's velocity, and it blends the two where their densities are comparable, which near $t=0$ is everywhere, since both start from the same prior. Two insights follow from \cref{prop: objective_minimizer}. (\textbf{I}) $\lambda$ fixes how much mass the updated model retains on the historical task relative to the incoming one. (\textbf{II}) Since both task paths share the Gaussian kernel $p_{t|1}(\bm{x}\mid\bm{x}_1)=\mathcal{N}(\bm{x};\alpha_t\bm{x}_1,\sigma_t^2\mathbb{I}_d)$ by the standing assumptions, $\rho_t^{\star}(\bm{x}) = \int p_{t|1}(\bm{x}\mid\bm{x}_1)\,\tilde{q}(\bm{x}_1)\,\mathrm{d}\bm{x}_1$, so $\rho_t^\star$ is itself a marginal path of the same kernel, over $\tilde{q}$ (see \cref{sec: gaussianity_marginal_mixture_path}).

\textbf{KL-optimality.} The trajectory objective in \cref{eq: mixture_objective} connects back to the static KL problem in \cref{eq: kl_optimization_problem}. Evaluating the latter at the endpoints of the generative paths gives the weighted terminal divergence $\lambda D_{\mathrm{KL}}(p_1\parallel\rho_1)+(1-\lambda)D_{\mathrm{KL}}(p'_1\parallel\rho_1)$, measuring how well a candidate path $\rho_t$ reproduces the two tasks at $t=1$, which admits an exact trajectory-level expression.
\begin{theorem}\label{thm: rho_minim_kl}
Under the standing assumptions, let $\rho_t$ be a candidate marginal path induced by the same kernel, with velocity $v_t$ and shared prior $\rho_0=p_0$, and define $\kappa_t := \alpha_t\big/\big(\sigma_t(\dot{\alpha}_t\sigma_t-\alpha_t\dot{\sigma}_t)\big) > 0$ on $(0,1)$. The weighted terminal divergence then satisfies
\begin{equation}\label{eq: kl_trajectory_equivalence}
\begin{aligned}
    &\lambda D_{\mathrm{KL}}(p_1\parallel\rho_1) + (1-\lambda)D_{\mathrm{KL}}(p'_1\parallel\rho_1) \\
    &\qquad = \int_0^1 \kappa_t \Big(\lambda\,\mathbb{E}_{p_t}\big[\|v_t(\bm{x})-b_t(\bm{x})\|_2^2\big] + (1-\lambda)\,\mathbb{E}_{p'_t}\big[\|v_t(\bm{x})-b'_t(\bm{x})\|_2^2\big]\Big)\mathrm{d}t \, .
\end{aligned}
\end{equation}
\end{theorem}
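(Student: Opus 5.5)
The plan is to prove the identity one task at a time, for a single pair $(p_t,\rho_t)$ generated by the same Gaussian kernel. The weighted statement then follows by taking the $\lambda$-combination of the two resulting identities, one for $(p,b)$ and one for $(p',b')$, since both sides of \cref{eq: kl_trajectory_equivalence} are linear in the pair of task terms. The argument has three steps: express velocity differences as score differences, rescale the path into a heat flow, and apply the de Bruijn-type identity for the KL divergence along the heat flow.

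\textbf{Step 1: velocities as affine functions of scores.} For any marginal path of the kernel $\mathcal{N}(\alpha_t\bm{x}_1,\sigma_t^2\mathbb{I}_d)$, the marginal velocity is
\[
b_t(\bm{x})=\tfrac{\dot\alpha_t}{\alpha_t}\bm{x}+c_t\nabla\log p_t(\bm{x}), \qquad c_t:=\tfrac{\sigma_t(\dot\alpha_t\sigma_t-\alpha_t\dot\sigma_t)}{\alpha_t}>0 .
\]
This follows from Tweedie's formula for $\mathbb{E}[\bm{x}_1\mid\bm{x}_t=\bm{x}]$ together with the conditional velocity $\dot\sigma_t/\sigma_t\,(\bm{x}-\alpha_t\bm{x}_1)+\dot\alpha_t\bm{x}_1$. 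The same formula holds for $v_t$ with $\rho_t$ in place of $p_t$, because $\rho_t$ is by hypothesis a marginal path of the same kernel (as in insight (\textbf{II})). The linear drift therefore cancels, and
\[
\|v_t-b_t\|_2^2=c_t^2\,\|\nabla\log\rho_t-\nabla\log p_t\|_2^2 .
\]

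\textbf{Step 2: reduce to a heat flow.} Rescale $\bm{y}=\bm{x}/\alpha_t$. Then $p_t$ and $\rho_t$ become the laws of $\bm{x}_1+\sqrt{s}\,\bm{z}$ with $s(t)=\sigma_t^2/\alpha_t^2$, where $\bm{x}_1$ is drawn from $p_1$ and $\rho_1$ respectively. These laws solve the heat equation $\partial_s\mu=\tfrac12\Delta\mu$, and KL is invariant under this bijective rescaling. By the standing assumption $\dot\alpha_t\sigma_t-\alpha_t\dot\sigma_t>0$, the variable $s$ is strictly decreasing, from $+\infty$ near $t=0$ to $0$ at $t=1$. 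For two heat flows, differentiating under the integral and integrating by parts gives the de Bruijn-type identity
\[
\tfrac{\mathrm{d}}{\mathrm{d}s}D_{\mathrm{KL}}(\mu_s\parallel\nu_s)=-\tfrac12\,\mathbb{E}_{\mu_s}\|\nabla\log\mu_s-\nabla\log\nu_s\|^2 .
\]
Integrating from $s=0$ to $\infty$, and using that the divergence vanishes at the shared prior $\rho_0=p_0$, yields $D_{\mathrm{KL}}(p_1\parallel\rho_1)=\tfrac12\int_0^\infty\mathbb{E}\|\Delta\text{score}_y\|^2\,\mathrm{d}s$.

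\textbf{Step 3: change variables back to $t$.} Under the change of variables, $\nabla_{\bm{y}}=\alpha_t\nabla_{\bm{x}}$ and $|\mathrm{d}s|=2\sigma_t(\dot\alpha_t\sigma_t-\alpha_t\dot\sigma_t)/\alpha_t^3\,\mathrm{d}t$. Substituting these, the integrand becomes
\[
\tfrac{\sigma_t(\dot\alpha_t\sigma_t-\alpha_t\dot\sigma_t)}{\alpha_t}\,\mathbb{E}_{p_t}\|\nabla\log\rho_t-\nabla\log p_t\|^2 .
\]
Dividing by $c_t^2$ from Step 1 turns this into $\kappa_t\,\mathbb{E}_{p_t}\|v_t-b_t\|^2$ with exactly $\kappa_t=\alpha_t/(\sigma_t(\dot\alpha_t\sigma_t-\alpha_t\dot\sigma_t))$, as stated.

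\textbf{Main obstacle.} The main difficulty is rigor at the endpoints and in the integration by parts. Three things need justification:
\begin{itemize}
\item The boundary terms in the de Bruijn identity must vanish. I would use Gaussian tails of the convolved densities for $s>0$, together with positivity of the densities.
\item We need $\lim_{s\to\infty}D_{\mathrm{KL}}(\mu_s\parallel\nu_s)=0$, which is the $t\to0$ endpoint. Here I would use the hypothesis $\rho_0=p_0$, or else lower semicontinuity and convexity bounds via data processing.
\item The $s\to0$ limit must recover $D_{\mathrm{KL}}(p_1\parallel\rho_1)$. I would handle this by monotone convergence, since the integrand is nonnegative, combined with lower semicontinuity of KL and the data-processing upper bound $D_{\mathrm{KL}}(\mu_s\parallel\nu_s)\le D_{\mathrm{KL}}(\mu_0\parallel\nu_0)$.
\end{itemize}
I would first establish the identity on $[\epsilon,1-\epsilon]$, where all densities are smooth and positive, and then pass to the limit $\epsilon\to0$.
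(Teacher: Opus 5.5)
Your argument is correct, and it takes a genuinely different route from the paper's.

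The paper stays in the original time variable. It imports from \citet{albergo2025stochastic} the identity $\frac{\mathrm{d}}{\mathrm{d}t}D_{\mathrm{KL}}(p_t\parallel\rho_t)=\mathbb{E}_{p_t}\big[(\nabla\log p_t-\nabla\log\rho_t)\cdot(b_t-v_t)\big]$, which holds for any two densities transported by continuity equations. It then uses the shared-kernel relation $\nabla\log p_t-\nabla\log\rho_t=\kappa_t(b_t-v_t)$, where the linear drift cancels, to turn the inner product into $\kappa_t\|b_t-v_t\|_2^2$.

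You instead remove the drift by rescaling to the heat semigroup in $s=\sigma_t^2/\alpha_t^2$, apply the classical de Bruijn identity there, and pull back through $|\mathrm{d}s/\mathrm{d}t|=2\sigma_t(\dot\alpha_t\sigma_t-\alpha_t\dot\sigma_t)/\alpha_t^3$ and $\nabla_{\bm{y}}=\alpha_t\nabla_{\bm{x}}$. Your bookkeeping is right. Both routes meet at the same intermediate quantity, $c_t\,\mathbb{E}_{p_t}\|\nabla\log p_t-\nabla\log\rho_t\|_2^2$ with $c_t=1/\kappa_t$.

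The paper's version is shorter and needs no change of variables. However, it rests on an external lemma and takes continuity of $t\mapsto D_{\mathrm{KL}}(p_t\parallel\rho_t)$ at both endpoints for granted. Yours relies only on a textbook identity. It also shows that the condition $\dot\alpha_t\sigma_t-\alpha_t\dot\sigma_t>0$ is exactly the monotonicity of $s$ that makes the reparametrization a bijection. Finally, it puts the endpoint limits in a form where standard tools apply.

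Your $s\to 0$ argument via data processing and lower semicontinuity is fine as written. One correction for the $s\to\infty$ limit. Lower semicontinuity only gives the trivial lower bound there. The hypothesis $\rho_0=p_0$ gives vanishing divergence at the limit point, but not continuity into it. What you need is an upper bound. Joint convexity of KL supplies one: $D_{\mathrm{KL}}(\mu_s\parallel\nu_s)\le W_2^2(p_1,\rho_1)/(2s)$. This costs a finite-second-moment assumption that neither you nor the paper states.
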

The proof is in \cref{sec: proof_rho_minim_kl} and follows \citet{albergo2025stochastic}. The right-hand side of \cref{eq: kl_trajectory_equivalence} is the trajectory objective of \cref{eq: mixture_objective} with each time weighted by $\kappa_t$. Since $\kappa_t>0$ on $(0,1)$, this reweighting changes how much each time contributes to the objective but not where its pointwise minimizer lies, so the unweighted objective of \cref{eq: mixture_objective} and its $\kappa_t$-weighted counterpart in \cref{eq: kl_trajectory_equivalence} share the same minimizer $v^\star$. Moreover, the path generated by $v^\star$ is induced by the same Gaussian kernel as the two task paths (\cref{sec: gaussianity_marginal_mixture_path}), so it is an admissible candidate in \cref{thm: rho_minim_kl}. The same construction extends to \textit{stochastic generative models}, where it recovers the same density-weighted mixture (\cref{sec: stochastic_formulation}).

\vspace{-0.1cm}

\textbf{Mean velocity continual update.} Evaluating \cref{eq: mixture_objective} requires samples from the marginal paths $p_t$ and $p'_t$, which in general means integrating \cref{eq: ode_solution} under their respective velocity fields. At each continual update, we can therefore additionally learn a mean velocity $V^\star_{s,t}$ alongside the mixture velocity $v^\star_t$, averaging $v^\star$ over $[s,t]$ for $0\le s<t\le1$ and $\bm{x}_s\sim\rho^\star_s$:
\begin{equation}\label{eq: mixture_mean_velocity}
    V_{s,t}^{\star}(\bm{x}_s) := \frac{1}{t-s}\int_s^t v^{\star}_{\tau}(\bm{x}^\star_{\tau})\,\mathrm{d}\tau \qquad \text{with} \qquad \bm{x}^\star_s = \bm{x}_s \, ,
\end{equation}
where $\bm{x}^\star_\tau$ solves the ODE in \cref{eq: ode_solution} with velocity $v^\star$ from \cref{prop: objective_minimizer}, started from $\bm{x}_s$ at time $s$. The role of $V^\star_{s,t}$ appears at the next sequential update, where the mixture path $\rho^\star_t$ becomes the historical path and must be sampled at intermediate times to evaluate the historical term of \cref{eq: mixture_objective}. Instead of integrating $v^\star$ from $t=0$ for each such sample, the flow map $\bm{X}^\star_{s,t}$ induced by $V^\star_{s,t}$ jumps directly between times, reaching the required states in a few network evaluations (\cref{sec: theory_in_practice}). Writing $V:=\{V_{s,t}\}_{0\le s\le t\le 1}$ for a candidate mean velocity, the counterpart of \cref{eq: mixture_objective} at the level of mean velocities weights the consistency residuals of \cref{eq: consistency_residual} along the two task paths:
\begin{equation}\label{eq: mean_flow_mixture_objective}
    \mathcal{K}(V; b, b') := \int_0^1\!\!\int_s^1 \Big(\lambda\,\mathbb{E}_{p_s}\big[\|\mathcal{R}_{s,t}(V,b)(\bm{x})\|_2^2\big] + (1-\lambda)\,\mathbb{E}_{p'_s}\big[\|\mathcal{R}_{s,t}(V,b')(\bm{x})\|_2^2\big]\Big)\,\mathrm{d}t\,\mathrm{d}s \, .
\end{equation}

\begin{proposition}\label{prop: mean_flow_minimizer}
Under the standing assumptions, let $V$ be continuously differentiable in $s$ and $\bm{x}$ on $0\le s\le t\le 1$. Then the semigradient of $\mathcal{K}(\cdot\,;b,b')$ vanishes if and only if $V$ satisfies the MeanFlow identity for $v^\star$, that is $\mathcal{R}_{s,t}(V,v^\star)=\bm{0}$ for all $0\le s<t\le 1$, with $v^\star$ from \cref{eq: dynamic_kl_minimizer}, whose unique solution is the mixture mean velocity $V^\star_{s,t}$ of \cref{eq: mixture_mean_velocity}.
\end{proposition}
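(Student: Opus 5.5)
The plan is to compute the semigradient of $\mathcal{K}$ explicitly, exploiting the stop-gradient structure of $\mathcal{R}_{s,t}$. Write the residual as $\mathcal{R}_{s,t}(V,b)=V_{s,t}-\mathrm{sg}[T_{s,t}(V,b)]$, where
\[
T_{s,t}(V,b):=b_s+(t-s)\big(\partial_sV_{s,t}+\nabla_{\bm{x}}V_{s,t}\,b_s\big).
\]
The semigradient with respect to $V$ then treats $T$ as fixed. Perturbing $V\mapsto V+\epsilon\eta$ and differentiating at $\epsilon=0$ outside the stop-gradient gives a first variation of the form
\[
2\int_0^1\!\!\int_s^1\int \Big\langle \eta_{s,t}(\bm{x}),\,\lambda p_s(\bm{x})\big(V_{s,t}-T_{s,t}(V,b)\big)+(1-\lambda)p'_s(\bm{x})\big(V_{s,t}-T_{s,t}(V,b')\big)\Big\rangle\,\mathrm{d}\bm{x}\,\mathrm{d}t\,\mathrm{d}s .
\]
The first step is to argue that this vanishes for all test perturbations $\eta$ if and only if the bracket vanishes pointwise for all $0\le s<t\le1$ and all $\bm{x}$. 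Continuity of $V$, $b$, $b'$, $p$ and $p'$ upgrades the almost-everywhere statement to an everywhere statement.

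The second step is algebraic. Since $T$ is affine in the velocity argument $b_s$, the pointwise condition reads
\[
\rho^\star_s V_{s,t}=\rho^\star_s\,\partial\text{-terms}+\Big(\lambda p_s b_s+(1-\lambda)p'_s b'_s\Big)+(t-s)\nabla_{\bm{x}}V_{s,t}\Big(\lambda p_s b_s+(1-\lambda)p'_s b'_s\Big),
\]
where $\rho^\star_s=\lambda p_s+(1-\lambda)p'_s$ and the $\partial$-terms are $(t-s)\partial_sV_{s,t}$. Dividing by $\rho^\star_s$, which is positive by the standing assumption, and using \cref{prop: objective_minimizer} to recognize $(\lambda p_sb_s+(1-\lambda)p'_sb'_s)/\rho^\star_s=v^\star_s$, the condition becomes exactly $V_{s,t}=v^\star_s+(t-s)(\partial_sV_{s,t}+\nabla_{\bm{x}}V_{s,t}\,v^\star_s)$. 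That is, $\mathcal{R}_{s,t}(V,v^\star)=\bm{0}$. Each step is reversible, which gives the ``if and only if''. The crucial point is linearity in the teacher velocity; without the stop-gradient, extra terms from differentiating through $T$ would appear.

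The final and most delicate step is uniqueness of solutions to the MeanFlow identity. Set $u(s):=(t-s)V_{s,t}(\bm{x}^\star_s)$ along any trajectory $\bm{x}^\star_\tau$ of $v^\star$, which exists and is unique because $v^\star$ is $C^1$ in $\bm{x}$ (a quotient of $C^1$ functions with positive denominator). Then
\[
\frac{\mathrm{d}}{\mathrm{d}s}u(s)=-V_{s,t}+(t-s)\big(\partial_sV_{s,t}+\nabla_{\bm{x}}V_{s,t}\,v^\star_s\big)=-v^\star_s(\bm{x}^\star_s),
\]
using the identity. Moreover $u(t)=0$ by boundedness of $V$ near the diagonal. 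Integrating from $s$ to $t$ yields $(t-s)V_{s,t}(\bm{x}_s)=\int_s^t v^\star_\tau(\bm{x}^\star_\tau)\,\mathrm{d}\tau$, which is \cref{eq: mixture_mean_velocity}. Since every point $\bm{x}_s$ lies on such a trajectory, this determines $V$ uniquely. Conversely, $V^\star$ satisfies the identity by differentiating its definition. I expect the main care to be needed in justifying the boundary condition at $s=t$ and the regularity required to apply the chain rule, which is covered by the assumed $C^1$ hypothesis on $V$.
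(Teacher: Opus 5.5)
Your proposal is correct and follows essentially the same route as the paper: the stop-gradient makes each residual affine in the teacher velocity, so the density-weighted pointwise stationarity condition reduces to $\mathcal{R}_{s,t}(V,v^\star)=\bm{0}$, and uniqueness follows by integrating $\frac{\mathrm{d}}{\mathrm{d}s}\big[(t-s)V_{s,t}(\bm{x}^\star_s)\big]=-v^\star_s(\bm{x}^\star_s)$ along $v^\star$-trajectories, with the boundary value vanishing at $s=t$. Your extra remarks fill in details the paper leaves implicit: upgrading a.e.\ vanishing to pointwise vanishing by continuity, the existence of $v^\star$-trajectories, and the check that $V^\star$ itself satisfies the identity.
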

The proof is in \cref{sec: proof_mean_flow_minimizer}. \Cref{prop: mean_flow_minimizer} mirrors \cref{prop: objective_minimizer} one level up: regressing a single mean velocity against the two task residuals recovers the mean velocity of the mixture path, with the density weighting emerging rather than imposed. Since \cref{eq: mean_flow_mixture_objective} involves only the instantaneous fields $b_s,b'_s$, the mixture mean velocity can be learned jointly with $v^\star$, without ever forming $v^\star$ explicitly.

\subsection{Trajectory replay in practice}\label{sec: theory_in_practice}
\textbf{Sequential updates.} Our framework extends to a stream of tasks $k\in\{1,\dots,K\}$ with densities $q_k$, where update $k$ targets a mixture between the $k$ tasks accumulated so far and the incoming task $k+1$. At each update we have only a model for the historical tasks, $b_t^{\theta_{\le k}}$, and only data for the incoming one. The historical term therefore regresses onto the previous model, while the incoming term uses the \textit{data-conditioned velocity}, which yields the same gradient as the incoming marginal velocity (\cref{sec: ct_generative_flows}). Each update trains a joint field $v_t^{\theta_{\le k+1}}$, which becomes $b_t^{\theta_{\le k+1}}$ at the next update, with the first task trained by flow matching alone (see \cref{alg: trajectory_replay_ode}):
\begin{equation}\label{eq: empirical_mixture_objective}
    \mathcal{J}^{k+1} = \lambda_k\, \mathbb{E}\big[\|v^{\theta_{\le k+1}}_t(\bm{x}_t) - b^{\theta_{\le k}}_t(\bm{x}_t)\|_2^2\big] + (1-\lambda_k)\, \mathbb{E}\big[\|v^{\theta_{\le k+1}}_t(\bm{x}) - b_{t|1}(\bm{x}\mid\bm{x}_1)\|_2^2\big] \, ,
\end{equation}
with $t\sim\mathcal{U}(0,1]$, $\bm{x}_t$ obtained by integrating \cref{eq: ode_solution} from $\bm{x}_0\sim p_0$ under the frozen $b^{\theta_{\le k}}$ only as far as $t$, and $\bm{x}_1\sim q_{k+1}$, $\bm{x}\sim p_{t|1}(\cdot\mid\bm{x}_1)$. In expectation, stopping early costs \textit{half the evaluations} that generative replay needs, plus one for the target at the anchor. It also changes where discretization error lands: the frozen model supplies the velocity at whatever state the solver reaches, so the error only shifts which states the student sees rather than entering the loss as corrupted data (\cref{sec: partial_replay_analysis}).\looseness=-1

\textbf{Efficient sampling along the trajectory.} \Cref{eq: empirical_mixture_objective} still integrates \cref{eq: ode_solution} at every training step, at a cost set by the solver. The \textit{MeanFlow backend} removes that dependence by training, alongside $v^{\theta_{\le k+1}}$, a mean velocity $V^{\theta_{\le k+1}}$ under the empirical counterpart $\mathcal{K}^{k+1}$ of \cref{eq: mean_flow_mixture_objective}. Its two residuals use the same teachers as \cref{eq: empirical_mixture_objective}: the historical residual takes the frozen field $b^{\theta_{\le k}}$ as teacher at states drawn from the historical path, and the incoming residual takes the data-conditioned velocity $b_{s|1}(\bm{x}\mid\bm{x}_1)$ at $\bm{x}\sim p_{s|1}(\cdot\mid\bm{x}_1)$, with $\bm{x}_1\sim q_{k+1}$, which recovers the same update as the incoming marginal velocity (\cref{sec: semigradient}). Each update produces two fields that are passed forward together: $v^{\theta_{\le k+1}}$, which becomes the next update's target, and $V^{\theta_{\le k+1}}$, whose flow map $\bm{X}^{\theta_{\le k+1}}_{s,t}$ becomes the sampler for the next update's historical states (\cref{alg:trajectory_replay_meanflow}). Before the first update, the mean velocity is distilled from the first task's field.\looseness=-1 


\textbf{Choosing $\lambda_k$.} Each value of $\lambda_k$ yields a different minimizer in \cref{eq: dynamic_kl_minimizer}, striking a different balance between fitting the incoming task and forgetting earlier ones, so the best choice \textit{depends on the metric at hand}. Holding $\lambda_k$ constant is rarely appropriate, as task weights then decay geometrically with age rather than balancing (\cref{sec: selection_of_lambda}). We consider two schedules. \textit{Equal-task} weighting sets $\lambda_k=k/(k+1)$, which gives the $k$ accumulated tasks and the incoming one equal mass at every update and requires no tuning. \textit{Adaptive} weighting instead sweeps $\lambda$ at each update and scores every candidate on old-task and new-task validation sets, picking the best point on the retention-adaptation trade-off. \looseness=-1
\section{Experiments}\label{sec: experiments}
\subsection{Controlled mixture recovery}
\label{sec:results-mixture-recovery}

\begin{wrapfigure}[15]{r}{0.5\textwidth}
\vspace{-33pt}
\centering
\includegraphics[width=0.48\textwidth]{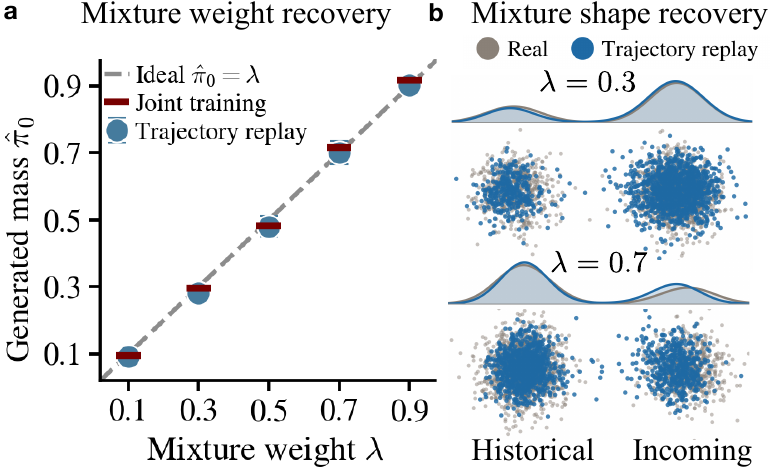}
\caption{\textbf{(a)} Generated historical-mode mass versus mixture weight $\lambda$ for trajectory replay and joint training (mean ± SD, three seeds).
\textbf{(b)} Target and generated samples at $\lambda\in\{0.3,0.7\}$. Historical and incoming modes are on the left and right.}
\label{fig:simulation-mixture}
\end{wrapfigure}
\textbf{Task.} Before testing trajectory replay at scale, we verify in a simulation study that the updated field recovers the terminal task mixture $\tilde{q}=\lambda q+(1-\lambda)q'$ predicted by \cref{prop: objective_minimizer}. A bimodal Gaussian mixture lets us evaluate this recovery directly, since the ground-truth mixture weight and mode geometry are known exactly.

\textbf{Setup.} In $d=2$, a historical flow model pre-trained on $q:=\mathcal{N}(\bm{\mu}_0,\sigma^2\mathbb{I}_2)$ is updated with samples from $q':=\mathcal{N}(\bm{\mu}_1,\sigma^2\mathbb{I}_2)$, where $\bm{\mu}_0:=(-1.5,0)$, $\bm{\mu}_1:=(1.5,0)$, and $\sigma:=0.35$, using $10{,}000$ samples per task and three seeds throughout. Across the grid $\lambda\in\{0.1,0.3,0.5,0.7,0.9\}$, we measure how closely the generated mass, computed by assigning samples to their nearest mode mean, matches $\lambda$, and how well mode shape is preserved, with joint training on the true mixture as an oracle reference. We additionally repeat the update at $d\in\{128,256\}$ with $\lambda=0.5$ to compare the teacher cost of the replay methods (see \cref{app:results-mixture-recovery-exp}).

\textbf{Results.} Trajectory replay recovers the prescribed historical mass across all $\lambda$, matching joint training and the ideal calibration line, and preserves mode location and geometry while shifting relative mass as prescribed (\cref{fig:simulation-mixture}a,b). In higher dimensions, it also reaches a given fit at a lower teacher cost than generative replay, with the gap widening as $d$ grows (\cref{fig: app_fig2}). Naive fine-tuning instead collapses onto the incoming mode, recovering no historical mass (\cref{fig: app_fig1}b).

\subsection{Sequential updates on a ring-structured Gaussian mixture}\label{sec:results-sequential-rings}
\textbf{Task.} We extend to a five-task stream of separated Gaussian modes with means arranged on a ring, testing whether trajectory replay preserves a balanced distribution across tasks without storing historical samples. Here, losses of historical mass or geometry are directly visible in the generated task proportions and in the distance to the uniform five-task target.

\vspace{-0.1cm}
\begin{wrapfigure}[17]{r}{0.5\linewidth}
\vspace{-10pt}
\centering
\includegraphics[width=\linewidth]{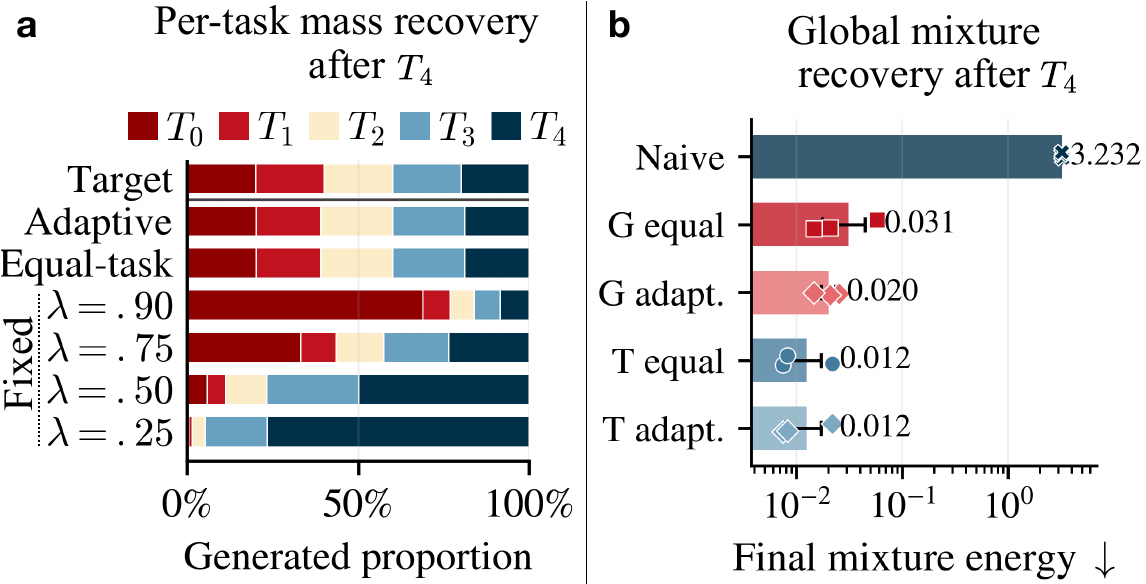}
\caption{\textbf{(a)} Generated task mass after the final update for trajectory replay with fixed-$\lambda$, equal-task, and adaptive policies. \textbf{(b)} Energy distance to the balanced held-out mixture for naive fine-tuning and equal-task/adaptive generative (G) and trajectory (T) replay (mean {\scriptsize$\pm$} SE, three seeds).}
\label{fig: sequential_gaussian_simulation}
\end{wrapfigure}

\textbf{Data \& Evaluation.} We define five Gaussian tasks $T_0,\ldots,T_4$ arriving sequentially in $d=2$, with means equally spaced on a circle of radius $3$ and isotropic standard deviation parameter $\sigma:=0.35$. Each task provides $10{,}000$/$2{,}000$/$5{,}000$ training/validation/test points, and the final target places $20\%$ probability mass on each task (see \cref{fig: app_fig3}a). We estimate per-task mass by labeling generated samples with their nearest mode mean and measure global recovery via energy distance to a balanced joint-task held-out mixture (details in \cref{app:results-sequential-rings-details}).

\vspace{-0.1cm}

\textbf{Baselines.} We compare naive fine-tuning, which updates the model on incoming data alone, with generative and trajectory replay under equal-task ($\lambda_k:=k/(k+1)$) and adaptive schedules (\cref{sec: theory_in_practice}), the latter sweeping $\lambda$ at each update and selecting by validation energy distance to the uniform mixture seen so far. Since that target is uniform, the equal-task rule \textit{is already optimal here} and the adaptive selection should recover it. Trajectory replay is additionally ablated at constant $\lambda$.

\vspace{-0.1cm}

\textbf{Results.} Naive fine-tuning collapses almost entirely onto $T_4$ (\cref{fig: app_fig3}b), whereas equal-task and adaptive trajectory replay recover a near-uniform distribution across all five tasks (\cref{fig: sequential_gaussian_simulation}a). Fixing $\lambda$ trades retention against fit, with high values holding earlier tasks and low values favoring $T_4$. Trajectory replay reaches the lowest energy distance to the balanced held-out mixture under both schedules, ahead of generative replay and with increased stability across tasks (\cref{fig: sequential_gaussian_simulation}b, \cref{fig: app_fig5}). Expectedly, adaptive replay recovers the equal-task rule at every update, so the two tie (\cref{fig: app_fig4}b).

\subsection{Class-incremental image generation}
\label{sec:results-image-cl}
\textbf{Task.} We evaluate class-incremental image generation, where disjoint classes arrive sequentially as CL tasks, following the same protocol as \citet{masip2025continual} (see \cref{sec:exp-image-cl-details} for full details).

\begin{table*}[t]
\centering
\definecolor{trajectoryrow}{HTML}{E5EEF3}
\definecolor{upperboundrow}{HTML}{F0F0F0}
\setlength{\tabcolsep}{4.5pt}
\small
\caption{Final-stream performance after five tasks on Fashion-MNIST and CIFAR-10 and three tasks on ImageNet-100 (mean {\scriptsize$\pm$} SE, three seeds). FID compares generated samples with pooled held-out data, KLD measures class-mass recovery, and MF-$N$ uses $N$ flow-map steps. Blue rows use trajectory replay, gray rows joint training with all data. Bold marks the best non-joint column mean.}
\label{tab:image_cl_all}
\resizebox{1\textwidth}{!}{%
\begin{tabular}{@{}lccccc@{}}
\toprule
Method & \multicolumn{2}{c}{Fashion-MNIST} & \multicolumn{2}{c}{CIFAR-10} & ImageNet-100 \\
\cmidrule(lr){2-3}\cmidrule(lr){4-5}\cmidrule(l){6-6}
& $\downarrow$ FID & $\downarrow$ KLD & $\downarrow$ FID & $\downarrow$ KLD & $\downarrow$ FID \\
\midrule
\rowcolor{upperboundrow} \textbf{Joint training} (all data; upper bound) & $12.5$ {\scriptsize $(\pm\,0.5)$} & $0.040$ {\scriptsize $(\pm\,0.010)$} & $25.9$ {\scriptsize $(\pm\,0.1)$} & $0.100$ {\scriptsize $(\pm\,0.020)$} & $50.7$ {\scriptsize $(\pm\,0.7)$} \\
Diffusion naive fine-tuning & $67.7$ {\scriptsize $(\pm\,6.6)$} & $4.140$ {\scriptsize $(\pm\,1.400)$} & $53.6$ {\scriptsize $(\pm\,4.4)$} & $1.100$ {\scriptsize $(\pm\,0.140)$} & $105.8$ {\scriptsize $(\pm\,3.1)$} \\
Generative replay (diffusion) \citep{masip2025continual} & $184.9$ {\scriptsize $(\pm\,12.6)$} & $3.680$ {\scriptsize $(\pm\,1.320)$} & $84.3$ {\scriptsize $(\pm\,9.1)$} & $0.590$ {\scriptsize $(\pm\,0.110)$} & $106.3$ {\scriptsize $(\pm\,2.5)$} \\
Generative distillation (diffusion) \citep{masip2025continual} & $25.5$ {\scriptsize $(\pm\,1.1)$} & $0.290$ {\scriptsize $(\pm\,0.040)$} & $38.5$ {\scriptsize $(\pm\,4.4)$} & $0.230$ {\scriptsize $(\pm\,0.080)$} & $85.5$ {\scriptsize $(\pm\,3.4)$} \\
\midrule
Flow matching naive fine-tuning & $64.6$ {\scriptsize $(\pm\,11.4)$} & $5.427$ {\scriptsize $(\pm\,2.446)$} & $51.2$ {\scriptsize $(\pm\,4.0)$} & $1.189$ {\scriptsize $(\pm\,0.172)$} & $103.4$ {\scriptsize $(\pm\,2.6)$} \\
\midrule
Generative replay (ODE, adaptive $\lambda$) & $35.0$ {\scriptsize $(\pm\,2.4)$} & $0.493$ {\scriptsize $(\pm\,0.131)$} & $43.3$ {\scriptsize $(\pm\,2.3)$} & $0.501$ {\scriptsize $(\pm\,0.046)$} & $88.4$ {\scriptsize $(\pm\,1.3)$} \\
Generative replay (ODE, equal-task $\lambda$) & $63.9$ {\scriptsize $(\pm\,3.5)$} & $0.526$ {\scriptsize $(\pm\,0.161)$} & $92.4$ {\scriptsize $(\pm\,15.0)$} & $0.737$ {\scriptsize $(\pm\,0.203)$} & $91.3$ {\scriptsize $(\pm\,4.5)$} \\
Generative replay (MF-1, adaptive $\lambda$) & $52.2$ {\scriptsize $(\pm\,2.7)$} & $0.748$ {\scriptsize $(\pm\,0.280)$} & $55.8$ {\scriptsize $(\pm\,7.7)$} & $0.531$ {\scriptsize $(\pm\,0.081)$} & $151.0$ {\scriptsize $(\pm\,25.7)$} \\
Generative replay (MF-1, equal-task $\lambda$) & $88.9$ {\scriptsize $(\pm\,7.1)$} & $0.277$ {\scriptsize $(\pm\,0.030)$} & $180.2$ {\scriptsize $(\pm\,37.4)$} & $0.700$ {\scriptsize $(\pm\,0.254)$} & $129.0$ {\scriptsize $(\pm\,6.4)$} \\
Generative replay (MF-4, adaptive $\lambda$) & $37.8$ {\scriptsize $(\pm\,3.4)$} & $0.504$ {\scriptsize $(\pm\,0.160)$} & $45.2$ {\scriptsize $(\pm\,1.4)$} & $0.508$ {\scriptsize $(\pm\,0.044)$} & $140.6$ {\scriptsize $(\pm\,31.5)$} \\
Generative replay (MF-4, equal-task $\lambda$) & $53.8$ {\scriptsize $(\pm\,4.3)$} & $0.202$ {\scriptsize $(\pm\,0.029)$} & $113.1$ {\scriptsize $(\pm\,14.4)$} & $0.265$ {\scriptsize $(\pm\,0.066)$} & $117.7$ {\scriptsize $(\pm\,4.1)$} \\
\midrule
\rowcolor{trajectoryrow} Trajectory replay (ODE, adaptive $\lambda$) & $21.1$ {\scriptsize $(\pm\,3.6)$} & $0.190$ {\scriptsize $(\pm\,0.090)$} & $\mathbf{31.4}$ {\scriptsize $(\pm\,0.9)$} & $0.272$ {\scriptsize $(\pm\,0.055)$} & $\mathbf{70.0}$ {\scriptsize $(\pm\,5.8)$} \\
\rowcolor{trajectoryrow} Trajectory replay (ODE, equal-task $\lambda$) & $22.1$ {\scriptsize $(\pm\,3.2)$} & $0.068$ {\scriptsize $(\pm\,0.039)$} & $38.6$ {\scriptsize $(\pm\,3.5)$} & $0.237$ {\scriptsize $(\pm\,0.066)$} & $70.4$ {\scriptsize $(\pm\,9.2)$} \\
\rowcolor{trajectoryrow} Trajectory replay (MF-1, adaptive $\lambda$) & $20.2$ {\scriptsize $(\pm\,2.2)$} & $0.140$ {\scriptsize $(\pm\,0.008)$} & $33.3$ {\scriptsize $(\pm\,0.5)$} & $0.332$ {\scriptsize $(\pm\,0.066)$} & $70.7$ {\scriptsize $(\pm\,5.3)$} \\
\rowcolor{trajectoryrow} Trajectory replay (MF-1, equal-task $\lambda$) & $24.2$ {\scriptsize $(\pm\,0.8)$} & $\mathbf{0.065}$ {\scriptsize $(\pm\,0.027)$} & $36.8$ {\scriptsize $(\pm\,1.2)$} & $0.237$ {\scriptsize $(\pm\,0.089)$} & $83.0$ {\scriptsize $(\pm\,9.3)$} \\
\rowcolor{trajectoryrow} Trajectory replay (MF-4, adaptive $\lambda$) & $18.4$ {\scriptsize $(\pm\,1.5)$} & $0.120$ {\scriptsize $(\pm\,0.023)$} & $33.5$ {\scriptsize $(\pm\,1.7)$} & $0.349$ {\scriptsize $(\pm\,0.056)$} & $71.1$ {\scriptsize $(\pm\,2.7)$} \\
\rowcolor{trajectoryrow} Trajectory replay (MF-4, equal-task $\lambda$) & $\mathbf{18.0}$ {\scriptsize $(\pm\,1.7)$} & $0.090$ {\scriptsize $(\pm\,0.030)$} & $35.1$ {\scriptsize $(\pm\,3.1)$} & $\mathbf{0.211}$ {\scriptsize $(\pm\,0.023)$} & $75.3$ {\scriptsize $(\pm\,6.8)$} \\
\bottomrule
\end{tabular}%
}
\end{table*}

\vspace{-0.1cm}

\textbf{Data \& Evaluation.} We use five-task streams on Fashion-MNIST (FMNIST) \citep{xiao2017fashion} and CIFAR-10 \citep{krizhevsky2009learning} with two disjoint classes per task, plus a three-task ImageNet-100 stream split $34/33/33$ \citep{deng2009imagenet}. FMNIST and CIFAR-10 are modeled unconditionally in pixel space. To ensure quality, ImageNet-100 is modeled class-conditionally in frozen Stable-Diffusion \citep{rombach2022high} latent space and decoded before evaluation. After the final update, we measure fit and retention jointly as Fréchet Inception Distance (FID) from $10{,}000$ generated samples against pooled held-out images from all seen classes. For FMNIST and CIFAR-10, a fixed classifier estimates generated class proportions, reported as $\mathrm{KL}(\pi_{\mathrm{data}}\Vert \pi_{\mathrm{gen}})$ (KLD), which scores retention alone. This is omitted for ImageNet-100, as class proportions are prescribed at sampling.

\vspace{-0.1cm}

\textbf{Baselines.} Joint training on all tasks gives an upper bound, and naive sequential fine-tuning measures forgetting. We run fine-tuning separately for diffusion and flow matching, ruling out the generative objective as the source of any difference. We compare against diffusion-based generative replay and generative distillation \citep{masip2025continual}, which regresses the frozen teacher's noise prediction on renoised replay samples. We reproduce both under our protocol for ImageNet-100, since they were not evaluated there originally. For flow-based CL, generative and trajectory replay are each compared using numerical integration of the teacher flow or a MeanFlow approximation (\cref{sec: theory_in_practice}) with one or four flow-map steps, under both equal-task and adaptive weighting (selecting $\lambda$ by minimizing validation FID across seen tasks). In \cref{tab:image_cl_lora_gft}, we add LoRA and gradual fine-tuning as adapter and regularization baselines.\looseness=-1

\vspace{-0.1cm}

\textbf{Results.} Trajectory replay is the strongest method family, achieving the best non-joint FID on all three datasets and the best KLD on both unconditional benchmarks, outperforming generative replay and diffusion-based baselines (\cref{tab:image_cl_all}). For trajectory replay, adaptive weighting generally yields the lowest FID, whereas equal-task weighting remains competitive and consistently improves class-mass recovery. On FMNIST and CIFAR-10, the weights selected for trajectory replay track the equal-task rule, while those for generative replay settle below it, consistent with the criterion discounting its lower-fidelity replayed samples (\cref{fig: app_fig8}). Trajectory replay also remains robust when ODE integration is replaced by one or four flow-map steps, while generative replay often degrades under the same approximation (\cref{fig: app_fig9}). The runtime advantage of MeanFlow distillation emerges only as teacher steps grow (\cref{fig: app_fig14}). Trajectory replay is cheaper end-to-end too, by $1.22\times$ to $1.65\times$, with the margin growing with generation cost (\cref{fig: app_fig24}).\looseness=-1

\subsection{Application: perturbation modeling in single-cell transcriptomics}
\label{sec:results-transcriptomics}

\textbf{Task.} We apply trajectory replay to perturbation modeling, where a generative model of cellular responses to treatment is updated as screens arrive in new cellular contexts. In each screen, data arrive as tuples $(\bm{x}, y, c)$ of a high-dimensional gene expression vector $\bm{x}$, a perturbation label $y$ (a molecule or a gene suppression), and a context $c$ (a cell line or a donor), and a generative model approximates the treatment response via $p(\bm{x} \mid c,y)$ \citep{palla2026scalable}. The initial task trains on a subset of contexts, and updates bring the same perturbations to novel ones. Success requires (\textbf{I}) preserving perturbation modeling on earlier contexts, (\textbf{II}) fitting the incoming ones, and (\textbf{III}) generalizing to unseen context-perturbation combinations from the incoming data.

\begin{figure*}[t]
    \centering    
    \includegraphics[width=0.90\textwidth]{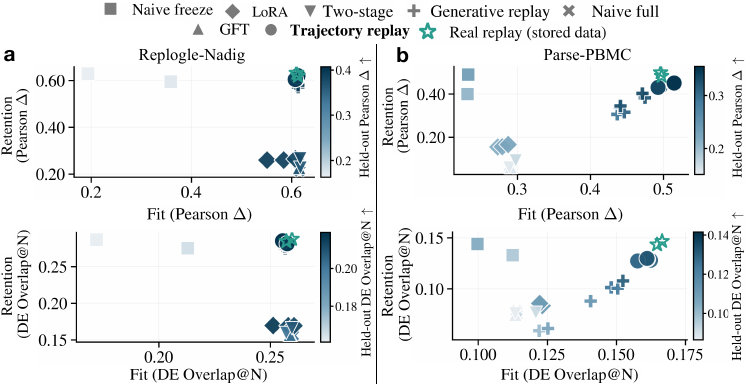}
    \caption{Final-checkpoint fit versus retention on (\textbf{a}) Replogle-Nadig and (\textbf{b}) Parse-PBMC, for Pearson $\Delta$ (top) and DE Overlap@$N$ (bottom). Marker shape denotes the method family and each point one configuration within it, as listed in \cref{tab:replogle_cl_full,tab:pbmc_cl_full}. Color denotes held-out performance.}
    \label{fig: tx_figure}
\end{figure*}

\textbf{Data.} We use two datasets, each organized by context into a task stream (\cref{app: perturbation_tx_details}): Replogle-Nadig \citep{Replogle2022, Nadig2025}, CRISPRi knockdowns of $2{,}023$ essential genes, split by cell line into two tasks ($T_0$: Jurkat/K562/RPE1, $T_1$: HepG2), and Parse-PBMC \citep{Oesinghaus2025.12.12.693897}, $90$ cytokine stimulations of peripheral blood mononuclear cells, split by donor into five ($T_0$: eight donors, $T_1$--$T_4$: D1, D4, D9, D12). Models operate on principal component analysis (PCA) cell states fitted on $T_0$ ($50$ and $100$ components, respectively), mapped back to gene space by frozen per-task affine decoders that invert the PCA and recalibrate each gene's scale for that task.

\textbf{Evaluation.} We evaluate per stratum, a single context-perturbation combination. At the final checkpoint, we report retention on previously learned strata, fit on newly introduced ones, and generalization on held-out perturbations, following the few-shot split of \citet{Adduri2026}, where the held-out set covers $60\%$ of the incoming context's perturbations, all seen in every other context, and is never used for training or $\lambda$ selection. We score $500$ generated cells per stratum with two metrics: macro-averaged \textit{Pearson $\Delta$} (correlation between predicted and observed control-subtracted effects) and \textit{DE Overlap@$N$} (recovery of the $N$ observed significant differentially expressed genes relative to control among the top-$N$ predicted, $N$ set per perturbation).

\textbf{Baselines.} We compare naive fine-tuning, full or with the velocity field or embeddings frozen in combination, gradual fine-tuning (GFT) \citep{thorkelsdottir2026gradual}, LoRA \citep{hu2022lora}, two-stage adaptation, which first fits the new context embedding alone and then unfreezes the full model, and generative and trajectory replay. Real replay stores past tasks as a privileged reference.

\textbf{Results.} We jointly examine fit, retention, and held-out performance (\cref{fig: tx_figure}; full tables in \cref{tab:replogle_cl_full,tab:pbmc_cl_full}). Replay is the only family reaching both axes: freezing retains without fitting, while LoRA, GFT, two-stage, and full fine-tuning either only fit or neither. Freezing can still top the retention column because the historical task lives in the velocity field rather than in the conditioning embeddings, so a frozen field preserves it by refusing to adapt (\cref{tab:replogle_cl_full,tab:pbmc_cl_full}). Trajectory replay edges ahead of generative replay on the single-update Replogle-Nadig stream and clearly surpasses it across the five sequential tasks of Parse-PBMC, approaching real replay without storing cells. Point color confirms that held-out performance follows the upper-right frontier rather than either axis alone, since an unseen combination needs both the perturbation response carried from old contexts and the basal structure of the new one. As in the image streams, replacing ODE integration with one or four flow-map steps leaves trajectory replay at parity while generative replay degrades (\cref{fig: app_fig17}).

\subsection{Application: Continual phenotypic modeling in microscopy}
\label{sec:results-microscopy}

\textbf{Task.} We extend \cref{sec:results-transcriptomics} to microscopy, where $\bm{x}$ is an image and the model captures the morphological phenotype a perturbation induces. Image-based profiling is a standard readout in drug discovery, but experiments acquired at different times differ in staining and imaging conditions, producing batch effects that complicate comparisons across experiments. The RxRx1 dataset \citep{Sypetkowski_2023_CVPR} images U2OS cells under gene knockdowns across experimental batches acquired weeks apart, and we treat each batch as one task. Since the perturbation vocabulary is identical across tasks, the shift is purely technical, so any loss of performance on earlier batches reflects forgetting of their acquisition conditions.\looseness=-1

\vspace{-0.1cm}

\textbf{Setup} (details in \cref{app: perturbation_px_details}). We stream three U2OS experiments (01 $\rightarrow$ 02 $\rightarrow$ 03) over 1,071 perturbations, conditioning an ablated diffusion model (ADM) U-Net \citep{dhariwal2021diffusion} on perturbation and experiment identity, with six-channel $96\times96$ image crops modeled natively. Baselines follow \cref{sec:results-image-cl}, adding real replay as a privileged reference that stores past images. The MeanFlow backends are run for trajectory replay under equal-task weighting only, since the Jacobian-vector product in the MeanFlow residual becomes costly at this resolution. With only a few dozen test images per perturbation, Fréchet distances cannot be estimated per condition, so we generate one image per test condition, matching the perturbation composition of generated and reference sets, and report FID against pooled test images of all seen experiments after each update \citep{zhang2025cellflux}. We also report a Fréchet distance in DINOv2 \citep{oquab2024dinov} features (FDD), with full results in \cref{app:results-microscopy} and \cref{tab:px_cl_final}.

\vspace{-0.1cm}

\begin{wrapfigure}[22]{r}{0.60\linewidth}
\vspace{-13pt}
\centering
\includegraphics[width=\linewidth]{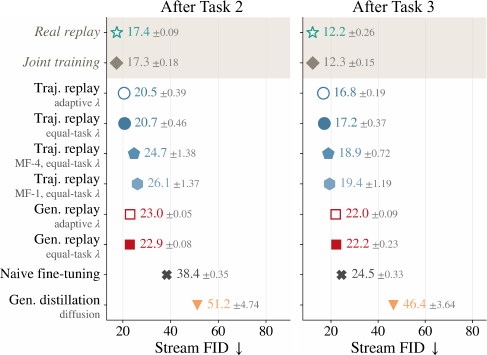}
\caption{FID after the first update, once U2OS-02 has been learned (After Task 2), and after the second, once all three experiments have been learned (After Task 3), computed on features pooled over all seen experiments (mean $\pm$ SE, three seeds). Gray rows are upper bounds on the performance.}
\label{fig: px_figure}
\end{wrapfigure}

\textbf{Results} (\cref{fig: px_figure}). Trajectory replay with ODE integration improves over generative replay at each update, reaching $17.2$ against $22.2$ pooled FID at the final checkpoint ($16.8$ with adaptive weighting), within a few points of real replay and joint training without storing images. The gain is concentrated on the oldest batch, where generative replay degrades to about $71$ FID while trajectory replay stays near $40$ (\cref{fig: app_fig21}), with the same ordering in DINOv2 features. Generated crops also reproduce the staining and morphology of their own batch (\cref{fig: app_fig23}). The MeanFlow variants rank just behind the ODE backend on the final stream, still ahead of generative replay. Generative diffusion distillation \citep{masip2025continual}, the strongest published no-storage baseline on natural-image streams, is the weakest replay method here, worse than naive fine-tuning.
\section{Related works}
\textbf{Lifelong learning for generative models.} CL methods for generative models fall into three broad families: architectural updates, which allocate task-specific parameters \citep{smith2024continual}, regularization, which constrains how far parameters may drift from those learned on earlier tasks \citep{dong2024how, kirkpatrick2017overcoming}, and replay, which rehearses past tasks during each update \citep{Sun2024create, wang2024comprehensive, guo2025continual}. Replay has been used to curb forgetting both in sequential classification \citep{shin2017continual, wu2018memory, ye2020learning} and in continuous-time generative frameworks \citep{gao2023ddgr, skier2025joint, liao2025continual}. It has also been moved from the input space to intermediate representations, where generating features is easier than generating pixels \citep{liu2020generative}, and we discuss the relation of this line of work to our setting in \cref{sec: relation_feature_replay}. As in our setting, several works treat generation not only as a source of replay samples but as the task to be preserved itself \citep{simone2025continualflow, zajkac2023exploring, liu2026ccd}. Closest to our work is \citet{masip2025continual}, who use a diffusion model trained on historical data as a teacher for a sequential noise predictor, supervised on noise-augmented replay samples. Their replay still generates complete samples before renoising them, whereas trajectory replay stops the historical trajectory at an intermediate time and never forms a finished sample.
\vspace{-0.1cm}

\textbf{Flow matching and density mixtures.} Trajectory replay builds on flow matching \citep{lipman2023flow, tong2024improving} and on its distillation into few-step samplers \citep{geng2026mean, boffi2024flow}. It also relates to work on controlling likelihoods in continuous time \citep{albergo2025stochastic, su2025flow, romer2026uncertainty} and on sampling from mixtures of generative models \citep{liu2022rectifiedflowmarginalpreserving}, both of which have mostly been studied as inference-time control over frozen models \citep{skreta2025the}, whereas we use these ideas to train a model across sequential updates. To our knowledge, mixtures of probability paths have not previously been used to formulate continual updates of continuous-time generative models. Most related within this line is \citet{thorkelsdottir2026gradual}, who anneal the historical model toward the incoming task, while trajectory replay preserves old tasks while integrating new ones.\looseness=-1
\section{Discussion}
\textbf{Conclusion.} We introduced trajectory replay, which replaces sample-level generative replay with velocity-field regression along the historical model's own trajectories. Its optimum is the density-weighted average of the two task velocities, generating the mixture of their probability paths, so the stability-plasticity coefficient sets what the update retains rather than only trading it off, and the objective recovers a weighted forward KL divergence up to time reweighting. A matching rule for the mean velocity carries this optimum to the flow-map level. Because replay stops at intermediate times, it halves the expected simulation cost and makes replay more robust to discretization error. Empirically, trajectory replay outperforms replay and distillation baselines on class-incremental image generation, stays robust under few-step replay, and extends perturbation models to incremental experimental measurements.\looseness=-1

\vspace{-0.1cm}

\textbf{Limitations.} Trajectory replay still involves partial simulation under a historical model, and its adaptive version, though favorable on image generation, requires a training run per candidate weight at every update. The mean velocity aids sampling but requires a Jacobian-vector product whose memory cost grows with the state, which restricted its use in our microscopy experiments. Partial replay limits how solver error enters each update, but not the approximation error of the historical model itself, which every update inherits and which may compound over long task sequences, echoing forgetting bounds for continually fine-tuned Large Language Models (LLMs), while our streams span at most five tasks. Our reliance on flow matching also leaves extension to autoregressive models open. Finally, future work will explore CL updates of more efficient distillation models.

\newpage
\section*{Acknowledgments}
The authors acknowledge Giulia Lanzillotta for the fruitful discussions and feedback.

\bibliography{valence_bib}
\bibliographystyle{valence}

\newpage
\appendix

\section{Theoretical integration}\label{sec: theoretical_integration}
\subsection{Theoretical integration on flow matching}\label{sec: flow_matching_basics}
\paragraph{Flows and the continuity equation.} Flow matching \citep{lipman2023flow, tong2024improving} regresses a marginal vector field $\{b_t(\bm{x})\}_{t\in[0,1]}$ that generates a probability path $\{p_t(\bm{x})\}_{t\in[0,1]}$ through the continuity equation: 
\begin{equation}\label{eq: app_continuity}
    \frac{\partial}{\partial t}p_t(\bm{x}) = -\nabla \cdot (p_t(\bm{x})b_t(\bm{x})) \, .
\end{equation}
For sufficiently regular $b_t$, the solution map of \cref{eq: ode_solution} that sends an initial state to its state at time $t$ is a diffeomorphism, called the \textit{flow}, that maps noise vectors to samples from $p_t$. Let $\phi_t: \mathcal{X}\rightarrow \mathcal{X}$, for $t\in[0,1]$, denote this map, with $\phi_0 := \mathrm{id}$ and $\phi_t(\bm{x}_0) := \bm{x}_t$. The mass conservation expressed by \cref{eq: app_continuity} then makes the flow push the prior forward along the path,
\begin{equation}\label{eq: app_pushforward}
    p_t = (\phi_t)_\sharp\, p_0 \, ,
\end{equation}
that is, $\bm{x}_0 \sim p_0$ implies $\bm{x}_t \sim p_t$ for all $t\in[0,1]$. 

\paragraph{Gaussian conditional flows.} It is common practice to use conditional Gaussian paths to evaluate the conditional flow matching objective in \cref{eq: c_flow_matching_obj}. Recall the definition of Gaussian paths:
\begin{equation}\label{eq: app_gaussian_path}
    p_{t|1}(\bm{x} \mid \bm{x}_1) := \mathcal{N}(\bm{x}; \alpha_t \bm{x}_1, \sigma_t^2 \mathbb{I}_d)\, ,
\end{equation}
with $\bm{x}_1\sim q$ and, for the linear schedule, $\alpha_t:=t$ and $\sigma_t:=1-t$, though other variants exist \citep{albergo2025stochastic, tong2024simulation}. Writing $\bm{x}=\alpha_t\bm{x}_1+\sigma_t\bm{x}_0$ with $\bm{x}_0\sim p_0$, differentiating in $t$ gives the associated conditional velocity
\begin{equation}\label{eq: app_conditional_velocity}
    b_{t|1}(\bm{x}\mid\bm{x}_1) := \dot{\alpha}_t\bm{x}_1 + \dot{\sigma}_t\,\frac{\bm{x}-\alpha_t\bm{x}_1}{\sigma_t} \, ,
\end{equation}
which reduces to $\bm{x}_1-\bm{x}_0$ under the linear schedule \citep{tong2024improving}. Conditional flow matching hinges on the fact that, in expectation, regressing $b_{t|1}$ along $p_{t|1}$ recovers the velocity field of the marginal path
\begin{equation}\label{eq: marginal_path}
    p_t(\bm{x}) = \int p_{t|1}(\bm{x}\mid\bm{x}_1)\,q(\bm{x}_1)\, \mathrm{d}\bm{x}_1 \, ,
\end{equation}
which is the posterior expectation of the conditional one \citep{lipman2023flow},
\begin{equation}
    b_t(\bm{x}) = \int b_{t|1}(\bm{x}\mid\bm{x}_1)\,p_{1|t}(\bm{x}_1\mid\bm{x})\,\mathrm{d}\bm{x}_1 \, ,
\end{equation}
where $p_{1|t}(\bm{x}_1\mid\bm{x})$ is the posterior over clean data given the noisy state.

\textbf{Relationship with the density score.} A crucial property of flow matching with Gaussian conditional paths is that the marginal vector field is affine in the marginal probability path's score, with coefficients dependent on the scheduling parameters \citep{zheng2023guided}: 
\begin{equation}\label{eq: score_vf_relationship}
    b_t(\bm{x}) = a_t\,\bm{x} + c_t\, \nabla_{\bm{x}}\log p_t(\bm{x}) \, ,
\end{equation}
with $a_t := \dot{\alpha}_t/\alpha_t$ and $c_t := \frac{\sigma_t}{\alpha_t}\big(\dot{\alpha}_t\sigma_t - \alpha_t\dot{\sigma}_t\big)$. Wherever $\alpha_t,\sigma_t>0$ and $\dot{\alpha}_t\sigma_t-\alpha_t\dot{\sigma}_t\neq0$, so that $c_t\neq0$, the relation is invertible, and learning the marginal velocity field and learning the marginal score are equivalent tasks. We use this equivalence in the proof of \cref{thm: rho_minim_kl}, whose assumptions include that condition on $(0,1)$.

\subsection{Mean flow basics}\label{sec: mean_flows_basics}
\paragraph{From instantaneous to average velocity.} Recall the definition of mean velocity in \cref{sec: ct_generative_flows}: 
\begin{equation}\label{eq: meanflow_app}
    V_{s,t}(\bm{x}_s) := \frac{1}{t-s}\int_s^t b_{\tau}(\bm{x}_{\tau})\mathrm{d}\tau \, , 
\end{equation}
with $\bm{x}_s \sim p_s$ and $0 \le s < t \le 1$. To distinguish our notation from \citet{geng2026mean}, we set noise at $t=0$ and data at $t=1$, so $s$ is the lower limit of integration. Multiplying \cref{eq: meanflow_app} by $(t-s)$ and differentiating both sides with respect to $s$, with $t$ held fixed and along the trajectory through $\bm{x}_s$, yields the MeanFlow identity:
\begin{align}
    \frac{\mathrm{d}}{\mathrm{d}s}\Big[(t-s)\,V_{s,t}(\bm{x}_s)\Big]
    &= \frac{\mathrm{d}}{\mathrm{d}s}\int_s^t b_{\tau}(\bm{x}_{\tau})\,\mathrm{d}\tau \nonumber\\
    -V_{s,t}(\bm{x}_s) + (t-s)\frac{\mathrm{d}}{\mathrm{d}s}V_{s,t}(\bm{x}_s)
    &= -b_s(\bm{x}_s) \nonumber\\
    V_{s,t}(\bm{x}_s) &= b_s(\bm{x}_s) + (t-s)\frac{\mathrm{d}}{\mathrm{d}s}V_{s,t}(\bm{x}_s) \, ,
    \label{eq: app_mean_flow_identity}
\end{align}
where the left-hand side follows from the product rule and the right-hand side from the fundamental theorem of calculus. Note that the sign in \cref{eq: app_mean_flow_identity} is flipped compared to \citet{geng2026mean}, as we differentiate with respect to the earlier time, which is the lower limit of integration. Since $\bm{x}_s$ varies with $s$ along the trajectory, the derivative is total and expands as
\begin{equation}\label{eq: app_total_derivative}
    \frac{\mathrm{d}}{\mathrm{d}s}V_{s,t}(\bm{x}_s) = \partial_s V_{s,t}(\bm{x}_s) + \nabla_{\bm{x}_s} V_{s,t}(\bm{x}_s)\, b_s(\bm{x}_s) \, .
\end{equation}

\paragraph{Boundary behavior and flow maps.} As stated, \cref{eq: meanflow_app} is undefined on the diagonal, but for continuous $b$ the average converges to the instantaneous velocity as the interval closes \citep{geng2026mean}, so we extend $V$ to $s=t$ by
\begin{equation}\label{eq: boundary_condition}
    V_{t,t}(\bm{x}_t) := \lim_{s\uparrow t} V_{s,t}(\bm{x}_s) = b_t(\bm{x}_t) \, .
\end{equation}
A parameterized field $V^\theta$ is defined on all of $0\le s\le t\le1$ and does not satisfy this automatically, so it is usually enforced with heuristics such as oversampling the time diagonal during distillation.

As mentioned in \cref{sec: ct_generative_flows}, the mean velocity $V_{s,t}$ defines a flow map: 
\begin{equation}\label{eq: app_flow_map}
    \bm{X}_{s,t}(\bm{x}_s) := \bm{x}_s + (t-s)V_{s,t}(\bm{x}_s) \, ,
\end{equation}
which satisfies $\bm{X}_{t,t}=\mathrm{id}$ by construction, since the factor $(t-s)$ vanishes on the diagonal. Under the boundary condition of \cref{eq: boundary_condition}, and for sufficiently regular $b$, the flow map is characterized equivalently by the Lagrangian condition, the Eulerian condition, or the semigroup property \citep{boffi2024flow}, so any one of them can serve as a training target. We use the semigroup property, $\bm{X}_{s,t} = \bm{X}_{r,t}\circ\bm{X}_{s,r}$ for any $s<r<t$, which is what licenses few-step sampling in \cref{sec: theory_in_practice}, and the Eulerian condition, which recovers the MeanFlow identity as follows.

Following \citet{dieleman2026flowmaps}, the Eulerian condition states that the flow map is invariant along the trajectory through its starting point,
\begin{equation}\label{eq: app_eulerian}
    \frac{\partial}{\partial s}\bm{X}_{s,t}(\bm{x}_s) + \nabla_{\bm{x}_s}\bm{X}_{s,t}(\bm{x}_s)\, b_s(\bm{x}_s) = \bm{0} \, .
\end{equation}
Substituting \cref{eq: app_flow_map} into \cref{eq: app_eulerian} gives
\begin{align}
    \bm{0} &= -V_{s,t}(\bm{x}_s) + (t-s)\,\partial_s V_{s,t}(\bm{x}_s) + \Big(\mathbb{I}_d + (t-s)\nabla_{\bm{x}_s} V_{s,t}(\bm{x}_s)\Big) b_s(\bm{x}_s) \nonumber\\
    &= -V_{s,t}(\bm{x}_s) + b_s(\bm{x}_s) + (t-s)\Big(\partial_s V_{s,t}(\bm{x}_s) + \nabla_{\bm{x}_s} V_{s,t}(\bm{x}_s)\, b_s(\bm{x}_s)\Big) \, ,
\end{align}
which rearranges to \cref{eq: app_mean_flow_identity}. The two formulations are therefore equivalent: the MeanFlow identity is the Eulerian condition written in terms of the average velocity rather than the map itself.

\subsection{Stop-gradient training and marginal-from-conditional learning}\label{sec: semigradient}
\paragraph{Semigradients.} Recall the residual formula used to learn the mean velocity in \cref{eq: consistency_residual}. The gradient flows through the parameters $\theta$ of the neural mean velocity approximation only through the first term, treating the target inside the stop-gradient as a constant and avoiding higher-order derivatives. Following \citet{dieleman2026flowmaps}, writing $\mathcal{R}$ for the residual and $\mathcal{L}(\theta) := \mathbb{E}[\|\mathcal{R}\|_2^2]$, the direction followed is
\begin{equation}\label{eq: app_semigradient}
    \widetilde{G}_\theta = 2\,\mathbb{E}\Big[\big(\partial_\theta V^\theta_{s,t}(\bm{x}_s)\big)^{\!\top}\mathcal{R}_{s,t}(V^\theta, b)(\bm{x}_s)\Big] \, ,
\end{equation}
rather than $\nabla_\theta\mathcal{L} = 2\,\mathbb{E}[(\partial_\theta\mathcal{R})^{\!\top}\mathcal{R}]$, since $V^\theta_{s,t}(\bm{x}_s)$ is the only $\theta$-dependent term outside the stop-gradient. We refer to $\widetilde{G}_\theta$ as a \textit{semigradient}: it is not, in general, the gradient of any objective, so the field it converges to is characterized as a \textit{fixed point}, at which $\widetilde{G}_\theta$ vanishes, rather than as a minimizer. This is the sense in which \cref{prop: mean_flow_minimizer} is stated.

\paragraph{Marginal from conditional.} Write the target of \cref{eq: consistency_residual} as $T_{s,t}(V,\bm{u})(\bm{x}) := \bm{c}(\bm{x}) + M(\bm{x})\,\bm{u}$, with $\bm{c} := (t-s)\,\partial_s V_{s,t}$ and $M := \mathbb{I}_d + (t-s)\nabla_{\bm{x}} V_{s,t}$. The residual, and hence the expression inside the expectation in \cref{eq: app_semigradient}, is therefore affine in the velocity slot $\bm{u}$. Conditioning on $\bm{x}_s$ and applying the tower property to $b_s(\bm{x}_s) = \mathbb{E}[b_{s|1}(\bm{x}_s\mid\bm{x}_1)\mid\bm{x}_s]$ then leaves $\widetilde{G}_\theta$ unchanged under substitution of the conditional velocity for the marginal one, so the mean velocity can be trained simulation-free from data \citep{geng2026mean, dieleman2026flowmaps}. The argument relies on two ingredients. The first is the stop-gradient, without which the expression is quadratic in $\bm{u}$ and the conditional variance of $b_{s|1}$ given $\bm{x}_s$ would shift the fixed point rather than only adding variance. The second is evaluating \cref{eq: app_mean_flow_identity} at $\bm{x}_s$, so that $\bm{c}$, $M$ and $\partial_\theta V^\theta_{s,t}$ depend on $\bm{x}_s$ alone and pass through the conditional expectation.
\subsection{Replay as mixture modeling}\label{sec: replay_as_mixture_modeling}
Generative replay can be seen as modeling a mixture between historical and new data densities, where the mixture coefficient $\lambda$ represents the rate at which samples are drawn from a generative model over previous tasks. 

Given a flow model over historical data $\{p_t^{\theta}\}_{t \in [0,1]}$ and an incoming task distribution $q'$, generative replay samples batches from the following distribution: 
\begin{equation}\label{eq: q_batch}
    q_{\text{batch}} := \lambda p_1^{\theta} + (1-\lambda) q' \, .
\end{equation}
Note that $p_1^{\theta}$ approximates the historical density $q$, hence $q_{\text{batch}}$ is an approximation of \cref{eq: mixture}. The resulting mixed batches are then used to train a generative model with the conditional flow matching loss in \cref{eq: c_flow_matching_obj}. There, the data density enters only as the law of $\bm{x}_1$, so training on replay-augmented batches amounts to minimizing the same objective with $q$ replaced by $q_{\text{batch}}$. By the marginal-from-conditional construction in \cref{eq: marginal_path}, its minimizer is the marginal velocity generating
\begin{equation}
    \rho_t(\bm{x}) := \int p_{t|1}(\bm{x}\mid\bm{x}_1)\, q_{\text{batch}}(\bm{x}_1)\, \mathrm{d}\bm{x}_1 \, ,
\end{equation}
whose terminal density is $\rho_1 = q_{\text{batch}}$, since the schedule satisfies $\alpha_1=1$ and $\sigma_1=0$, so that $p_{1|1}(\cdot\mid\bm{x}_1)$ is a point mass at $\bm{x}_1$.

Hence, up to an approximation error of $q$ by $p_1^{\theta}$, generative replay optimizes a flow matching model over \cref{eq: mixture}.
\subsection{Static KL as mixture minimizer}\label{sec: static_kl_as_mixture_minimizer}
We restate the formal optimization objective over nonnegative integrable functions subject to normalization:
\begin{equation}\label{eq: kl_minimizer}
\begin{aligned}
    \tilde{q} = \argmin_{\rho \geq 0} & \left[ \lambda D_{\mathrm{KL}}(q \parallel \rho) + (1-\lambda)D_{\mathrm{KL}}(q' \parallel \rho)  \right] \, , \\
    \text{s.t.} \quad & \int \rho(\bm{x}) \mathrm{d}\bm{x} = 1 \, .
\end{aligned}
\end{equation}
Expanding the Kullback-Leibler divergence definitions yields the following objective functional:
\begin{align}
    F(\rho) &= \lambda \int q(\bm{x})\log \frac{q(\bm{x})}{\rho(\bm{x})}\mathrm{d}\bm{x} + (1-\lambda)\int q'(\bm{x})\log \frac{q'(\bm{x})}{\rho(\bm{x})}\mathrm{d}\bm{x} \nonumber \\
    &= \lambda\int q(\bm{x})\log q(\bm{x})\mathrm{d}\bm{x} - \lambda\int q(\bm{x})\log \rho(\bm{x})\mathrm{d}\bm{x} \nonumber \\ 
    & \qquad + (1-\lambda)\int q'(\bm{x})\log q'(\bm{x})\mathrm{d}\bm{x} - (1-\lambda)\int q'(\bm{x})\log \rho(\bm{x})\mathrm{d}\bm{x} \, .
\end{align}
Incorporating the density normalization constraint via a Lagrange multiplier $\gamma \in \mathbb{R}$, we construct the Lagrangian: 
\begin{equation}\label{eq: lagrangian}
    \Lambda(\rho, \gamma) = F(\rho) + \gamma \left[ \int \rho(\bm{x})\mathrm{d}\bm{x} - 1 \right] \, .
\end{equation}
To find the optimal density, we compute the functional derivative of \cref{eq: lagrangian} with respect to $\rho(\bm{x})$ and set it to zero:
\begin{equation}\label{eq: functional_derivative}
    \frac{\delta \Lambda(\rho, \gamma)}{\delta \rho(\bm{x})} = -\lambda \frac{q(\bm{x})}{\rho(\bm{x})} - (1-\lambda) \frac{q'(\bm{x})}{\rho(\bm{x})} + \gamma = 0 \, .
\end{equation}
Solving for $\rho(\bm{x})$ yields:
\begin{equation}\label{eq: rho_solved}
     \rho(\bm{x}) = \frac{\lambda q(\bm{x}) + (1-\lambda) q'(\bm{x})}{\gamma} \, .
\end{equation}
Integrating both sides over the domain ensures that the normalization constraint is satisfied, which requires:
\begin{align}
    \gamma &= \int \left[\lambda q(\bm{x}) + (1-\lambda)q'(\bm{x})\right]\mathrm{d}\bm{x} \nonumber \\
    &= \lambda \int q(\bm{x})\mathrm{d}\bm{x} + (1-\lambda)\int q'(\bm{x})\mathrm{d}\bm{x} \nonumber \\
    &= 1 \, ,
\end{align}
since $q(\bm{x})$ and $q'(\bm{x})$ are valid probability density functions that integrate to unity. Substituting $\gamma = 1$ back into \cref{eq: rho_solved} gives the mixture $\tilde{q}(\bm{x}) = \lambda q(\bm{x}) + (1-\lambda) q'(\bm{x})$, which is nonnegative and integrates to unity, so it is feasible.

This critical point is the unique global minimizer: discarding terms independent of the optimized $\rho$, $F(\rho) = -\int \tilde{q}(\bm{x})\log\rho(\bm{x})\,\mathrm{d}\bm{x} + \mathrm{const} = D_{\mathrm{KL}}(\tilde{q}\parallel\rho) + \mathrm{const}'$, which is nonnegative and vanishes only at $\rho=\tilde{q}$ almost everywhere, completing the proof.

\subsection{\texorpdfstring{Proof of \cref{prop: objective_minimizer}}{Proof of Proposition 3.1}}\label{sec: proof_unique_minimizer}

\begin{restated}{Proposition~\ref{prop: objective_minimizer}}
Let $\{p_t\}_{t\in[0,1]}$ and $\{p'_t\}_{t\in[0,1]}$, with $p_t,p'_t:\mathbb{R}^d\rightarrow\mathbb{R}_{\geq0}$, be density paths generated through the continuity equation in \cref{sec: ct_generative_flows} by velocity fields $b,b'\in C^0([0,1],(C^1(\mathbb{R}^d))^d)$, with shared prior $p_0=p'_0:=\mathcal{N}(\bm{0}_d,\mathbb{I}_d)$ and terminal densities $p_1:=q$, $p'_1:=q'$. Fix $\lambda\in(0,1)$ and assume $\lambda p_t(\bm{x})+(1-\lambda)p'_t(\bm{x})>0$ for all $t\in[0,1]$ and $\bm{x}\in\mathcal{X}$. Then $\mathcal{J}(\cdot\,;b,b')$ in \cref{eq: mixture_objective} admits 
\begin{equation*}
    v_t^{\star}(\bm{x}) := \frac{\lambda\, p_t(\bm{x})\, b_t(\bm{x}) + (1-\lambda)\, p'_t(\bm{x})\, b'_t(\bm{x})}{\lambda\, p_t(\bm{x}) + (1-\lambda)\, p'_t(\bm{x})} \, ,
\end{equation*}
as unique minimizer, which generates the mixture path $\rho_t^\star := \lambda p_t+(1-\lambda)p'_t$, satisfying $\partial_t\rho_t^\star+\nabla_{\bm{x}}\cdot(v_t^\star\rho_t^\star)=0$ with $\rho_0^\star=\mathcal{N}(\bm{0}_d,\mathbb{I}_d)$ and $\rho_1^\star=\tilde{q}$ (\cref{eq: mixture}).
\end{restated}

The shared Gaussian conditional kernel required by the standing assumptions of \cref{sec: ct_framework} is not used here: the minimizer and the continuity argument need only positivity and the stated regularity, so the density-weighted mixture holds for arbitrary probability paths. A shared kernel becomes necessary only in \cref{thm: rho_minim_kl}, where the score-to-velocity inversion relies on it.

\begin{proof}
\textbf{Minimizer.} As a first step, we rewrite the objective in \cref{eq: mixture_objective} by expanding the expectation and grouping the integrals. 
\begin{equation}\label{eq: app_grouped_objective}
    \mathcal{J}(v; b, b') = \int_0^1 \int \Big(\lambda\,\|v_t(\bm{x})-b_t(\bm{x})\|_2^2\, p_t(\bm{x}) + (1-\lambda)\,\|v_t(\bm{x})-b'_t(\bm{x})\|_2^2 \, p_t'(\bm{x}) \Big)\,\mathrm{d}\bm{x} \, \mathrm{d}t \, .
\end{equation}
By the calculus of variations, the optimization task in \cref{eq: mixture_objective} becomes a pointwise minimization, as the objective does not contain any derivative of the minimizer $v$. It therefore suffices to minimize the integrand of \cref{eq: app_grouped_objective} over $\bm{u}\in\mathbb{R}^d$ at each fixed $(t,\bm{x})$. Setting its gradient to zero,
\begin{equation}\label{eq: app_pointwise_stationarity}
    2\lambda\, p_t(\bm{x})\big(\bm{u}-b_t(\bm{x})\big) + 2(1-\lambda)\, p'_t(\bm{x})\big(\bm{u}-b'_t(\bm{x})\big) = \bm{0} \, ,
\end{equation}
and collecting the terms in $\bm{u}$ gives $\big(\lambda p_t + (1-\lambda)p'_t\big)\bm{u} = \lambda p_t b_t + (1-\lambda) p'_t b'_t$, where all quantities are evaluated at $(t,\bm{x})$. The bracketed factor is strictly positive by assumption, so dividing through recovers \cref{eq: dynamic_kl_minimizer}. The integrand is a positively weighted sum of quadratics, with Hessian $2(\lambda p_t + (1-\lambda)p'_t)\mathbb{I}_d \succ 0$, hence strictly convex in $\bm{u}$, so this critical point is its unique global minimizer. Since the pointwise minimizer is unique at every $(t,\bm{x})$, any other field attaining the same value of $\mathcal{J}$ must agree with $v^\star$ almost everywhere on $[0,1]\times\mathcal{X}$.

\textbf{Generating field.} We prove that \cref{eq: dynamic_kl_minimizer} is the generating field for $\rho_t^{\star}:=\lambda p_t + (1-\lambda) p_t'$. Differentiating $\rho^\star_t$ in time and applying the continuity equation to each task path gives
\begin{align}
    \frac{\partial}{\partial t}\rho_t^{\star}
    &= \lambda \frac{\partial}{\partial t} p_t + (1-\lambda)\frac{\partial}{\partial t} p_t' \nonumber\\
    &= -\lambda \nabla \cdot (p_t b_t) - (1-\lambda) \nabla \cdot (p_t' b_t') \nonumber\\
    &= -\nabla \cdot \big[ \lambda p_t b_t + (1-\lambda)p_t'b_t' \big] \nonumber\\
    &= -\nabla \cdot \left[ \rho^\star_t \,\frac{\lambda p_t b_t + (1-\lambda)p_t'b_t'}{\lambda p_t + (1-\lambda)p_t'} \right] \nonumber\\
    &= -\nabla \cdot \big( \rho^\star_t\, v^\star_t \big) \, ,
\end{align}
where the fourth equality multiplies and divides by $\rho^\star_t = \lambda p_t + (1-\lambda)p'_t$, which is allowed by the positivity assumption, and the last uses \cref{eq: dynamic_kl_minimizer}. Hence $\rho^\star_t$ satisfies the continuity equation with velocity $v^\star_t$. The boundary conditions follow from those of the task paths: $\rho^\star_0 = \lambda p_0 + (1-\lambda)p'_0 = \mathcal{N}(\bm{0}_d,\mathbb{I}_d)$ since the two priors coincide, and $\rho^\star_1 = \lambda q + (1-\lambda)q' = \tilde{q}$.
\end{proof}
\subsection{Closure of the marginal mixture path}\label{sec: gaussianity_marginal_mixture_path}
Let $p_t$ and $p'_t$ be marginal paths induced by the same fixed conditional Gaussian kernel $p_{t|1}(\bm{x}\mid\bm{x}_1) = \mathcal{N}(\bm{x};\alpha_t\bm{x}_1,\sigma_t^2\mathbb{I}_d)$ over the task densities $q$ and $q'$ respectively, as in \cref{eq: marginal_path}. By linearity of the integral, the mixture path of \cref{prop: objective_minimizer} satisfies
\begin{align}
    \rho^\star_t(\bm{x}) &= \lambda\, p_t(\bm{x}) + (1-\lambda)\, p'_t(\bm{x}) \nonumber\\
    &= \lambda \int p_{t|1}(\bm{x}\mid\bm{x}_1)\,q(\bm{x}_1)\,\mathrm{d}\bm{x}_1 + (1-\lambda)\int p_{t|1}(\bm{x}\mid\bm{x}_1)\,q'(\bm{x}_1)\,\mathrm{d}\bm{x}_1 \nonumber\\
    &= \int p_{t|1}(\bm{x}\mid\bm{x}_1)\,\big[\lambda\, q(\bm{x}_1) + (1-\lambda)\,q'(\bm{x}_1)\big]\,\mathrm{d}\bm{x}_1 \nonumber\\
    &= \int p_{t|1}(\bm{x}\mid\bm{x}_1)\,\tilde{q}(\bm{x}_1)\,\mathrm{d}\bm{x}_1 \, .
\end{align}
The mixture path is therefore itself a marginal path induced by the same kernel, with data distribution $\tilde{q}$ in place of $q$ or $q'$. Combined with \cref{prop: objective_minimizer}, which identifies $v^\star_t$ as its generating field, the pair $(\rho^\star_t, v^\star_t)$ is therefore an admissible candidate in \cref{thm: rho_minim_kl}.

Note that the shared kernel is essential: if the two task paths used different scheduling parameters, the two integrals could not be combined, and the mixture would in general not be a marginal path of either kernel, for any data distribution.

\subsection{\texorpdfstring{Proof of \cref{thm: rho_minim_kl}}{Proof of Theorem 3.2}}\label{sec: proof_rho_minim_kl}
\begin{restated}{Theorem~\ref{thm: rho_minim_kl}}
Under the standing assumptions, let $\rho_t$ be a candidate marginal path induced by the same kernel, with velocity $v_t$ and shared prior $\rho_0=p_0$, and define $\kappa_t := \alpha_t\big/\big(\sigma_t(\dot{\alpha}_t\sigma_t-\alpha_t\dot{\sigma}_t)\big) > 0$ on $(0,1)$. The weighted terminal divergence then satisfies
\begin{equation*}
\begin{aligned}
    &\lambda D_{\mathrm{KL}}(p_1\parallel\rho_1) + (1-\lambda)D_{\mathrm{KL}}(p'_1\parallel\rho_1) \\
    &\qquad = \int_0^1 \kappa_t \Big(\lambda\,\mathbb{E}_{p_t}\big[\|v_t(\bm{x})-b_t(\bm{x})\|_2^2\big] + (1-\lambda)\,\mathbb{E}_{p'_t}\big[\|v_t(\bm{x})-b'_t(\bm{x})\|_2^2\big]\Big)\mathrm{d}t \, .
\end{aligned}
\end{equation*}
\end{restated}

\begin{proof}
\textbf{Dynamic formulation of terminal} $D_{\text{KL}}$. \citet{albergo2025stochastic} show that the terminal KL objective can be rewritten as a dynamic product between score and vector field differences (Lemma 21). Exemplifying this on the first term in \cref{eq: kl_trajectory_equivalence} yields:
\begin{equation}\label{eq: app_kl_time_derivative}
    D_{\mathrm{KL}}(p_1 \parallel \rho_1) = \int_0^1 \mathbb{E}_{p_t}\Big[\big(\nabla_{\bm{x}}\log p_t(\bm{x}) - \nabla_{\bm{x}}\log \rho_t(\bm{x})\big) \cdot \big(b_t(\bm{x}) - v_t(\bm{x})\big)\Big]\,\mathrm{d}t \, ,
\end{equation}
where both continuity equations have been used, and the initial divergence drops out since the shared prior gives $D_{\mathrm{KL}}(p_0\parallel\rho_0)=0$ (see \citet{albergo2025stochastic} for the derivation).

\textbf{From scores to velocities.} As, by assumption, $\rho_t$, $p_t$ and $p_t'$ are marginal paths of the same Gaussian conditional kernel, the score can be rewritten in terms of the vector field and the scheduling parameters $\alpha_t$ and $\sigma_t$ \citep{zheng2023guided}. Rearranging terms in \cref{eq: score_vf_relationship} and, again, exemplifying the historical branch as in \cref{eq: app_kl_time_derivative} we obtain: 
\begin{equation}\label{eq: app_score_inversion}
    \nabla_{\bm{x}} \log p_t(\bm{x}) = \frac{\alpha_t b_t(\bm{x}) - \dot{\alpha}_t \bm{x}}{\sigma_t (\dot{\alpha}_t\sigma_t-\alpha_t \dot{\sigma}_t)} \, .
\end{equation}
Since $\rho_t$ is induced by the same kernel, the identical relation holds with $v_t$ in place of $b_t$. Subtracting the two, the affine terms cancel, and the score difference collapses to a velocity difference:
\begin{equation}\label{eq: app_score_to_velocity}
    \nabla_{\bm{x}}\log p_t(\bm{x}) - \nabla_{\bm{x}}\log \rho_t(\bm{x}) = \frac{\alpha_t}{\sigma_t(\dot{\alpha}_t\sigma_t-\alpha_t\dot{\sigma}_t)}\big(b_t(\bm{x}) - v_t(\bm{x})\big) = \kappa_t\big(b_t(\bm{x}) - v_t(\bm{x})\big) \, .
\end{equation}
This cancellation is where the restriction to a shared kernel is used: were the two paths governed by different schedules, the drift terms would not agree, and no such proportionality would hold.

\textbf{Weighted combination.} Substituting \cref{eq: app_score_to_velocity} into \cref{eq: app_kl_time_derivative} turns the inner product into a squared norm,
\begin{equation}\label{eq: app_kl_single_branch}
    D_{\mathrm{KL}}(p_1 \parallel \rho_1) = \int_0^1 \kappa_t\, \mathbb{E}_{p_t}\big[\|v_t(\bm{x}) - b_t(\bm{x})\|_2^2\big]\,\mathrm{d}t \, ,
\end{equation}
and the same argument applied to the incoming branch, with $p'_t$ and $b'_t$ in place of $p_t$ and $b_t$, gives the analogous identity with the same $\kappa_t$, since both branches share $\alpha_t$ and $\sigma_t$. Taking the $\lambda$-weighted combination of the two and collecting the integrals yields \cref{eq: kl_trajectory_equivalence}. For the linear schedule of \cref{sec: flow_matching_basics}, $\alpha_t=t$ and $\sigma_t=1-t$ give $\dot{\alpha}_t\sigma_t-\alpha_t\dot{\sigma}_t = (1-t)+t = 1$, so that $\kappa_t = t/(1-t)$.
\end{proof}

\subsection{\texorpdfstring{Proof of \cref{prop: mean_flow_minimizer}}{Proof of Proposition 3.3}}\label{sec: proof_mean_flow_minimizer}
\begin{restated}{Proposition~\ref{prop: mean_flow_minimizer}}
Under the standing assumptions, let $V$ be continuously differentiable in $s$ and $\bm{x}$ on $0\le s\le t\le 1$. Then the semigradient of $\mathcal{K}(\cdot\,;b,b')$ vanishes if and only if $V$ satisfies the MeanFlow identity for $v^\star$, that is $\mathcal{R}_{s,t}(V,v^\star)=\bm{0}$ for all $0\le s<t\le 1$, with $v^\star$ from \cref{eq: dynamic_kl_minimizer}, whose unique solution is the mixture mean velocity $V^\star_{s,t}$ of \cref{eq: mixture_mean_velocity}.
\end{restated}
\begin{proof}
Recall the extended formula for the mean flow continual update objective:
\begin{equation}\label{eq: app_mean_flow_mixture_objective}
    \mathcal{K}(V; b, b') := \int_0^1\!\!\int_s^1 \Big(\lambda\,\mathbb{E}_{\bm{x}\sim p_s}\big[\|\mathcal{R}_{s,t}(V,b)(\bm{x})\|_2^2\big] + (1-\lambda)\,\mathbb{E}_{\bm{x}\sim p'_s}\big[\|\mathcal{R}_{s,t}(V,b')(\bm{x})\|_2^2\big]\Big)\,\mathrm{d}t\,\mathrm{d}s \, ,
\end{equation}
where, by \cref{eq: consistency_residual}, the two residuals read
\begin{align}
    \mathcal{R}_{s,t}(V,b)(\bm{x}) &= V_{s,t}(\bm{x}) - \mathrm{sg}\Big[b_s(\bm{x}) + (t-s)\big(\partial_s V_{s,t}(\bm{x}) + \nabla_{\bm{x}} V_{s,t}(\bm{x})\, b_s(\bm{x})\big)\Big] \, , \label{eq: app_residual_hist}\\
    \mathcal{R}_{s,t}(V,b')(\bm{x}) &= V_{s,t}(\bm{x}) - \mathrm{sg}\Big[b'_s(\bm{x}) + (t-s)\big(\partial_s V_{s,t}(\bm{x}) + \nabla_{\bm{x}} V_{s,t}(\bm{x})\, b'_s(\bm{x})\big)\Big] \, . \label{eq: app_residual_inc}
\end{align}
As illustrated in \cref{sec: semigradient} and following \citet{dieleman2026flowmaps}, each residual is affine in its velocity slot, and the stop-gradient keeps the target fixed under differentiation. Denote $\bm{c} := (t-s)\,\partial_s V_{s,t}$ and $M := \mathbb{I}_d + (t-s)\nabla_{\bm{x}} V_{s,t}$, suppressing their dependence on $(s,t,\bm{x})$. We rewrite \cref{eq: app_residual_hist} and \cref{eq: app_residual_inc} as follows:
\begin{align}
    \mathcal{R}_{s,t}(V,b)(\bm{x}) &= V_{s,t}(\bm{x}) - \mathrm{sg}\big[\bm{c}(\bm{x}) + M(\bm{x})\,b_s(\bm{x})\big] \, , \label{eq: app_residual_hist_aff}\\
    \mathcal{R}_{s,t}(V,b')(\bm{x}) &= V_{s,t}(\bm{x}) - \mathrm{sg}\big[\bm{c}(\bm{x}) + M(\bm{x})\,b'_s(\bm{x})\big] \, . \label{eq: app_residual_inc_aff}
\end{align}
The two share $\bm{c}$ and $M$, which depend on $V$ but not on the velocity, and differ only in which field occupies the affine slot.

\textbf{Stationarity.} Write the expectations in \cref{eq: app_mean_flow_mixture_objective} as integrals over $\bm{x}$ and perturb $V \to V + \varepsilon\,\eta$ for an arbitrary direction $\eta$. Because the stop-gradient holds $\bm{c}$ and $M$ fixed, each residual moves by exactly $\varepsilon\,\eta$, so expanding the squares and collecting first-order terms gives
\begin{equation}\label{eq: app_mean_flow_semigradient}
    \frac{\mathrm{d}}{\mathrm{d}\varepsilon}\Big|_{\varepsilon=0}\mathcal{K}(V+\varepsilon\eta; b,b') = \int_0^1\!\!\int_s^1\!\!\int \widetilde{G}(s,t,\bm{x})\cdot\eta(s,t,\bm{x})\,\mathrm{d}\bm{x}\,\mathrm{d}t\,\mathrm{d}s \, ,
\end{equation}
with semigradient density
\begin{equation}\label{eq: app_mean_flow_semigradient_density}
    \widetilde{G}(s,t,\bm{x}) := 2\Big(\lambda\, p_s(\bm{x})\,\mathcal{R}_{s,t}(V,b)(\bm{x}) + (1-\lambda)\, p'_s(\bm{x})\,\mathcal{R}_{s,t}(V,b')(\bm{x})\Big) \, ,
\end{equation}
where the two densities appear as pointwise weights. This vanishes for every $\eta$ if and only if $\widetilde{G}$ vanishes at every $(s,t,\bm{x})$. Substituting \cref{eq: app_residual_hist_aff} and \cref{eq: app_residual_inc_aff}, the common term $V_{s,t}-\bm{c}$ is weighted by $\lambda p_s + (1-\lambda)p'_s$ while the velocity slot is weighted by $\lambda p_s b_s + (1-\lambda)p'_s b'_s$, so dividing by the total weight leaves
\begin{equation}\label{eq: app_mean_flow_fixed_point}
    V_{s,t}(\bm{x}) = \bm{c}(\bm{x}) + M(\bm{x})\,\frac{\lambda p_s(\bm{x}) b_s(\bm{x}) + (1-\lambda)p'_s(\bm{x}) b'_s(\bm{x})}{\lambda p_s(\bm{x}) + (1-\lambda)p'_s(\bm{x})} = \bm{c}(\bm{x}) + M(\bm{x})\, v^\star_s(\bm{x}) \, ,
\end{equation}
where the division is admitted by the positivity assumption and the ratio is the mixture velocity of \cref{eq: dynamic_kl_minimizer}. Hence $\mathcal{R}_{s,t}(V,v^\star)=\bm{0}$ for all $0\le s<t\le 1$: the density weighting is inherited rather than imposed, since the stop-gradient leaves each residual affine in its velocity, and a weighted sum of affine conditions is the condition at the weighted velocity.

\textbf{Uniqueness.} The stationarity condition is a differential equation in $V$, so on its own it fixes $V$ only up to a constant of integration. Following the sufficiency argument of \citet[Sec.~B.3]{geng2026mean}, collecting $-V_{s,t} + (t-s)\tfrac{\mathrm{d}}{\mathrm{d}s}V_{s,t}$ into the form $\tfrac{\mathrm{d}}{\mathrm{d}s}\big[(t-s)V_{s,t}\big]$ by the product rule, $\mathcal{R}_{s,t}(V,v^\star)=\bm{0}$ reads
\begin{equation}\label{eq: app_displacement_ode}
    \frac{\mathrm{d}}{\mathrm{d}s}\Big[(t-s)\,V_{s,t}(\bm{x}^\star_s)\Big] = -\,v^\star_s(\bm{x}^\star_s) \, ,
\end{equation}
along the trajectory $\bm{x}^\star_\tau$ solving \cref{eq: ode_solution} with velocity $v^\star$. Integrating \cref{eq: app_displacement_ode} in $s$ recovers the two sides only up to constants,
\begin{equation}\label{eq: app_uniqueness_constants}
    (t-s)\,V_{s,t}(\bm{x}^\star_s) + C_1 = \int_s^t v^\star_\tau(\bm{x}^\star_\tau)\,\mathrm{d}\tau + C_2 \, ,
\end{equation}
which coincide since both sides vanish at $s=t$: the left by the factor $(t-s)$, for any $V$ finite on the diagonal, and the right because the integration limits meet. Hence
\begin{equation}\label{eq: app_uniqueness_integration}
    (t-s)\,V_{s,t}(\bm{x}^\star_s) = \int_s^t v^\star_\tau(\bm{x}^\star_\tau)\,\mathrm{d}\tau \, ,
\end{equation}
which is \cref{eq: mixture_mean_velocity}: the stationary point of $\mathcal{K}(\cdot\,;b,b')$ is the mean velocity of the mixture path. As \citet{geng2026mean} note, this relies on parameterizing the average velocity rather than the displacement: had $S_{s,t}:=(t-s)V_{s,t}$ been modeled directly, $S_{t,t}=0$ would have to be imposed as a separate boundary condition.
\end{proof}

\subsection{A stochastic view on the mixture optimization problem}\label{sec: stochastic_formulation}
Consider the diffusion counterpart of \cref{sec: flow_matching_basics}, where sampling from $q$ and $q'$ is governed by
\begin{equation}\label{eq: stochastic_version}
\begin{aligned}
    \mathbb{P}: \quad &\mathrm{d}X_t = \mu_t(X_t)\,\mathrm{d}t + \epsilon_t\,\mathrm{d}B_t \, , \qquad X_0 \sim \mathcal{N}(\bm{0}_d, \mathbb{I}_d), \; X_1 \sim q \, , \\
    \mathbb{P}': \quad &\mathrm{d}X_t = \mu_t'(X_t)\,\mathrm{d}t + \epsilon_t\,\mathrm{d}B_t \, , \qquad X_0 \sim \mathcal{N}(\bm{0}_d, \mathbb{I}_d), \; X_1 \sim q' \, ,
\end{aligned}
\end{equation}
with $B_t$ a standard Brownian motion and $\epsilon_t \in \mathbb{R}_+$ a time-dependent diffusion coefficient. The marginals $p_t,p'_t$ at time $t$ evolve under the corresponding Fokker--Planck equations,
\begin{equation}\label{eq: app_fokker_planck}
    \partial_t p_t = -\nabla\cdot(\mu_t p_t) + \tfrac{1}{2}\epsilon_t^2\Delta p_t \, , \qquad
    \partial_t p'_t = -\nabla\cdot(\mu'_t p'_t) + \tfrac{1}{2}\epsilon_t^2\Delta p'_t \, ,
\end{equation}
and the drifts decompose through the probability-flow identity as
\begin{equation}\label{eq: drift_decomposition}
    \mu_t = b_t + \tfrac{1}{2}\epsilon_t^2\nabla_{\bm{x}}\log p_t \, , \qquad
    \mu_t' = b_t' + \tfrac{1}{2}\epsilon_t^2\nabla_{\bm{x}}\log p_t' \, ,
\end{equation}
where $b_t,b_t'$ are the velocity fields of the probability-flow ODEs, which generate the same marginals $p_t,p'_t$ through the continuity equation. Formulating the mixture problem over path measures rather than densities, Girsanov's theorem reduces the divergence between two such diffusions to an integrated quadratic drift mismatch, weighted by the inverse squared diffusion coefficient. Similar to \citet{thorkelsdottir2026gradual} but applied to the forward KL between path measures, we obtain the following proposition.

\setcounter{theoremInner}{0}
\begin{proposition}\label{prop: stochastic_path_mixture}
Let $\mathbb{P},\mathbb{P}'$ be the path measures of \cref{eq: stochastic_version}, with marginals $p_t,p_t'$, and let $\tilde{\mathbb{P}}$ range over diffusions with the same $\epsilon_t>0$ and initial condition. Fix $\lambda\in(0,1)$ and assume $\lambda p_t(\bm{x})+(1-\lambda)p'_t(\bm{x})>0$ for all $t\in[0,1]$ and $\bm{x}\in\mathcal{X}$. Then
\begin{equation}
    \argmin_{\tilde{\mathbb{P}}}\ \lambda D_{\mathrm{KL}}(\mathbb{P} \,\|\, \tilde{\mathbb{P}}) + (1-\lambda) D_{\mathrm{KL}}(\mathbb{P}' \,\|\, \tilde{\mathbb{P}})
\end{equation}
admits the unique minimizer $\tilde{\mathbb{P}}^\star:\ \mathrm{d}X_t = \mu_t^\star(X_t)\,\mathrm{d}t + \epsilon_t\,\mathrm{d}B_t$, with
\begin{equation}\label{eq: optimal_drift}
    \mu_t^\star(\bm{x}) := \frac{\lambda\, p_t(\bm{x})\,\mu_t(\bm{x}) + (1-\lambda)\, p_t'(\bm{x})\,\mu_t'(\bm{x})}{\lambda\, p_t(\bm{x}) + (1-\lambda)\, p_t'(\bm{x})} \, ,
\end{equation}
whose marginal law is the mixture path $\rho^\star_t = \lambda p_t + (1-\lambda)p_t'$ of \cref{prop: objective_minimizer}.
\end{proposition}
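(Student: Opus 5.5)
The plan mirrors the proof of \cref{prop: objective_minimizer}, lifted from densities to path measures via Girsanov's theorem. First, for any candidate $\tilde{\mathbb{P}}$ with drift $\tilde{\mu}_t$, the same $\epsilon_t>0$ and the same initial law, Girsanov's theorem gives
\begin{equation*}
    D_{\mathrm{KL}}(\mathbb{P}\,\|\,\tilde{\mathbb{P}}) = \int_0^1 \frac{1}{2\epsilon_t^2}\,\mathbb{E}_{p_t}\big[\|\mu_t(\bm{x})-\tilde{\mu}_t(\bm{x})\|_2^2\big]\,\mathrm{d}t \, ,
\end{equation*}
and analogously for $\mathbb{P}'$ with $p'_t,\mu'_t$. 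The initial-condition term vanishes because both start from $\mathcal{N}(\bm{0}_d,\mathbb{I}_d)$. The expectation is taken only under the marginal $p_t$ because the integrand depends on the path through $X_t$ alone. Here I will assume a Novikov-type condition (or finite energy of the drift mismatch) so that the identity holds and infinite-KL candidates are irrelevant.

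The weighted objective then becomes
\begin{equation*}
    \int_0^1 \frac{1}{2\epsilon_t^2}\Big(\lambda\,\mathbb{E}_{p_t}\big[\|\tilde{\mu}_t-\mu_t\|_2^2\big]+(1-\lambda)\,\mathbb{E}_{p'_t}\big[\|\tilde{\mu}_t-\mu'_t\|_2^2\big]\Big)\,\mathrm{d}t \, .
\end{equation*}
This is exactly $\mathcal{J}$ of \cref{eq: mixture_objective} with $b,b'$ replaced by $\mu,\mu'$ and a positive time weight $1/(2\epsilon_t^2)$. The positive weight does not affect the pointwise minimizer. The argument of \cref{prop: objective_minimizer} therefore applies verbatim: the integrand contains no derivatives of $\tilde{\mu}$, and it is strictly convex in the drift value at each $(t,\bm{x})$, with Hessian proportional to $\lambda p_t+(1-\lambda)p'_t>0$. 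This yields $\mu^\star_t$ of \cref{eq: optimal_drift}, unique almost everywhere. A drift determines the path measure once the noise and initial law are fixed, so the minimizer $\tilde{\mathbb{P}}^\star$ is unique as well.

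For the marginal law, I will verify that $\rho^\star_t=\lambda p_t+(1-\lambda)p'_t$ solves the Fokker--Planck equation with drift $\mu^\star_t$. The Laplacian term is linear and shares the coefficient $\epsilon_t$ across tasks, so
\begin{equation*}
\partial_t\rho^\star_t=-\nabla\cdot\big(\lambda p_t\mu_t+(1-\lambda)p'_t\mu'_t\big)+\tfrac12\epsilon_t^2\Delta\rho^\star_t=-\nabla\cdot(\rho^\star_t\mu^\star_t)+\tfrac12\epsilon_t^2\Delta\rho^\star_t \, .
\end{equation*}
The initial condition is $\rho^\star_0=\mathcal{N}(\bm{0}_d,\mathbb{I}_d)$. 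Uniqueness of solutions to this linear Fokker--Planck equation, under mild regularity of $\mu^\star$, then identifies $\rho^\star_t$ as the marginal of $\tilde{\mathbb{P}}^\star$.

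The main obstacle is the rigor of the Girsanov step. Some candidates may have infinite KL, and the formula requires integrability conditions. The same issue arises with the Fokker--Planck uniqueness needed to pin down the marginal law. I would handle both by restricting the candidate class to drifts of finite energy with locally Lipschitz regularity, and by citing standard well-posedness results for these equations rather than proving them.
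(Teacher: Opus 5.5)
Your proposal is correct and follows the paper's proof essentially step for step. Girsanov reduces the weighted path-space objective to the functional of the deterministic case, with $\mu_t,\mu'_t$ in place of $b_t,b'_t$ and the harmless positive weight $1/(2\epsilon_t^2)$; pointwise strict convexity then yields $\mu^\star_t$, and the $\lambda$-weighted Fokker--Planck equations show that $\rho^\star_t$ is the marginal of $\tilde{\mathbb{P}}^\star$. Your remarks on finite-energy/Novikov conditions and Fokker--Planck well-posedness make explicit regularity that the paper leaves implicit.
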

\begin{proof}
\textbf{Minimizer.} Since $\mathbb{P}$, $\mathbb{P}'$ and $\tilde{\mathbb{P}}$ share the diffusion coefficient $\epsilon_t$ and the initial condition, Girsanov's theorem gives
\begin{equation}\label{eq: app_girsanov}
    D_{\mathrm{KL}}(\mathbb{P}\,\|\,\tilde{\mathbb{P}}) = \int_0^1 \frac{1}{2\epsilon_t^2}\,\mathbb{E}_{p_t}\big[\|\mu_t(\bm{x}) - \tilde\mu_t(\bm{x})\|_2^2\big]\,\mathrm{d}t \, ,
\end{equation}
and likewise for $\mathbb{P}'$ with $p'_t$ and $\mu'_t$. The weighted objective is therefore the functional of \cref{eq: app_grouped_objective} with $\mu_t,\mu'_t$ in place of $b_t,b'_t$ and an extra positive weight $1/(2\epsilon_t^2)$, which does not affect the pointwise minimization. The argument of \cref{sec: proof_unique_minimizer} applies and yields \cref{eq: optimal_drift} as the unique minimizer.

\textbf{Marginal law.} Taking the $\lambda$-weighted combination of the two equations in \cref{eq: app_fokker_planck} and using $\mu^\star_t\rho^\star_t = \lambda p_t\mu_t + (1-\lambda)p'_t\mu'_t$, which is \cref{eq: optimal_drift} cleared of its denominator, gives
\begin{equation}
    \partial_t\rho^\star_t = -\nabla\cdot\big(\mu^\star_t\rho^\star_t\big) + \tfrac{1}{2}\epsilon_t^2\Delta\rho^\star_t \, ,
\end{equation}
so $\rho^\star_t$ is the marginal law of $\tilde{\mathbb{P}}^\star$. The boundary conditions carry over as in \cref{sec: proof_unique_minimizer}.
\end{proof}

Substituting \cref{eq: drift_decomposition} into \cref{eq: optimal_drift} and using $p\,\nabla_{\bm{x}}\log p = \nabla_{\bm{x}} p$, the score terms recombine into the score of $\rho^\star_t$, leaving
\begin{equation}\label{eq: optimal_velocity_stoch}
    \mu_t^\star = v_t^\star + \tfrac{1}{2}\epsilon_t^2\nabla_{\bm{x}}\log \rho_t^\star \, ,
\end{equation}
with $v_t^\star$ exactly the velocity field of \cref{eq: dynamic_kl_minimizer}. The deterministic and stochastic formulations therefore select the same mixture: the optimal drift is the optimal velocity plus the score correction its own marginal requires.

\subsection{Additional details on the sequential mean flow update}\label{sec: details_mean_flow}
\paragraph{Parameterized objective.}
Following \cref{sec: mean_flows_basics} and mirroring \cref{eq: empirical_mixture_objective} for the instantaneous field, we derive a counterpart of \cref{eq: mean_flow_mixture_objective} across multiple sequential updates. Specifically, at step $k$:
\begin{equation}\label{eq: app_empirical_mean_flow_objective}
\begin{aligned}
    \mathcal{K}^{k+1} = \int_0^1\!\!\int_s^1 \Big( \ &\lambda_k\, \mathbb{E}_{\bm{x}_s\sim p_s^{\theta_{\le k}}}\big[\|\mathcal{R}_{s,t}(V^{\theta_{\le k+1}}, b^{\theta_{\le k}})(\bm{x}_s)\|_2^2\big] \\
    + \ &(1-\lambda_k)\, \mathbb{E}_{\bm{x}_1\sim q_{k+1},\,\bm{x}\sim p_{s|1}(\cdot\mid\bm{x}_1)}\big[\|\mathcal{R}_{s,t}(V^{\theta_{\le k+1}}, b_{\cdot|1}(\cdot\mid\bm{x}_1))(\bm{x})\|_2^2\big] \Big)\,\mathrm{d}t\,\mathrm{d}s \, .
\end{aligned}
\end{equation}
The two backends differ in how the historical samples $\bm{x}_s$ are obtained.
\begin{itemize}[leftmargin=*, itemindent=0pt]
  \item \emph{ODE backend.} No mean velocity is learned. Historical states come from integrating $b^{\theta_{\le k}}$ along \cref{eq: ode_solution}, at a cost that grows with the solver budget, and only $v^{\theta_{\le k+1}}$ is carried forward.
  \item \emph{MeanFlow backend.} Each update additionally trains a mean velocity network $V^{\theta_{\le k+1}}$ under \cref{eq: app_empirical_mean_flow_objective}, whose flow map supplies the historical states at the next update in a few evaluations. Both fields are carried forward, and the first task, having no history, trains them on its own data alone, by flow matching and single-teacher distillation, respectively.
\end{itemize}
$V^{\theta_{\le k+1}}$ is therefore auxiliary: it exists only to reduce the number of network evaluations needed to train the joint-task field at the following update. Note that the incoming branch is trained in a data-conditional way, as in \citet{geng2026mean} and explained in \cref{sec: semigradient}: substituting the conditional velocity for the marginal one leaves the semigradient unchanged in expectation.

\paragraph{Departure from the population objective.} \Cref{prop: mean_flow_minimizer} characterizes the fixed point of $\mathcal{K}$ under the true marginal fields and exact samples from $p_s,p'_s$, neither of which is available at training time. First, the historical field is replaced by the previous model $b^{\theta_{\le k}}_s$, itself approximate. Second, the incoming field is never formed: the conditional substitution is exact in expectation but introduces variance at finite batch size. Third, historical states are drawn through the learned flow map, so their sampling measure is $(\bm{X}^{\theta_{\le k}}_{0,s})_\sharp p_0$ rather than $p_s$. \Cref{eq: app_empirical_mean_flow_objective} therefore differs from $\mathcal{K}$, and the fixed-point characterization transfers only up to these three approximations.

\subsection{Integration on the selection of \texorpdfstring{$\lambda$}{lambda}}\label{sec: selection_of_lambda}
The coefficient $\lambda_k$ governs not only a single update but the weight each task retains across the whole stream. Unrolling \cref{eq: mixture} over the stream expresses the terminal density as a mixture over all tasks seen, with weights determined by the schedule $\{\lambda_k\}_{k=1}^{K-1}$. Holding $\lambda_k=\lambda$ constant gives the incoming task weight $1-\lambda$, the previous one $\lambda(1-\lambda)$, and any task introduced $m$ updates earlier $\lambda^{m}(1-\lambda)$, with the initial task retaining $\lambda^{K-1}$ since it never entered as an incoming task: the weight \textit{decays geometrically} in task age, so a constant rate discounts older tasks rather than balancing them. 

Uniform weight instead requires $\lambda_k=k/(k+1)$, since at step $k$ the historical model carries $k$ tasks at weight $1/k$ each, and weighting it by $k/(k+1)$ against $1/(k+1)$ for the incoming task leaves all $k+1$ tasks at $1/(k+1)$. Selecting $\lambda_k$ is therefore a choice of how the stream is weighted overall, not only of the current stability-plasticity trade-off, and the two schedules above correspond to the settings compared in \cref{sec: experiments}. The adaptive policy admits no such closed form, since the weights depend on validation outcomes at each update, so the terminal mixture it induces is known only after the fact.

\subsection{Efficiency and stability gains}\label{sec: partial_replay_analysis}
\paragraph{Evaluation count.} Fix a solver step size $h$ and write $b$ for the frozen historical field. Generative replay integrates every replay sample from $0$ to $1$, costing $\lceil 1/h\rceil$ evaluations. Trajectory replay integrates only to $t\sim\mathcal{U}(0,1]$ and then evaluates $b_t$ once at the anchor, costing $\lceil t/h\rceil+1$, so that in expectation
\begin{equation}\label{eq: app_nfe_ratio}
    \mathbb{E}_t\big[\lceil t/h\rceil+1\big] \;\approx\; \frac{1}{2h}+1
    \qquad\text{against}\qquad
    \frac{1}{h} \, ,
\end{equation}
approaching a factor of two as $h\to0$. At $h=0.05$, for instance, a full generation costs $20$ evaluations while trajectory replay averages $11.5$, a saving of $42.5\%$ that grows as the solver is refined.

\paragraph{Accumulated solver error.} Intuitively, simulating less results in lower discretization error. Let $\bm{x}_\tau$ solve \cref{eq: ode_solution} exactly under the frozen field $b$, and let $\tilde{\bm{x}}_\tau$ denote its Euler approximation at step size $h$ from the same initial state. Assume, in addition to the standing assumptions, that $b$ is Lipschitz in $\bm{x}$ uniformly in time, with constant $L>0$, and that the exact path has bounded curvature, $\sup_{\tau}\|\ddot{\bm{x}}_\tau\| \le M_2$. The classical global error bound for Euler's method \citep{suli2003introduction} then gives, with exact initial data,
\begin{equation}\label{eq: app_euler_error}
    \|\tilde{\bm{x}}_t-\bm{x}_t\| \;\leq\; \frac{M_2 h}{2L}\Big(e^{Lt}-1\Big) \, ,
\end{equation}
which is increasing in $t$: the local error of each step is $O(h^2)$, and the flow amplifies errors already committed by at most $(1+hL)$ per step. Generative replay always pays \cref{eq: app_euler_error} at $t=1$, whereas trajectory replay pays it at a uniformly drawn anchor, so the two stand in the ratio
\begin{equation}\label{eq: app_error_ratio}
    \frac{\mathbb{E}_t\big[e^{Lt}-1\big]}{e^{L}-1}
    = \frac{(e^{L}-1)/L - 1}{e^{L}-1} \;<\; \frac{1}{2} \, ,
\end{equation}
which approaches $1/2$ from below as $L\to0$ and decays like $1/L$ for large $L$, since $e^{Lt}$ concentrates near the endpoint and a uniformly drawn anchor mostly avoids it. Stopping early therefore reduces the worst-case error in the replayed state, and by more when the field varies rapidly.

\paragraph{Where the error lands.} The two objectives also differ in what that error corrupts. Generative replay treats the simulated endpoint $\tilde{\bm{x}}_1$ as if it were a real historical sample, so the student is fit to a corrupted dataset: by \cref{sec: replay_as_mixture_modeling} its optimum reproduces the law of $\tilde{\bm{x}}_1$ rather than the historical density, and training longer or harder cannot recover the difference. Trajectory replay instead asks the frozen model what its velocity is at $\tilde{\bm{x}}_t$, and the answer is correct whatever $\tilde{\bm{x}}_t$ happens to be, since it is the \textit{teacher's own value} at that point. Solver error then only shifts where along the trajectory the student is supervised, never what it is taught there, so the teacher is still reproduced exactly on the states that are visited.

\paragraph{Caveats with trajectory replay.} The argument only covers the states the solver actually reaches. A solver that systematically avoids a region leaves the student unconstrained there, and with finite capacity the shift in which states are visited is not innocuous either, since supervising the wrong points trades approximation error between regions. \Cref{fig: app_fig2} and the matched-backend comparisons of \cref{sec:exp-image-cl} measure the combined effect of both mechanisms.

\subsection{Relation to feature replay}\label{sec: relation_feature_replay}
Trajectory replay is loosely connected with feature replay approaches, sharing a similar motivating diagnosis: replaying generated images is expensive, and errors in those images enter training as if they were data. \citet{liu2020generative} address this by splitting the network at a layer, freezing and distilling the feature extractor, and replaying intermediate features rather than images. A generator produces features for past classes, which are paired with their old labels and fed to the classifier. The gain is that feature distributions are lower-dimensional and far easier to model than pixels, so the generator can be accurate where an image generator would not be.

\textbf{A continuous-depth view.} The same structure appears in our setting. Under the neural ODE correspondence \citep{chen2018neural}, integration time is depth and a flow is a \textit{weight-tied residual network} \citep{he2016deep} whose block at depth $t$ is
\begin{equation}\label{eq: app_resnet_block}
    \bm{x}_{t+\Delta t} = \bm{x}_t + \Delta t\, v_t(\bm{x}_t) \, ,
\end{equation}
so $v_t$ is the residual branch and $\bm{x}_t$ an intermediate representation. Generative replay runs every block to obtain the output $\bm{x}_1$ and treats it as a sample. Trajectory replay runs a prefix and supervises the residual branch at the depth it reached. Two consequences follow. The expected prefix is half the depth, which is the accounting of \cref{sec: partial_replay_analysis}. And because the blocks share parameters, a single sampled $t$ trains the whole network, so no layer-wise schedule is needed.

\textbf{Where the two differ.} The resemblance stops at what is replayed. Feature replay generates a \textit{substitute for a real activation} and pairs it with an old label, so the generator's fidelity decides whether the training pair is correct, exactly as in standard generative replay. The requirement is moved to an easier space rather than removed. Trajectory replay generates no substitute for anything. The state $\bm{x}_t$ is a probe location, and the target is the frozen model's own velocity there, which is a valid input-output pair wherever the solver landed (\cref{sec: partial_replay_analysis}). The split also differs: feature replay's is architectural and fixed. Ours is temporal and resampled every step, so no layer is privileged.

\section{Hardware details}\label{app: hardware_details}
All experiments run on a Slurm cluster, one NVIDIA H100 80GB GPU per job.

\clearpage
\section{Algorithm}\label{sec: algorithms}
\begin{algorithm}[!ht]
\caption{Trajectory replay continual update at step $k$, ODE backend}
\label{alg: trajectory_replay_ode}
\KwIn{Frozen historical field $b^{\theta_{\le k}}$; incoming dataset $\mathcal{D}_{k+1}\sim q_{k+1}$; weight $\lambda_k$; teacher step size $h$; time cutoff $\varepsilon$; optimizer steps $T$}
\KwOut{The updated joint-task field $v^{\theta_{\le k+1}}$}
$\theta_{\le k+1}\leftarrow\texttt{copy}(\theta_{\le k})$ \tcp*{warm start; $b^{\theta_{\le k}}$ stays frozen}
\For{$1,\dots,T$}{
  $t\sim\mathcal{U}(\varepsilon,1]$\;
  \tcc{Historical branch: partial replay}
  $\bm{x}_0\sim p_0$\;
  $\bm{x}_t\leftarrow\textsc{ODESolve}(b^{\theta_{\le k}},\bm{x}_0,0\!\to\!t,h)$ \tcp*{$\lceil t/h\rceil$ evals}
  $\mathcal{L}_{\mathrm{hist}}\leftarrow\|v^{\theta_{\le k+1}}_t(\bm{x}_t)-b^{\theta_{\le k}}_t(\bm{x}_t)\|_2^2$ \tcp*{one more eval}
  \tcc{Incoming branch: conditional flow matching}
  $\bm{x}_1\sim\mathcal{D}_{k+1}$, $\bm{x}\sim p_{t|1}(\cdot\mid\bm{x}_1)$\;
  $\mathcal{L}_{\mathrm{inc}}\leftarrow\|v^{\theta_{\le k+1}}_t(\bm{x})-b_{t|1}(\bm{x}\mid\bm{x}_1)\|_2^2$\;
  $\mathcal{J}^{k+1}\leftarrow\lambda_k\mathcal{L}_{\mathrm{hist}}+(1-\lambda_k)\mathcal{L}_{\mathrm{inc}}$ \tcp*{\cref{eq: empirical_mixture_objective}}
  $\theta_{\le k+1}\leftarrow\textsc{OptStep}(\theta_{\le k+1},\nabla_{\theta_{\le k+1}}\mathcal{J}^{k+1})$\;
}
\Return $v^{\theta_{\le k+1}}$\;
\end{algorithm}

\begin{algorithm}[!ht]
\caption{Trajectory replay continual update at step $k$, MeanFlow backend}
\label{alg:trajectory_replay_meanflow}
\KwIn{Frozen field $b^{\theta_{\le k}}$ and frozen mean velocity $V^{\theta_{\le k}}$; incoming dataset $\mathcal{D}_{k+1}\sim q_{k+1}$; weight $\lambda_k$; replay jumps $N$; diagonal rate $p_{\mathrm{diag}}$; time law $\mathcal{T}$; time cutoff $\varepsilon$; optimizer steps $T$}
\KwOut{The updated joint-task field $v^{\theta_{\le k+1}}$ and mean velocity $V^{\theta_{\le k+1}}$}
$\theta_{\le k+1}=(\theta^{v},\theta^{V})\leftarrow\texttt{copy}(\theta_{\le k})$ \tcp*{warm start; teachers stay frozen}
\For{$1,\dots,T$}{
  $t\sim\mathcal{U}(\varepsilon,1]$ \tcp*{one shared time per step}
  \tcc{Few-step replay through the frozen flow map}
  $\bm{x}\sim p_0$, \quad $u_0\leftarrow 0$\;
  \For{$n=1,\dots,N$}{
    $u_n\leftarrow nt/N$\;
    $\bm{x}\leftarrow\bm{x}+(u_n-u_{n-1})\,V^{\theta_{\le k}}_{u_{n-1},u_n}(\bm{x})$\;
  }
  $\bm{x}_t\leftarrow\bm{x}$ \tcp*{$N$ evals, independent of $t$}
  \tcc{Velocity field}
  $\bm{x}_1\sim\mathcal{D}_{k+1}$, \quad $\bm{x}'\sim p_{t|1}(\cdot\mid\bm{x}_1)$\;
  $\mathcal{J}^{k+1}\leftarrow\lambda_k\|v^{\theta_{\le k+1}}_t(\bm{x}_t)-b^{\theta_{\le k}}_t(\bm{x}_t)\|_2^2+(1-\lambda_k)\|v^{\theta_{\le k+1}}_t(\bm{x}')-b_{t|1}(\bm{x}'\mid\bm{x}_1)\|_2^2$\;
  $\theta^{v}\leftarrow\textsc{OptStep}(\theta^{v},\nabla_{\theta^{v}}\mathcal{J}^{k+1})$\;
  \BlankLine
  \tcc{Mean velocity, carried to the next update}
  $s\leftarrow t$ \tcp*{reuses the shared time and the rollout $\bm{x}_t$}
  $\tau_{\mathrm{o}},\tau_{\mathrm{n}}\leftarrow s+(1-s)u$ with $u\sim\mathcal{T}$, each set to $s$ w.p.\ $p_{\mathrm{diag}}$\;
  $\bm{x}''\sim p_{s|1}(\cdot\mid\bm{x}_1)$\;
  $\mathcal{K}^{k+1}\leftarrow\lambda_k\|\mathcal{R}_{s,\tau_{\mathrm{o}}}(V^{\theta_{\le k+1}},b^{\theta_{\le k}})(\bm{x}_t)\|_2^2+(1-\lambda_k)\|\mathcal{R}_{s,\tau_{\mathrm{n}}}(V^{\theta_{\le k+1}},b_{\cdot|1}(\cdot\mid\bm{x}_1))(\bm{x}'')\|_2^2$\;
  $\theta^{V}\leftarrow\textsc{OptStep}(\theta^{V},\widetilde{G}_{\theta^{V}}[\mathcal{K}^{k+1}])$ \tcp*{semigradient, \cref{sec: semigradient}}
}
\Return $v^{\theta_{\le k+1}},\,V^{\theta_{\le k+1}}$\;
\end{algorithm}


\clearpage

\section{Experimental details}\label{sec: experimental_details_general}

\subsection{Controlled mixture recovery}\label{app:results-mixture-recovery}

\subsubsection{Experimental details}\label{app:results-mixture-recovery-exp}

\textbf{Data.} Each task is an isotropic Gaussian in $\mathbb{R}^d$ with $\sigma=0.35$, with mode means on the first coordinate at $\pm1.5$ and all remaining coordinates pure noise axes. Task~0 ($-1.5$) is the historical mode with density $q$, and task~1 ($+1.5$) the incoming one with density $q'$. We draw $10{,}000$ points per task at $d=2$ and $100{,}000$ at $d\in\{128,256\}$, with data seed $0$. At $d=2$ we sweep $\lambda\in\{0.1,0.3,0.5,0.7,0.9\}$ and, for each value, sample an independent reference set of the same size from the task mixture $\tilde{q}=\lambda q+(1-\lambda)q'$ of \cref{eq: mixture}, used only for evaluation. The higher-dimensional runs fix $\lambda=0.5$.

\textbf{Model.} The velocity field is an unconditional MLP of width $256$ and depth $4$, with a $64$-dimensional sinusoidal time embedding, trained on the linear interpolant $\bm{x}_t=(1-t)\bm{x}_0+t\bm{x}_1$ with target $\bm{x}_1-\bm{x}_0$ \citep{tong2024improving}. We sample using Euler with step size $0.05$ ($20$ evaluations). Given the task's simplicity, we did not ablate model capacity.

\textbf{Pretraining.} The task-0 model trains for $400$ epochs with batch size $256$ and Adam \citep{kingma2014adam} at learning rate $10^{-3}$, seed $0$. A single checkpoint is shared by every update run, so all differences across methods and seeds come from the update alone.

\textbf{Update.} Each update trains for $400$ epochs with the same optimizer settings, for seeds $\{0,1,2\}$ at every $\lambda$ in the grid. We compare naive fine-tuning, which continues training the historical checkpoint on incoming data alone with no replay term, against generative and trajectory replay, plus a joint-training reference fit directly on samples from $\tilde{q}$. Generative replay integrates the frozen teacher to $t=1$ and treats the samples as data, drawing a fraction $\lambda$ of every batch from the teacher and the remainder from the incoming task, then taking a \textit{single flow-matching loss} over the combined batch. Trajectory replay splits each batch evenly, draws one shared $t\sim\mathcal{U}(10^{-4},1]$ per step, integrates the teacher only to $t$, and matches the student field against the teacher field there, with $\lambda$ weighting the historical and incoming terms of the loss.

\textbf{Evaluation.} We generate $4{,}000$ samples per run with the same sampler and report two quantities. Generated historical mass $\hat{\pi}_0$ (\cref{fig:simulation-mixture}a) is the fraction of samples whose nearest mode mean is mode~0. Energy distance is computed between the generated samples and the held-out reference set, restricted to the active coordinate on which the mode means differ (\cref{fig: app_fig2}). All figures show three seeds.

\textbf{Sampling-budget comparison (\cref{fig: app_fig2}).} The teacher-cost curves use $d\in\{128,256\}$ at $\lambda=0.5$ (equal-task), training for $513$ epochs ($200{,}070$ optimizer steps) with evaluation every $10$ epochs on $2{,}000$ samples, for seeds $\{0,1,2\}$. Cumulative teacher evaluations follow analytically from the fixed Euler protocol: at $100{,}000$ points and batch size $256$, there are $390$ steps per epoch, generative replay spends $\lceil 1/0.05\rceil=20$ evaluations per step, and trajectory replay spends $\mathbb{E}[\lceil t/0.05\rceil]+1=11.5$, since $t$ is uniform on $(0,1]$, integrating only to an intermediate time plus one frozen-field evaluation at the anchor. Over the full run, this is $4.0\times10^{6}$ teacher evaluations for generative replay against $2.3\times10^{6}$ for trajectory replay.
\subsubsection{Additional results}\label{app:results-mixture-recovery-res}
We complement \cref{fig:simulation-mixture} with two additional views. \Cref{fig: app_fig1}a shows the mixture behavior at $\lambda\in\{0.1,0.5,0.9\}$, where generated samples track the prescribed weight across the full range. \Cref{fig: app_fig1}b contrasts this with naive fine-tuning, which collapses onto the incoming mode regardless of $\lambda$ and recovers no historical mass.

\Cref{fig: app_fig2} plots the energy distance against cumulative teacher evaluations rather than training steps, isolating the cost of replay from the cost of optimization. Trajectory replay reaches a given energy at a smaller budget in both dimensions, and the gap widens with $d$: at $d=128$ the two curves converge once the target is essentially matched, whereas at $d=256$ generative replay needs close to twice the budget to reach the level trajectory replay attains at $\approx10^6$ evaluations. 

Part of this follows from the accounting above: $11.5$ evaluations per step against $20$ for full generation. The rest is robustness to the solver, analyzed in \cref{sec: partial_replay_analysis}: discretization error displaces the replayed state but leaves the teacher's velocity there a valid target, whereas the same error corrupts a generated sample that is then fit as data. The matched-backend comparisons in \cref{sec:exp-image-cl} isolate this at equal replay cost.

\begin{figure}[!htbp]
\centering
\includegraphics[width=0.9\textwidth]{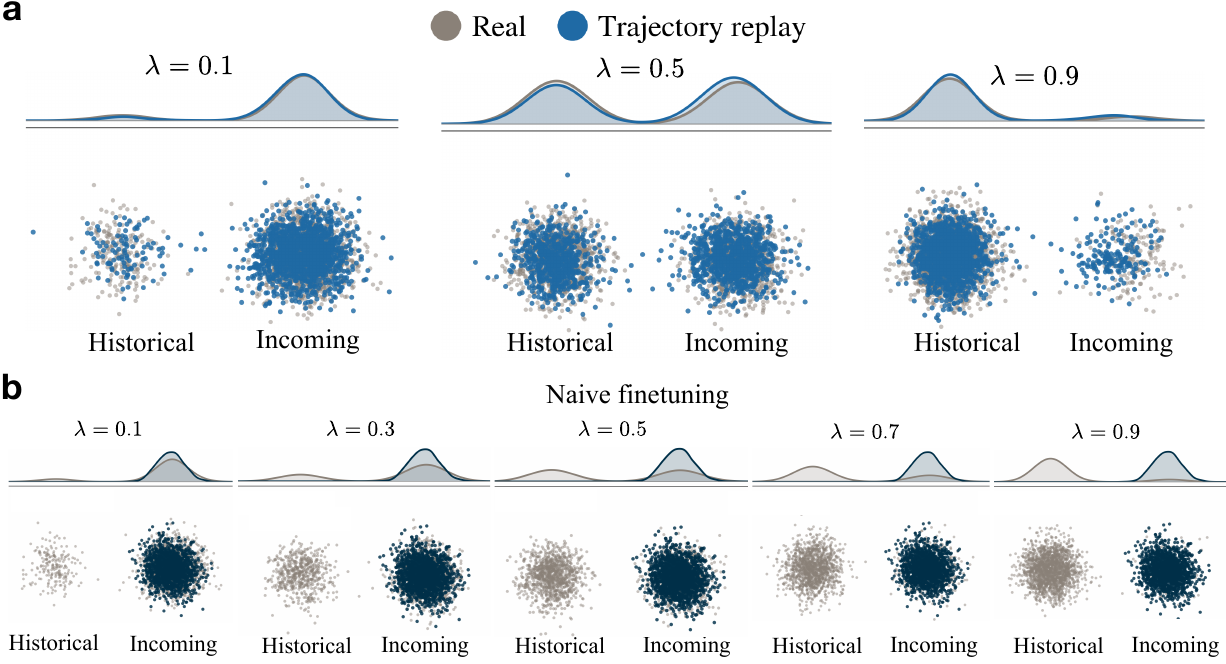}
\caption{\textbf{Two-mode mixture updates.}
(\textbf{a}) Samples and one-dimensional marginals from the real mixture and from trajectory replay after the update, for historical and incoming modes at $\lambda \in \{0.1, 0.5, 0.9\}$.
(\textbf{b}) Samples and marginals from the real mixture and from naive fine-tuning after the update, shown for $\lambda \in \{0.1, 0.3, 0.5, 0.7, 0.9\}$.}
\label{fig: app_fig1}
\end{figure}

\begin{figure}[!htbp]
\centering
\includegraphics[width=0.85\textwidth]{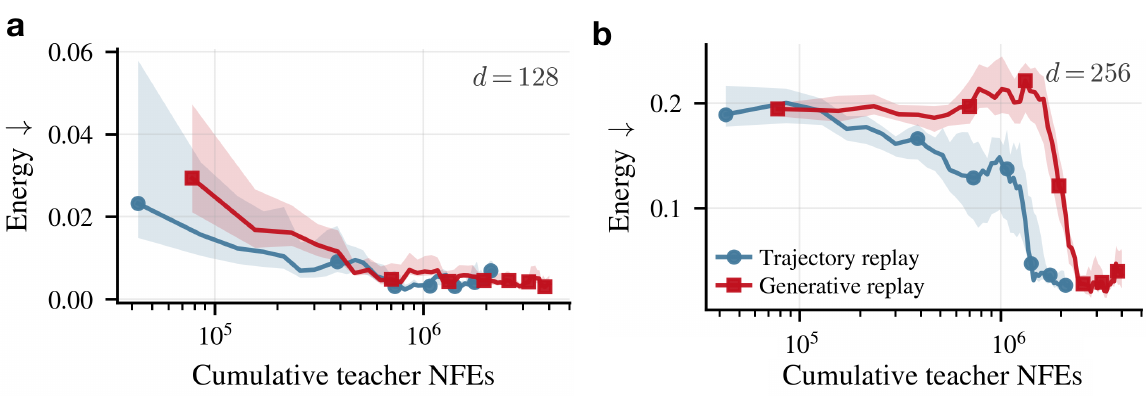}
\caption{\textbf{Energy distance versus teacher sampling budget.}
Energy distance to the target mixture, computed on the active coordinate that separates the modes, as a function of cumulative teacher NFEs for trajectory replay and generative replay, in (\textbf{a}) $d=128$ and (\textbf{b}) $d=256$. Lines show the mean and shaded bands the variability across three seeds. The $x$-axis is logarithmic and lower energy is better.}
\label{fig: app_fig2}
\end{figure}

\FloatBarrier
\clearpage
\subsection{Sequential updates on a ring-structured Gaussian mixture}\label{app:results-sequential-rings}

\subsubsection{Experimental details}\label{app:results-sequential-rings-details}

\textbf{Data.} Five isotropic Gaussians in $\mathbb{R}^2$ with $\sigma=0.35$ are placed on a ring of radius $3.0$, starting at the top and proceeding clockwise, giving a nearest-neighbor separation of $3.53$ between consecutive task means. Tasks arrive in that order, $T_0$ through $T_4$. Each task has independent train, validation, and test splits of $10{,}000$, $2{,}000$, and $5{,}000$ points, with data seed $0$. The balanced final target places $20\%$ of the mass on each task.

\textbf{Model.} The velocity field is the same unconditional MLP as \cref{app:results-mixture-recovery-exp}, of width $256$ and depth $4$ with a $64$-dimensional sinusoidal time embedding, trained on the linear interpolant with target $\bm{x}_1-\bm{x}_0$. We sample using Euler with step size $0.05$.

\textbf{Training.} Every task, including $T_0$, receives a budget of $20{,}000$ optimizer steps, rounded up to whole epochs at batch size $256$. Optimization uses Adam at $10^{-3}$ with gradient-norm clipping at $1.0$. The whole stream is repeated for seeds $\{0,1,2\}$, which vary initialization, minibatch order, and sampling noise.

\textbf{Replay policies.} We compare naive fine-tuning against generative and trajectory replay under three ways of setting the replay weight.
\begin{itemize}[leftmargin=*, itemindent=0pt]
  \item \emph{Constant.} $\lambda\in\{0.25,0.5,0.75,0.9\}$ held fixed across the stream.
  \item \emph{Equal-task.} $\lambda_k=k/(k+1)$ at update $k$, the parameter-free choice that keeps all seen tasks equally weighted.
  \item \emph{Adaptive.} The full update is run once for each candidate in $\{0.25,0.5,2/3,0.75,0.8,0.9\}$, and we keep the model whose $4{,}000$ generated samples minimize energy distance to a balanced validation mixture over all tasks seen so far, so the selection uses only validation data available at that point in the stream and no test data. The chosen model becomes the historical model for the next update, and the sweep is repeated there, which costs one full training run per candidate per update.
\end{itemize}
Running a policy to the end of the stream therefore fixes a target composition over tasks: equal-task yields the \textit{uniform target}, whereas a constant $\lambda$ discounts older tasks geometrically (\cref{sec: selection_of_lambda}). The generated task mass is scored against that target.

\textbf{Evaluation.} After every update, we generate $10{,}000$ samples and report two quantities against the test splits of the seen tasks. Energy distance is computed against the balanced mixture formed by concatenating equal-size test sets across seen modes. Generated task mass assigns each sample to its nearest task mean, adds a pseudocount of $0.5$ to keep the divergence finite when a task receives no samples, and is compared with the uniform target by forward categorical KL. All figures show three seeds.\looseness=-1

\FloatBarrier
\subsubsection{Additional results}
\textbf{Stream layout and final samples.} \Cref{fig: app_fig3}a shows the five-task ring layout and the order in which tasks arrive, and \cref{fig: app_fig3}b shows the samples drawn after the final update: naive fine-tuning retains only $T_4$, generative replay recovers all five modes with visibly uneven mass, and trajectory replay comes closest to the balanced target.

\textbf{Weighting policies.} \Cref{fig: app_fig4}a places every final model on the trade-off between retaining earlier tasks and fitting the newest one. The constant-$\lambda$ runs trace a clear frontier, from $\lambda=0.9$ retaining history at the cost of $T_4$ to $\lambda=0.25$ doing the reverse, with $\lambda=0.5$ and $0.75$ in between. Both scheduled policies sit near the uniform target, well inside the frontier that any single constant weight can reach, which is the practical case for scheduling $\lambda_k$ rather than tuning it once. \Cref{fig: app_fig4}b shows that the adaptive sweep selects weights closely tracking the equal-task rule $\lambda_k=k/(k+1)$ at every update, for both replay methods. This is expected, since the five modes carry equal mass and the balanced criterion is optimized by exactly the composition equal-task produces in closed form: the agreement validates the adaptive procedure rather than separating the two schedules, and explains why they perform comparably in \cref{fig: sequential_gaussian_simulation}b.\looseness=-1

\textbf{Stream evolution.} \Cref{fig: app_fig5} follows both metrics as the stream progresses rather than only at the end. Naive fine-tuning degrades within the first update and stays flat thereafter, since each update discards whatever the previous one retained. Both replay methods remain two to three orders of magnitude below it, and their curves separate progressively: generative replay drifts upward as tasks accumulate, on both energy distance and task-mass divergence, whereas trajectory replay stays roughly level. This is the compounding effect that follows from \cref{sec: partial_replay_analysis}: simulation error in the replayed samples enters the next historical model and is inherited by every subsequent update.

\begin{figure}[!htbp]
\centering
\includegraphics[width=0.8\textwidth]{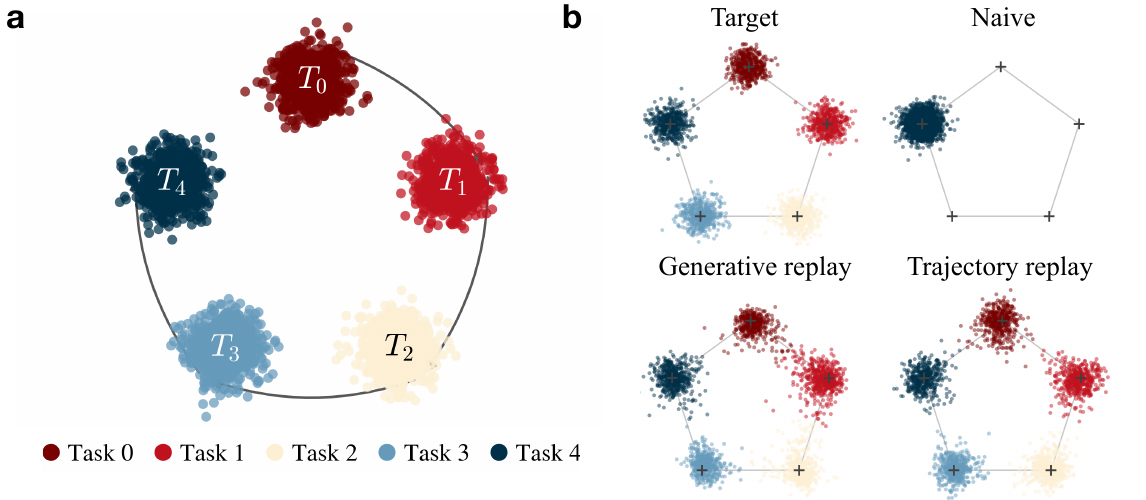}
\caption{\textbf{Five-task Gaussian ring stream.}
(\textbf{a}) The five task distributions $T_0,\dots,T_4$ placed on a ring, with the arrow indicating the order in which they are introduced.
(\textbf{b}) Samples drawn after the final update for the balanced target and for naive fine-tuning, generative replay, and trajectory replay. Crosses mark the task centers, and colors follow the task legend in (\textbf{a}).}
\label{fig: app_fig3}
\end{figure}

\begin{figure}[!htbp]
\centering
\includegraphics[width=\textwidth]{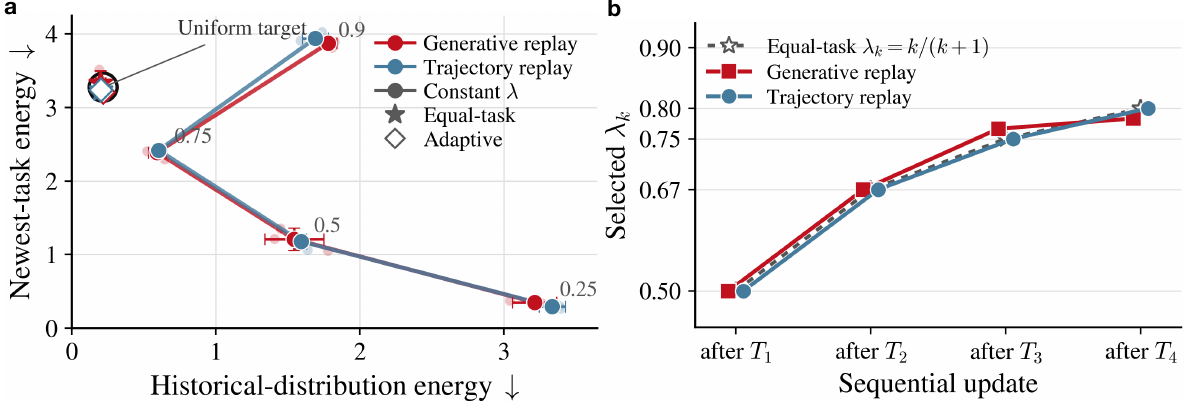}
\caption{\textbf{Replay-weight policies on the five-task stream.}
(\textbf{a}) Trade-off between retaining earlier tasks and fitting the newest one after the final update: each marker is one final model, positioned by its energy distance to the historical task distributions ($x$) and to the newest task ($y$), so points closer to the origin are better on both axes. Colors distinguish generative from trajectory replay, marker shapes distinguish constant-$\lambda$, equal-task, and adaptive policies, and the number next to each constant-$\lambda$ marker is its $\lambda$ value. The circled point near the origin marks the uniform target.
(\textbf{b}) Replay weight $\lambda_k$ selected at each sequential update for adaptive generative and trajectory replay, with the equal-task rule $\lambda_k=k/(k+1)$ shown as a dashed reference. Error bars show variability across seeds.}
\label{fig: app_fig4}
\end{figure}

\begin{figure}[!t]
\centering
\includegraphics[width=0.80\textwidth]{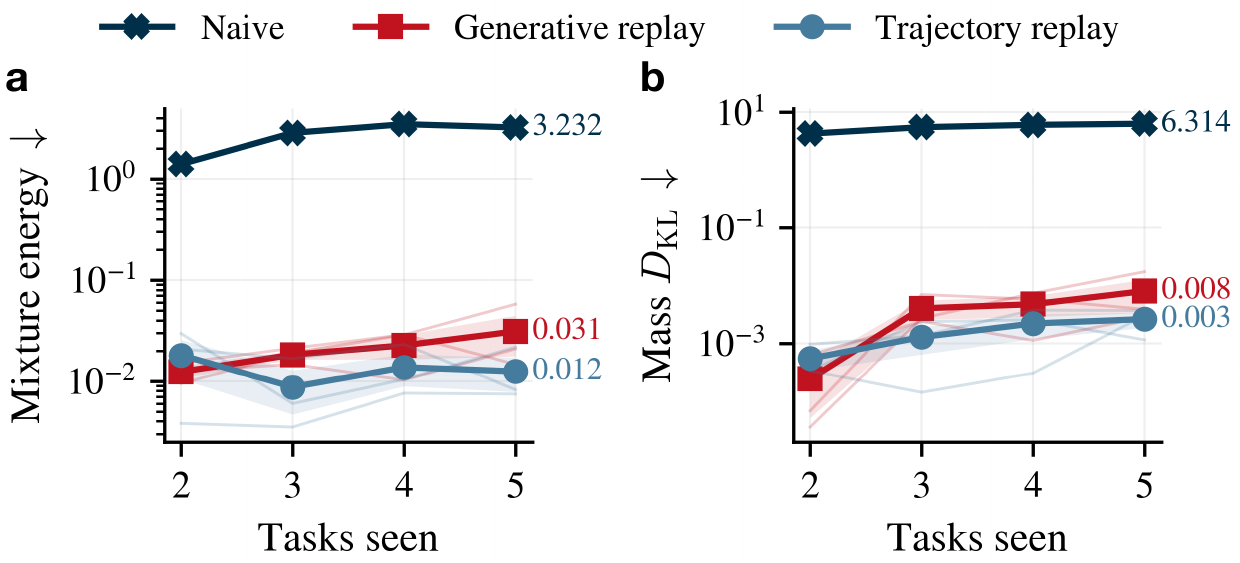}
\caption{\textbf{Stream evolution on the five-task ring benchmark.}
(\textbf{a}) Energy distance to the balanced mixture and (\textbf{b}) KL divergence of the generated task-mass composition from the uniform target, both as a function of the number of tasks seen, for naive fine-tuning, generative replay, and trajectory replay. Bold lines show the mean and faint lines three individual seeds. Numbers at the right give the final values, both axes are logarithmic, and lower is better.}
\label{fig: app_fig5}
\end{figure}

\FloatBarrier
\subsection{Class-incremental image generation}
\label{sec:exp-image-cl}

\subsubsection{Experimental details}\label{sec:exp-image-cl-details}

\textbf{Streams.} Each dataset is split into disjoint class groups arriving sequentially, with class orders drawn per seed via the Avalanche package \citep{carta2023avalanche} so that the stream is identical across methods. We run seeds $\{42,69,1714\}$ (same as \citet{masip2025continual}), each fixing the class order and training randomness. Split Fashion-MNIST and Split CIFAR-10 give five tasks of two classes each and share the same partition at a given seed. Split ImageNet-100 gives three tasks of $34/33/33$ classes. Within every task, a deterministic $10\%$ of the training split is held out for validation, and the official test split is reserved for reporting.

\textbf{Inputs.} Fashion-MNIST is resized to $32{\times}32$ single-channel and CIFAR-10 to $32{\times}32$ RGB, both normalized to $[-1,1]$, with train-time horizontal flips on CIFAR-10 only. ImageNet-100 is modeled in latent space: images are resized to $256{\times}256$ and encoded once with a frozen \texttt{stabilityai/sd-vae-ft-mse} autoencoder\footnote{\url{https://huggingface.co/stabilityai/sd-vae-ft-mse}} into $32{\times}32{\times}4$ scaled latents, cached to disk, with train-time flips. Latents are decoded to pixels only for evaluation and sample grids.

\textbf{Model.} The velocity field is a Diffusers UNet2D \citep{von-platen-etal-2022-diffusers} on the $32{\times}32$ state, trained on the linear noise-to-data interpolant with target $\bm{x}_1-\bm{x}_0$. Fashion-MNIST and CIFAR-10 reuse the medium configuration of \citet{masip2025continual}, with block widths $(128,256,256,256)$ and attention at one resolution ($35.8$M parameters), so our numbers are directly comparable to their diffusion baselines.

ImageNet-100 was not considered by \citet{masip2025continual}, so we selected its backbone and budget with a task-0 pilot crossing the medium configuration against a large latent-diffusion-style one, of widths $(128,256,512,512)$ and attention at the two lowest resolutions ($106.4$M parameters), over optimizer budgets of $\{10,20,50,100,200\}$k steps, picking the lowest FID on the task-0 classes (\cref{fig: app_fig10}). This selects the large backbone at $100$k steps. The same pilot crossed conditional against unconditional variants, and we model ImageNet-100 \textit{class-conditionally} since the unconditional runs were clearly worse at this scale, with label dropout $0.1$ and classifier-free guidance \citep{zheng2023guided} $1.5$ at sampling time.\looseness=-1

\begin{figure}[!htbp]
\centering
\includegraphics[width=0.55\textwidth]{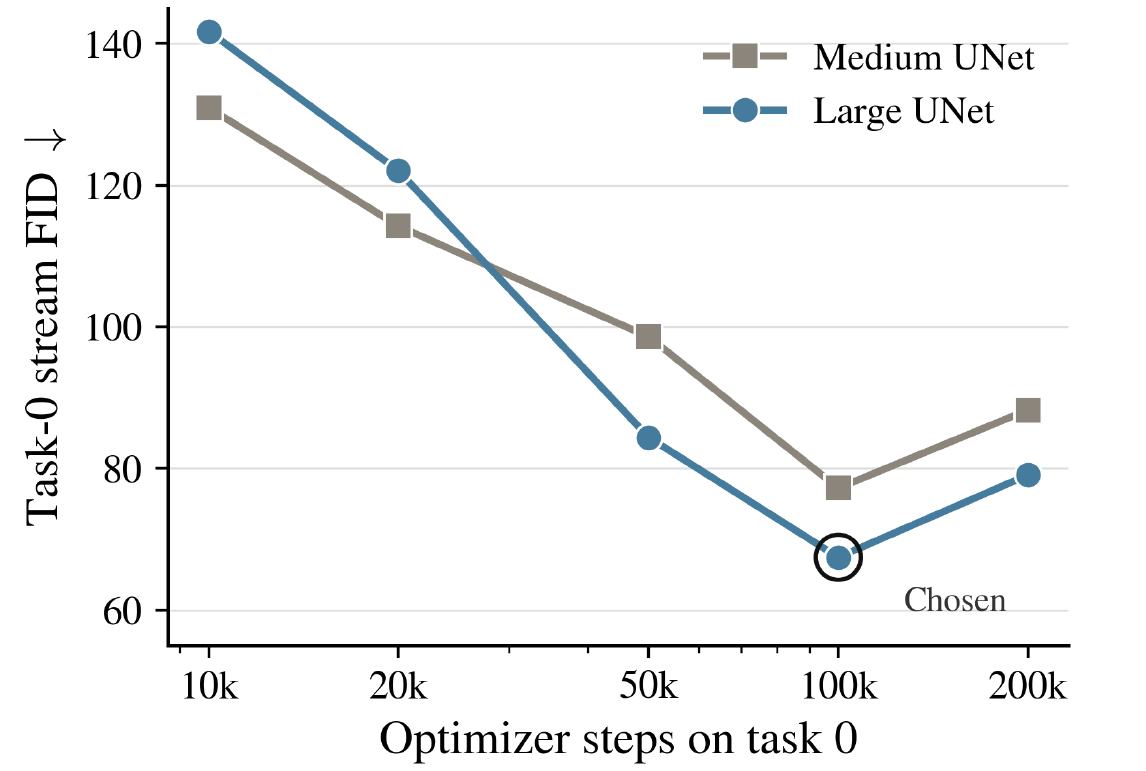}
\caption{\textbf{Task-0 budget and backbone selection on ImageNet-100.}
FID on the task-0 classes as a function of the number of optimizer steps on task~0, for two latent UNet backbones: the medium model with block widths $(128,256,256,256)$ and attention at one resolution ($35.8$M parameters), and the large LDM-style model with widths $(128,256,512,512)$ and attention at the two lowest resolutions ($106.4$M parameters). The circled point marks the configuration chosen for the continual-learning experiments; lower is better.}
\label{fig: app_fig10}
\end{figure}

\textbf{Training.} Every task is trained with Adam under the per-dataset budget in \cref{tab:image_cl_hparams}. Joint training reuses real data from all tasks seen so far: Fashion-MNIST and CIFAR-10 hold the epoch count fixed as the union grows, whereas ImageNet-100 holds the optimizer-step count fixed, so per-task compute does not grow with the stream.

\textbf{Replay.} Both families share the stream and budget and differ only in what the historical term regresses onto.
\begin{itemize}[leftmargin=*, itemindent=0pt]
  \item \emph{Generative replay} draws a fraction $\lambda$ of each batch from the frozen previous-task model and trains a single flow matching loss on the combined batch.
  \item \emph{Trajectory replay} splits each batch evenly, samples one shared time per step, integrates the teacher only to that time, and matches the student field against the teacher field there, with $\lambda$ weighting the historical and incoming terms.
\end{itemize}
On ImageNet-100, both condition the historical branch on labels cycled \textit{uniformly} over all classes seen so far, and generate without guidance, unlike evaluation.

The equal-task policy sets $\lambda_k=k/(k+1)$, and the adaptive policy selects $\lambda$ from the per-dataset grid by validation FID on $2{,}048$ samples. The ODE backend integrates the teacher with Euler at the replay step size in \cref{tab:image_cl_hparams}. The MeanFlow backend replaces it with a distilled mean-velocity model using $N\in\{1,4\}$ jumps, distilling task~0 post hoc at learning rate $10^{-4}$ with gradient clipping $1.0$ for the per-dataset epochs in the table, then co-training the field and the mean velocity on later tasks with logit-normal time pairs, as in \citet{geng2026mean} (see \cref{alg:trajectory_replay_meanflow}). 

\textbf{Adaptive loss weighting.} Task~0 distillation also uses adaptive loss weighting with the exponent in \cref{tab:image_cl_hparams}, but sequential updates do not: rescaling each branch by its own error would change their relative weight away from $\lambda$ and $1-\lambda$, and the fixed point of \cref{prop: mean_flow_minimizer} would no longer be the mixture mean velocity.

\textbf{Baselines.} We report a series of reference points, all on the same stream, per-task budget, backbone, and evaluation protocol as our methods, following the scheme of \citet{masip2025continual}.
\begin{itemize}[leftmargin=*, itemindent=0pt]
  \item \emph{Joint training} retrains on the union of all real data seen so far, storing every past sample, and is therefore an upper target rather than a competing method.
  \item \emph{Naive fine-tuning} continues training on the incoming task alone with no retention mechanism, giving the lower target. Since the baselines below use a different generative objective, we report it for both, as \emph{diffusion naive fine-tuning} and \emph{flow matching naive fine-tuning}, separating the gain attributable to replay from that attributable to the objective.
  \item \emph{Generative replay (diffusion)} is the diffusion counterpart of generative replay above.
  \item \emph{Generative distillation (diffusion)} \citep{masip2025continual} also replays from the frozen previous-task model, but supervises with the teacher's own noise prediction at a randomly drawn timestep rather than with the injected noise, so the student matches the teacher's denoising behavior along the whole trajectory instead of only reproducing its final samples. This is the strongest published no-storage continuous-time baseline on these streams.
 \item \emph{LoRA} \citep{hu2022lora} and \emph{gradual fine-tuning} (GFT) \citep{thorkelsdottir2026gradual} are adapter and regularization baselines, reported separately in \cref{tab:image_cl_lora_gft}. LoRA adapts the velocity field with rank $r\in\{8,16,32\}$ and $\alpha=2r$, and GFT self-distills from the frozen previous model with a distillation weight $\beta$ annealed from $1$ to $0$, under sigmoid and linear schedules.
\end{itemize}

\textbf{Evaluation.} After the final task we generate $10{,}000$ samples and report stream FID in Inception feature space \citep{szegedy2016rethinking} against pooled held-out images from all seen classes. Fashion-MNIST and CIFAR-10 are modeled unconditionally, so class composition is emergent and forgetting shows up as mass collapsing onto recent classes. Class-mass recovery (KLD) compares the classifier-predicted histogram against the target composition, using the pretrained convolutional neural network (CNN) classifiers released with the diffusion baselines \citep{masip2025continual}, and isolates retention from the sample quality captured by FID. For ImageNet-100, labels are drawn from the target composition at sampling time, so the histogram matches by construction, and we report FID alone. All numbers are means with standard errors over seeds.

\begin{table}[!ht]
\centering
\definecolor{grouprow}{HTML}{F0F0F0}
\setlength{\tabcolsep}{4.5pt}
\small
\caption{Per-dataset configuration for the class-incremental image experiments. Sampler and replay steps are Euler step sizes, with the resulting number of function evaluations in parentheses.}
\label{tab:image_cl_hparams}
\resizebox{\textwidth}{!}{%
\begin{tabular}{@{}lccc@{}}
\toprule
& Fashion-MNIST & CIFAR-10 & ImageNet-100 \\
\midrule
\rowcolor{grouprow} \multicolumn{4}{@{}l}{\textit{Data}} \\
State & $1{\times}32{\times}32$ & $3{\times}32{\times}32$ & $4{\times}32{\times}32$ latent \\
Tasks $\times$ classes & $5\times2$ & $5\times2$ & $3\times(34/33/33)$ \\
Horizontal flips & no & yes & yes \\
\midrule
\rowcolor{grouprow} \multicolumn{4}{@{}l}{\textit{Model}} \\
UNet & medium & medium & large \\
Parameters & $35.8$M & $35.8$M & $106.4$M \\
Class-conditional & no & no & yes (CFG $1.5$) \\
\midrule
\rowcolor{grouprow} \multicolumn{4}{@{}l}{\textit{Optimization}} \\
Epochs / task & $100$ & $200$ & $667$ \\
Batch size & $128$ & $64$ & $256$ \\
Learning rate & $2\times10^{-4}$ & $2\times10^{-4}$ & $1\times10^{-4}$ \\
Gradient clip & none & none & $1.0$ \\
Joint schedule & fixed epochs & fixed epochs & fixed steps \\
\midrule
\rowcolor{grouprow} \multicolumn{4}{@{}l}{\textit{Replay and sampling}} \\
Sampler step (NFE) & $0.1$ ($10$) & $0.05$ ($20$) & $0.02$ ($50$) \\
Replay step (NFE) & $0.1$ ($10$) & $0.05$ ($20$) & $0.033$ ($30$) \\
$\lambda$ grid & $\{0.1,0.25,0.5,0.75,0.9\}$ & $\{0.25,0.5,0.75\}$ & $\{0.25,0.5,0.75\}$ \\
MeanFlow $(\mu,\sigma)$ \citep{geng2026mean} & $(2.0,2.0)$ & $(2.0,2.0)$ & $(0.4,1.0)$ \\
MeanFlow diagonal rate \citep{geng2026mean} & $0.25$ & $0.25$ & $0.75$ \\
MeanFlow adaptive loss weight ($T_0$ only) \citep{geng2026mean} & $0.75$ & $0.75$ & $1.0$ \\
MeanFlow distillation epochs & $100$ & $200$ & $150$ \\
\midrule
\rowcolor{grouprow} \multicolumn{4}{@{}l}{\textit{Evaluation}} \\
Validation samples & $2{,}048$ & $2{,}048$ & $2{,}048$ \\
Test samples & $10{,}000$ & $10{,}000$ & $10{,}000$ \\
Reported metrics & FID, KLD & FID, KLD & FID \\
\bottomrule
\end{tabular}}
\end{table}

\FloatBarrier
\subsubsection{Additional results}

We report wall-time gains, degradation across the stream, the weights the adaptive policy selects, the quality-budget trade-off, comparisons with adapter and regularization baselines, when the MeanFlow backend beats the ODE teacher, and qualitative samples.

\textbf{Runtime comparison.} \Cref{fig: app_fig24} compares the two replay families over the full CL pipeline under equal-task $\lambda$. Trajectory replay is faster on every stream, by $1.22\times$ on Fashion-MNIST, $1.51\times$ on CIFAR-10, and $1.65\times$ on ImageNet-100. The margin grows with the cost of the teacher: integrating to an intermediate time saves a fraction of the trajectory fixed by $\mathbb{E}[t]$, so the absolute saving scales with how expensive a full generation is, from $0.2$h on Fashion-MNIST to $25.5$h on ImageNet-100. The ratio is capped at a factor of two in \cref{sec: partial_replay_analysis}, but the hours saved keep growing.

\begin{figure}[!htbp]
\centering
\includegraphics[width=0.8\textwidth]{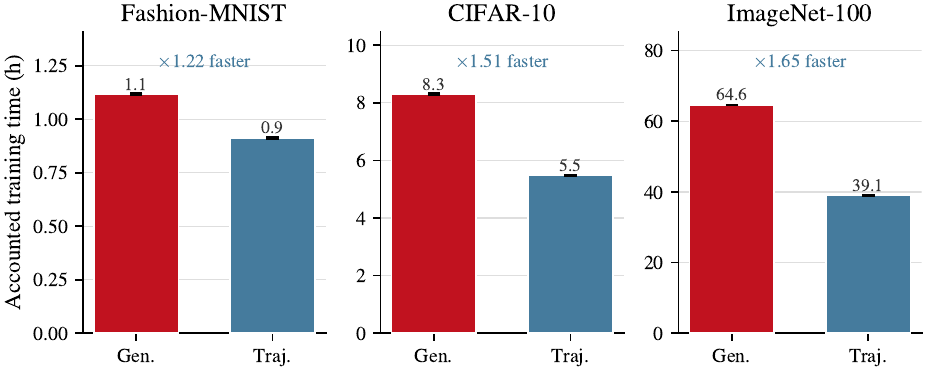}
\caption{\textbf{Training wall-time comparison}. Full-CL-pipeline comparison of trajectory and generative replay in terms of accounted wall time across datasets. The two methods are compared in the equal-task $\lambda$ setting.}
\label{fig: app_fig24}
\end{figure}

\FloatBarrier
\textbf{Stream evolution.} \Cref{fig: app_fig6} tracks both FID and KLD metrics across the stream rather than only at the final task. Naive fine-tuning worsens monotonically everywhere. The replay families diverge in how they respond to the weighting policy. Generative replay under equal-task weighting degrades with each update, ending far above its adaptive counterpart on Fashion-MNIST and CIFAR-10. Trajectory replay is nearly flat under both. It therefore reaches comparable quality with the parameter-free schedule, which matters where a validation sweep at every update is the dominant cost. 

\Cref{fig: app_fig7} shows the resulting compositions: naive fine-tuning retains almost no mass on early classes, and trajectory replay is closest to the target histogram on both datasets, though its KLD remains higher than joint training (\cref{tab:image_cl_all}) and every method still favors recent classes. Unbiased retention across a long stream is not solved by reweighting alone and remains for \textit{future work}.

\begin{figure}[!htbp]
\centering
\includegraphics[width=0.90\textwidth]{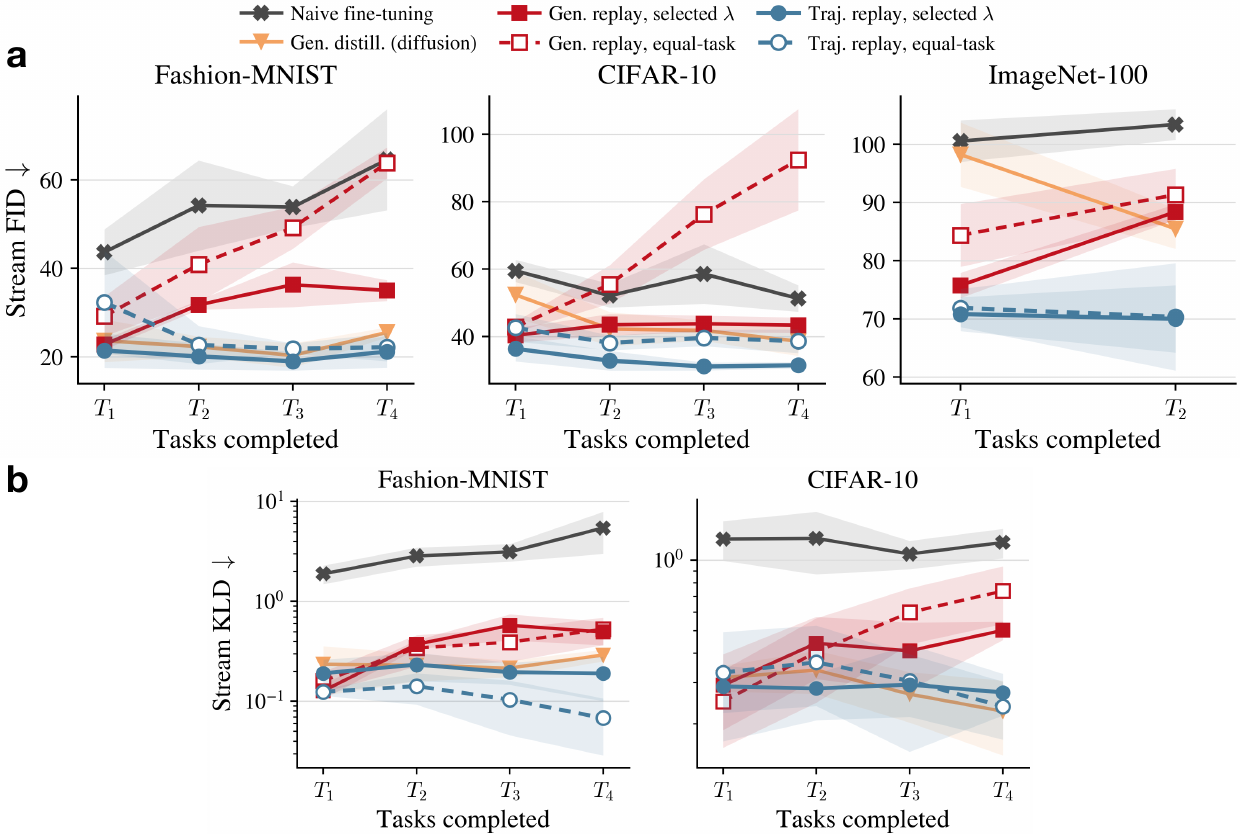}
\caption{\textbf{Class-incremental image streams.}
(\textbf{a}) Stream FID and (\textbf{b}) stream KLD as a function of tasks completed, for naive fine-tuning, diffusion distillation, and generative and trajectory replay under selected and equal-task $\lambda$. FID is shown for Fashion-MNIST, CIFAR-10, and ImageNet-100. KLD for Fashion-MNIST and CIFAR-10. Lines show the mean and shaded bands the variability across three seeds. Filled markers with solid lines denote selected $\lambda$ and open markers with dashed lines equal-task $\lambda$, and lower is better on both metrics.}
\label{fig: app_fig6}
\end{figure}

\begin{figure}[!htbp]
\centering
\includegraphics[width=0.9\textwidth]{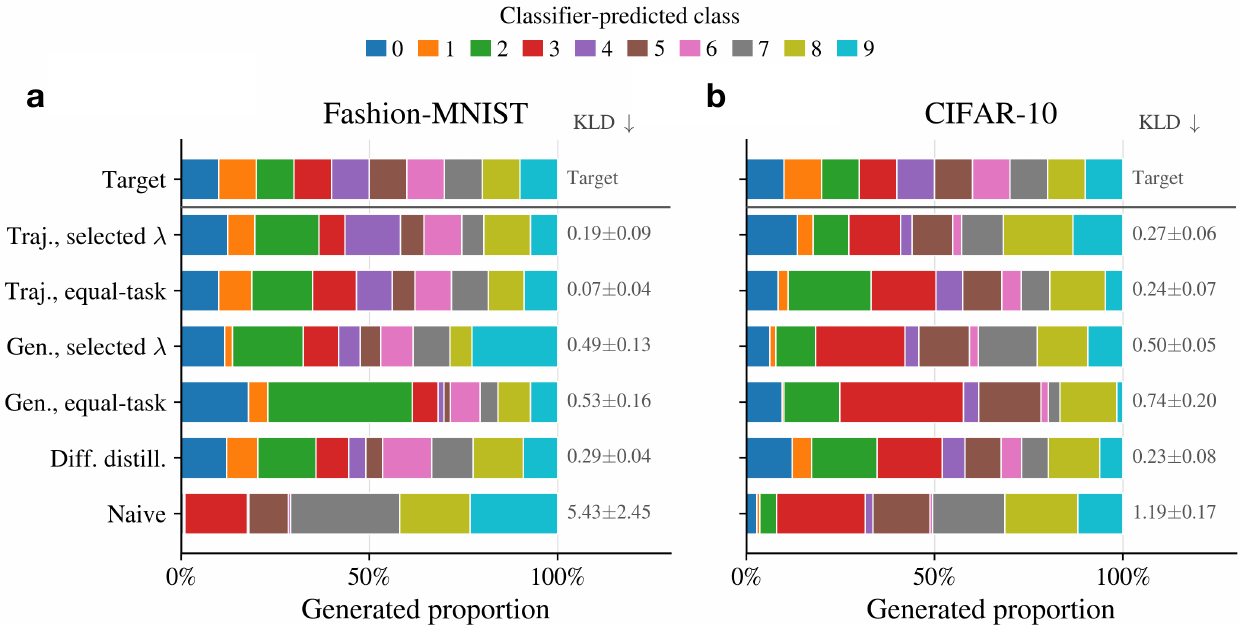}
\caption{\textbf{Class composition of generated samples after the final task.}
Stacked bars give the proportion of generated samples assigned to each class by the evaluation classifier, for (\textbf{a}) Fashion-MNIST and (\textbf{b}) CIFAR-10. Rows list the uniform target and each method: trajectory and generative replay under selected and equal-task $\lambda$, diffusion distillation, and naive fine-tuning. Numbers at the right report stream KLD from the target composition (mean $\pm$ SE over three seeds). Lower is better.}
\label{fig: app_fig7}
\end{figure}
\FloatBarrier

\textbf{Selected replay weights.} \Cref{fig: app_fig8} reports which weights the adaptive $\lambda$-selection policy chooses. The criterion scores against a balanced reference, so the uniform composition that equal-task produces is what it rewards. On Fashion-MNIST and CIFAR-10, trajectory replay tracks that rule, rising with the number of tasks seen, whereas generative replay settles well below it and non-monotonically. Since $\lambda$ sets how much mass the update assigns to history, the criterion appears to discount generative replay's own samples, whose fidelity is low enough that weighting them as heavily as real tasks costs more than the retention it buys. Trajectory replay faces no such trade-off, as its historical term regresses onto the frozen field rather than onto generated data. The ImageNet-100 panel runs the other way, though with only two updates and wide error bars.

\begin{figure}[!htbp]
\centering
\includegraphics[width=1\textwidth]{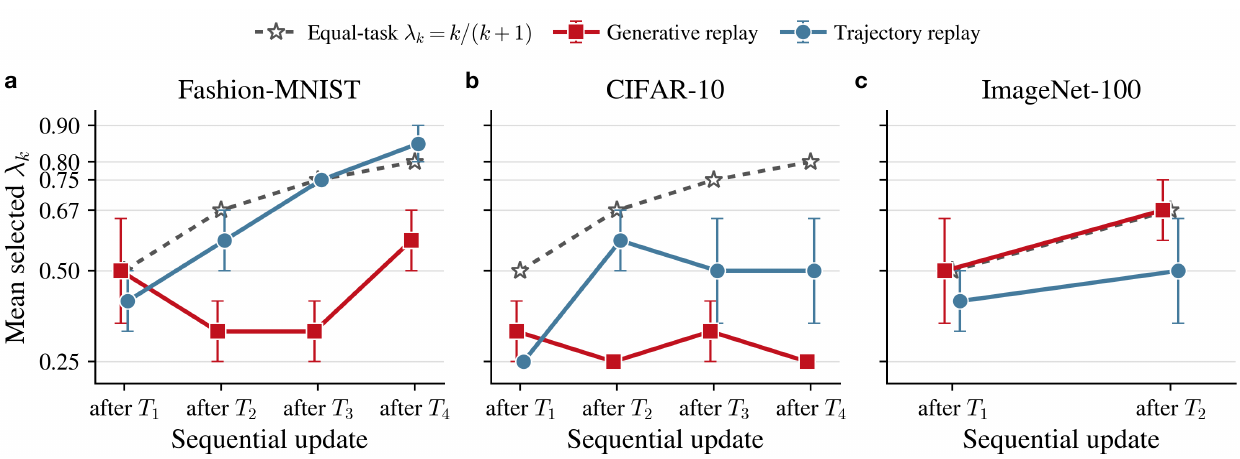}
\caption{\textbf{Selected replay weights across the image streams.}
Mean replay weight $\lambda_k$ chosen at each sequential update for generative and trajectory replay on (\textbf{a}) Fashion-MNIST, (\textbf{b}) CIFAR-10, and (\textbf{c}) ImageNet-100, with the equal-task rule $\lambda_k=k/(k+1)$ shown as a dashed reference. Error bars show variability across three seeds.}
\label{fig: app_fig8}
\end{figure}
\FloatBarrier

\textbf{Does the MeanFlow backend cost quality?} \Cref{fig: app_fig9} asks what replacing the ODE teacher with a distilled mean velocity gives up and what it saves. Panel a normalizes final FID by the matched ODE run at each backend's replay-field evaluations per update. Trajectory replay sits at parity across the range, so MF-1 and MF-4 \textit{lose essentially nothing} relative to full integration and one or two evaluations suffice. Generative replay instead starts well above parity at the smallest budget and only recovers as evaluations accumulate, as expected: a replay sample must be integrated to $t=1$ before it can serve as data, whereas trajectory replay supervises at an intermediate time where a coarse trajectory is still informative. Panel b prices the same endpoint in accounted training time: the wall-clock cost of an update including replay sampling and, for the MeanFlow backends, the amortized cost of distilling the teacher. The gain is dataset-dependent: on Fashion-MNIST and CIFAR-10 the MeanFlow backends save little because the ODE teacher is already cheap, whereas on ImageNet-100 they reach comparable FID at a fraction of the time.

\begin{figure}[!htbp]
\centering
\includegraphics[width=1\textwidth]{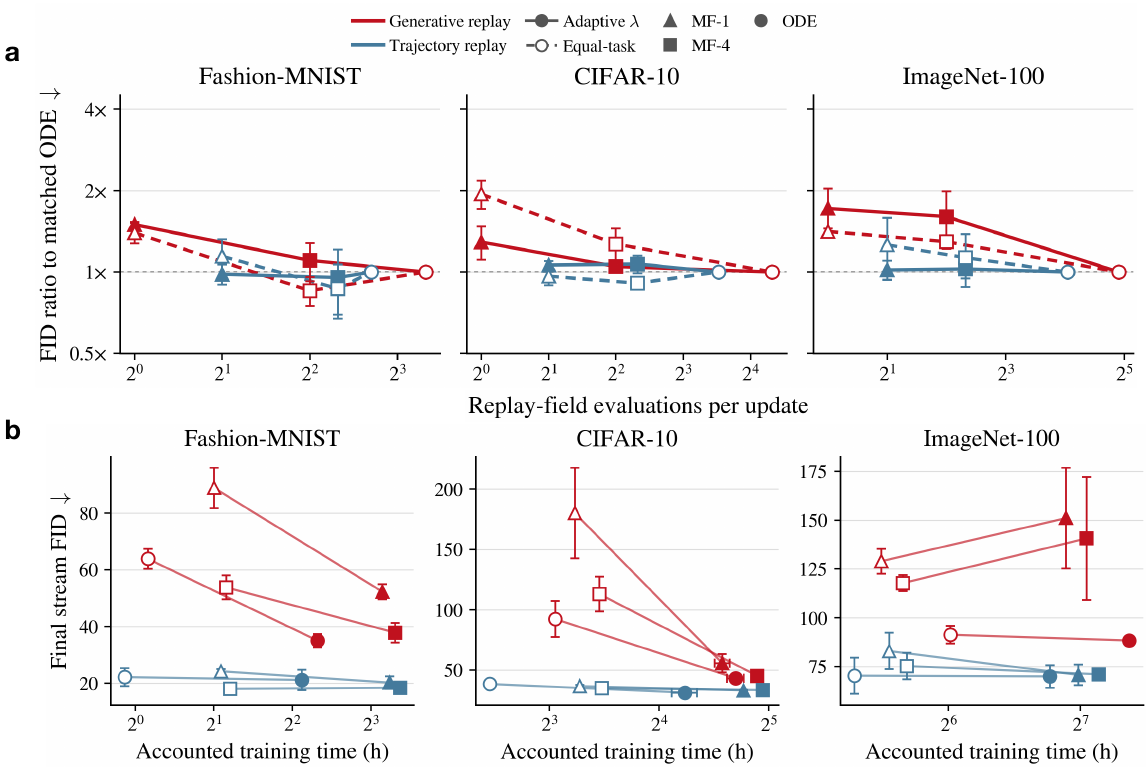}
\caption{\textbf{Replay backend cost and quality on the image streams.}
(\textbf{a}) Final stream FID relative to the matched ODE run as a function of replay-field evaluations per update, and (\textbf{b}) final stream FID against accounted training time, for Fashion-MNIST, CIFAR-10, and ImageNet-100. Colors distinguish generative from trajectory replay, marker shapes distinguish the MF-1, MF-4, and ODE backends, and filled markers with solid lines denote adaptive $\lambda$ while open markers with dashed lines denote equal-task $\lambda$. The dotted line in (\textbf{a}) marks parity with the matched ODE run. Error bars show variability across three seeds, $x$-axes are logarithmic, and lower is better.}
\label{fig: app_fig9}
\end{figure}
\FloatBarrier

\textbf{Parameter-efficient and regularization baselines.} \Cref{tab:image_cl_lora_gft} reports LoRA and gradual fine-tuning (GFT) under the same protocol. Neither is competitive: the best configuration reaches $65.1$, $50.0$, and $104.6$ FID against $18.0$, $31.4$, and $70.0$ for trajectory replay, with class-mass divergences up to two orders of magnitude larger. Capacity does not help, as raising the LoRA rank from $8$ to $32$ moves FID by at most $5.2$ points, and the two $\beta$ schedules are indistinguishable. LoRA bounds how far the velocity field may move, and gradual fine-tuning anchors it to the previous model only transiently, annealing that anchor away, so neither retains a target for the historical tasks by the end of the update, mirroring the transcriptomics streams (\cref{tab:replogle_cl_full,tab:pbmc_cl_full}).

\textbf{When the mean velocity pays off.} \Cref{fig: app_fig14} sweeps the number of Euler integration steps used by the ODE teacher on Fashion-MNIST, from $5$ to $100$. Final FID is flat beyond roughly ten steps (panel a), so a coarse teacher loses nothing here, while the teacher's amortized cost per update grows with the step count and the MeanFlow teachers cost a \textit{fixed amount set by co-training} rather than by the number of replay steps (panel b).

The two therefore cross at a moderate teacher budget: on Fashion-MNIST, where ten steps already saturate quality, the crossing sits above the operating regime, and the ODE backend is cheaper, whereas on ImageNet-100 (\cref{fig: app_fig9}b) the teacher is expensive enough that the MeanFlow backends match FID at a fraction of the cost. The mean velocity is thus not uniformly cheaper: it removes the dependence of replay cost on teacher fidelity, and pays off where that fidelity is expensive.

\begin{figure}[!htbp]
\centering
\includegraphics[width=0.8\textwidth]{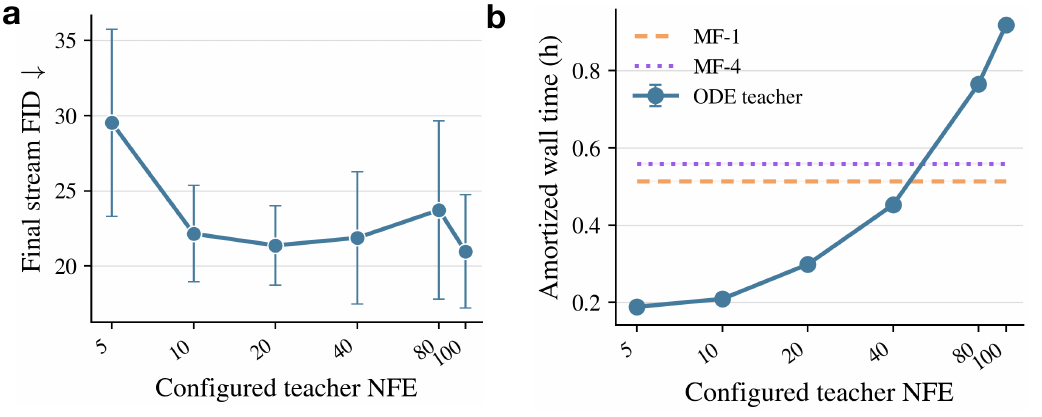}
\caption{\textbf{Teacher NFE sweep on Fashion-MNIST.}
(\textbf{a}) Final stream FID and (\textbf{b}) amortized wall time per update as a function of the configured teacher NFE for the ODE teacher. Dashed and dotted horizontal lines in (\textbf{b}) mark the corresponding cost of the MF-1 and MF-4 teachers. Error bars show variability across three seeds and the $x$-axis is logarithmic.}
\label{fig: app_fig14}
\end{figure}

\begin{table*}[t]
\centering
\definecolor{trajectoryrow}{HTML}{E5EEF3}
\setlength{\tabcolsep}{4.5pt}
\small
\caption{Memory-free baselines for class-incremental image generation.
Final-stream means with standard errors over three seeds after five tasks
(Fashion-MNIST and CIFAR-10) or three tasks (ImageNet-100). LoRA uses
$\alpha=2r$; GFT decays the teacher-regularization weight $\beta$ from 1 to 0.
Blue rows show trajectory-replay references (adaptive or equal-task $\lambda$;
MF-$N$ uses $N$ replay steps). Lower is better.}
\label{tab:image_cl_lora_gft}
\resizebox{1\textwidth}{!}{%
\begin{tabular}{@{}lccccc@{}}
\toprule
Method & \multicolumn{2}{c}{Fashion-MNIST} & \multicolumn{2}{c}{CIFAR-10} & ImageNet-100 \\
\cmidrule(lr){2-3}\cmidrule(lr){4-5}\cmidrule(l){6-6}
& $\downarrow$ FID & $\downarrow$ KLD & $\downarrow$ FID & $\downarrow$ KLD & $\downarrow$ FID \\
\midrule
LoRA ($r{=}8$, $\alpha{=}16$) & $68.4$ {\scriptsize $(\pm\,11.1)$} & $4.508$ {\scriptsize $(\pm\,1.956)$} & $50.9$ {\scriptsize $(\pm\,3.4)$} & $1.006$ {\scriptsize $(\pm\,0.114)$} & $109.8$ {\scriptsize $(\pm\,0.5)$} \\
LoRA ($r{=}16$, $\alpha{=}32$) & $67.3$ {\scriptsize $(\pm\,10.9)$} & $5.129$ {\scriptsize $(\pm\,2.154)$} & $50.2$ {\scriptsize $(\pm\,3.2)$} & $1.055$ {\scriptsize $(\pm\,0.123)$} & $107.6$ {\scriptsize $(\pm\,0.9)$} \\
LoRA ($r{=}32$, $\alpha{=}64$) & $66.2$ {\scriptsize $(\pm\,11.2)$} & $5.107$ {\scriptsize $(\pm\,2.146)$} & $50.0$ {\scriptsize $(\pm\,3.0)$} & $1.069$ {\scriptsize $(\pm\,0.115)$} & $104.6$ {\scriptsize $(\pm\,2.3)$} \\
\midrule
GFT (sigmoid $\beta$ schedule) & $65.1$ {\scriptsize $(\pm\,12.0)$} & $4.264$ {\scriptsize $(\pm\,1.309)$} & $50.7$ {\scriptsize $(\pm\,4.0)$} & $1.138$ {\scriptsize $(\pm\,0.171)$} & $119.5$ {\scriptsize $(\pm\,2.7)$} \\
GFT (linear $\beta$ schedule) & $65.8$ {\scriptsize $(\pm\,12.8)$} & $4.873$ {\scriptsize $(\pm\,2.188)$} & $51.0$ {\scriptsize $(\pm\,4.3)$} & $1.101$ {\scriptsize $(\pm\,0.176)$} & $119.1$ {\scriptsize $(\pm\,3.4)$} \\
\midrule
\rowcolor{trajectoryrow} Trajectory replay (ODE, adaptive $\lambda$) & $21.1$ {\scriptsize $(\pm\,3.6)$} & $0.190$ {\scriptsize $(\pm\,0.090)$} & $31.4$ {\scriptsize $(\pm\,0.9)$} & $0.272$ {\scriptsize $(\pm\,0.055)$} & $70.0$ {\scriptsize $(\pm\,5.8)$} \\
\rowcolor{trajectoryrow} Trajectory replay (ODE, equal-task $\lambda$) & $22.1$ {\scriptsize $(\pm\,3.2)$} & $0.068$ {\scriptsize $(\pm\,0.039)$} & $38.6$ {\scriptsize $(\pm\,3.5)$} & $0.237$ {\scriptsize $(\pm\,0.066)$} & $70.4$ {\scriptsize $(\pm\,9.2)$} \\
\rowcolor{trajectoryrow} Trajectory replay (MF-1, adaptive $\lambda$) & $20.2$ {\scriptsize $(\pm\,2.2)$} & $0.140$ {\scriptsize $(\pm\,0.008)$} & $33.3$ {\scriptsize $(\pm\,0.5)$} & $0.332$ {\scriptsize $(\pm\,0.066)$} & $70.7$ {\scriptsize $(\pm\,5.3)$} \\
\rowcolor{trajectoryrow} Trajectory replay (MF-1, equal-task $\lambda$) & $24.2$ {\scriptsize $(\pm\,0.8)$} & $0.065$ {\scriptsize $(\pm\,0.027)$} & $36.8$ {\scriptsize $(\pm\,1.2)$} & $0.237$ {\scriptsize $(\pm\,0.089)$} & $83.0$ {\scriptsize $(\pm\,9.3)$} \\
\rowcolor{trajectoryrow} Trajectory replay (MF-4, adaptive $\lambda$) & $18.4$ {\scriptsize $(\pm\,1.5)$} & $0.120$ {\scriptsize $(\pm\,0.023)$} & $33.5$ {\scriptsize $(\pm\,1.7)$} & $0.349$ {\scriptsize $(\pm\,0.056)$} & $71.1$ {\scriptsize $(\pm\,2.7)$} \\
\rowcolor{trajectoryrow} Trajectory replay (MF-4, equal-task $\lambda$) & $18.0$ {\scriptsize $(\pm\,1.7)$} & $0.090$ {\scriptsize $(\pm\,0.030)$} & $35.1$ {\scriptsize $(\pm\,3.1)$} & $0.211$ {\scriptsize $(\pm\,0.023)$} & $75.3$ {\scriptsize $(\pm\,6.8)$} \\
\bottomrule
\end{tabular}%
}
\end{table*}

\FloatBarrier

\textbf{Qualitative samples.} \Cref{fig:app_fig11,fig: app_fig12,fig: app_fig13} show held-out and generated samples for all three streams, where trajectory replay retains recognizable class structure on early tasks that generative replay renders with visible degradation, especially on ImageNet-100.
\FloatBarrier

\begin{figure}[!htbp]
\centering
\includegraphics[width=0.7\textwidth]{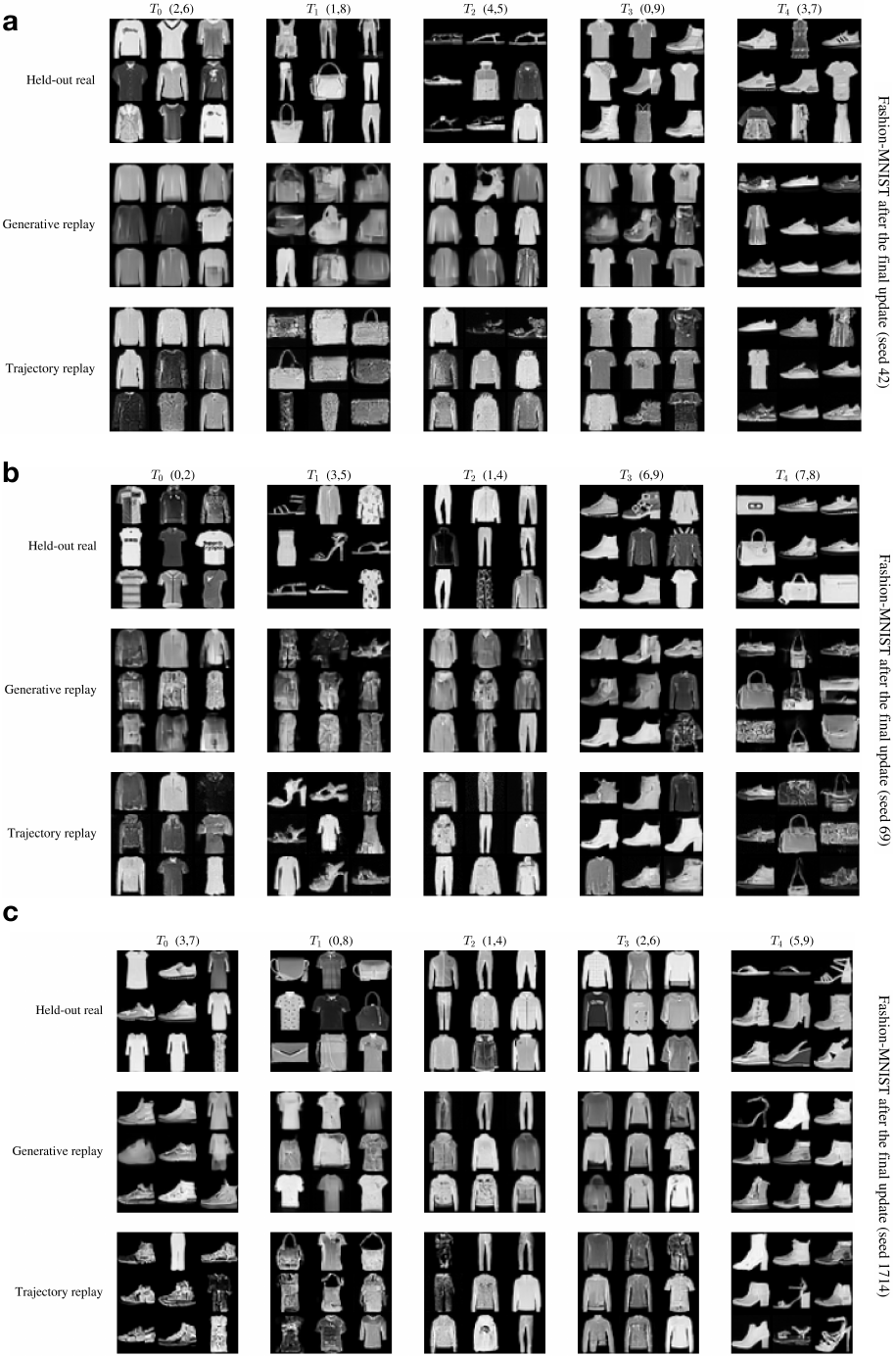}
\caption{\textbf{Fashion-MNIST samples after the final update.}
Held-out real images and samples from generative and trajectory replay, shown for seeds (\textbf{a}) 42, (\textbf{b}) 69, and (\textbf{c}) 1714. Columns correspond to the tasks of that seed's class stream, with the task index and its two classes given above each block. Rows within a block share the same task.}
\label{fig:app_fig11}
\end{figure}

\begin{figure}[!htbp]
\centering
\includegraphics[width=0.85\textwidth]{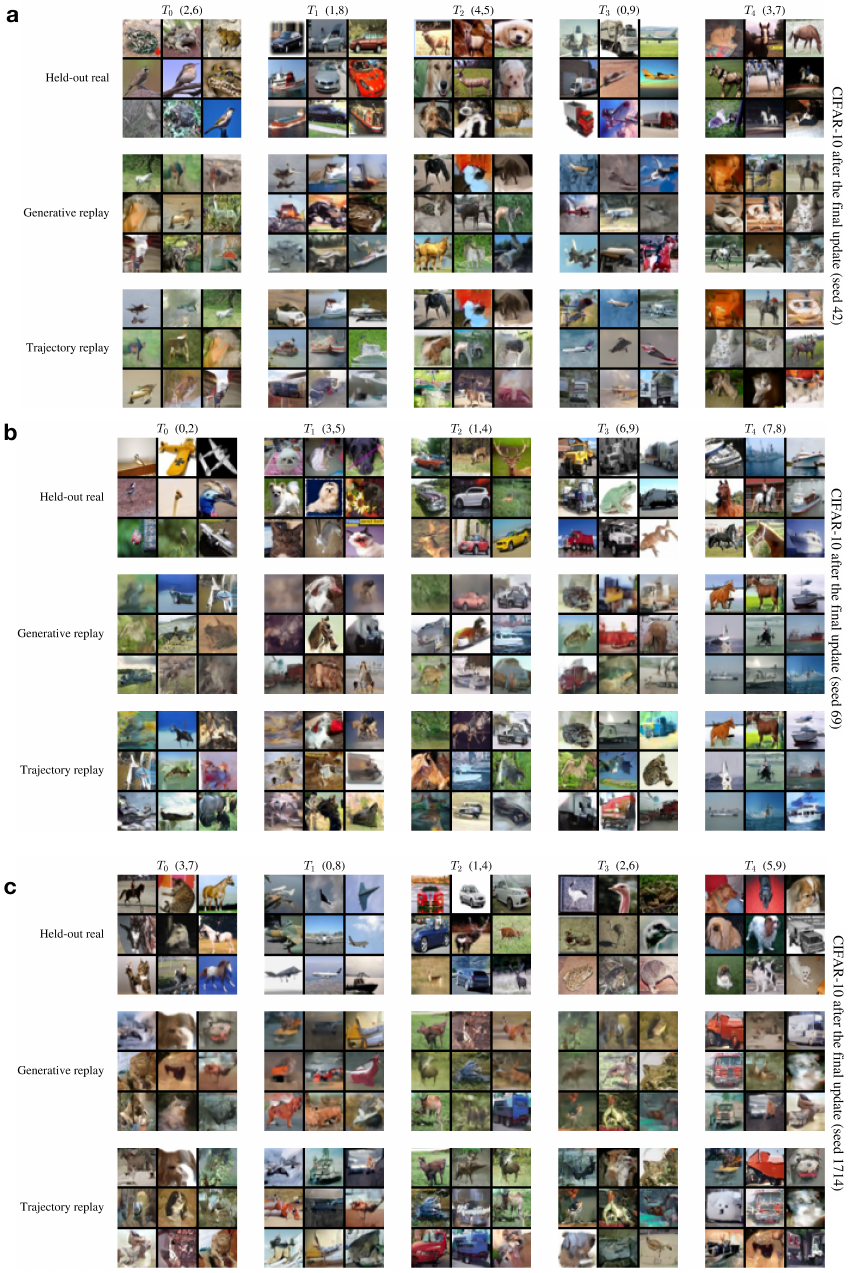}
\caption{\textbf{CIFAR-10 samples after the final update.}
Held-out real images and samples from generative and trajectory replay, shown for seeds (\textbf{a}) 42, (\textbf{b}) 69, and (\textbf{c}) 1714. Columns correspond to the tasks of that seed's class stream, with the task index and its two classes given above each block. Rows within a block share the same task.}
\label{fig: app_fig12}
\end{figure}

\begin{figure}[!htbp]
\centering
\includegraphics[width=0.5\textwidth]{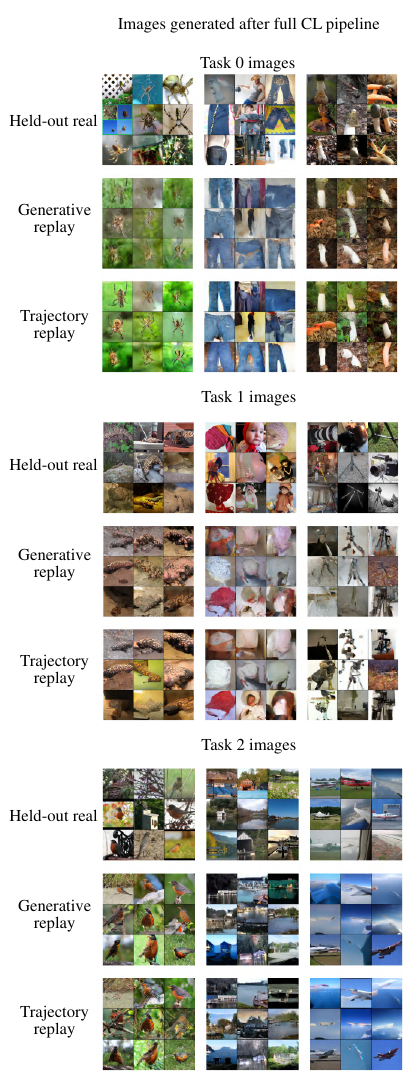}
\caption{\textbf{ImageNet-100 samples after the final update.}
Held-out real images and samples from generative and trajectory replay, grouped by the three tasks of the stream. Within each group, rows share the same source and columns show three classes drawn from that task.}
\label{fig: app_fig13}
\end{figure}

\clearpage
\subsection{Synthetic perturbation shift generalization}
\label{app:synthetic-shift-gen}
\subsubsection{Experimental details}

\textbf{Task.} Data arrive as pairs $(c,y)$ of a context $c$ and a perturbation $y$, and a generative model approximates $p(\bm{x}\mid c,y)$, mirroring \cref{sec:results-transcriptomics} in a controlled setting where the ground truth is known. Contexts shift the Gaussian mean, and perturbations add a fixed offset shared across contexts, so perturbation effects are transferable by construction. The update introduces a new context but withholds one perturbation within it, leaving a single combination $(c^\star,y^\star)$ never observed in either phase. Recovering it requires (\textbf{I}) retaining the perturbation direction learned on historical contexts and (\textbf{II}) fitting the basal position of the new one, so held-out error reads out the balance between the two rather than either alone.

\textbf{Data.} We build a $6\times4$ grid of conditions in $\mathbb{R}^2$: six contexts $c$ placed on a hexagon of radius $4$, and four perturbations $y$ applied as an additive offset of scale $1.5$ shared across contexts, with $y=0$ the unperturbed \textit{control}. Each cell $(c,y)$ is an isotropic Gaussian with $\sigma=0.25$, providing $800$ training and $200$ validation points, and the single held-out cell provides $1{,}000$ test points, with data seed $0$.

\textbf{Stream.} Pretraining uses contexts $0$--$4$ with all four perturbations. The update introduces context $5$ with perturbations $0$--$2$ only, so the held-out combination is $(c^\star,y^\star)=(5,3)$.

\textbf{Model.} The velocity field is a conditional MLP of width $256$ and depth $4$, with a $64$-dimensional sinusoidal time embedding and $32$-dimensional learned embeddings for context and perturbation, trained on the linear noise-to-data interpolant. Sampling uses Euler at step size $0.05$ ($20$ evaluations).

\textbf{Pretraining.} The context-$0$--$4$ model trains for $1{,}000$ epochs with batch size $1024$ and Adam at $3\times10^{-4}$, with one checkpoint per seed.

\textbf{Update.} Each update trains for $2{,}000$ epochs with the same optimizer settings, for seeds $\{10,20,30,40,50\}$ at every $\lambda$ in the grid, each loading the pretraining checkpoint of its own seed. We compare naive fine-tuning against generative and trajectory replay, sweeping $\lambda\in\{0.05,0.1,0.25,0.5,0.75,0.9,0.95\}$. Replay conditions are drawn from the pretraining support, so no data from the held-out combination enters training at any point.

\textbf{Evaluation.} After the update, we generate $500$ samples conditioned on $(c^\star,y^\star)$ and report energy distance to that cell's held-out test set, with the naive fine-tuning value indicated for reference. Lower is better.

\subsubsection{Results}

\Cref{fig: app_fig20}a shows the layout: contexts separate cleanly, and the shared perturbation offsets give each context the same internal geometry, so the held-out combination has a well-defined target location.

\Cref{fig: app_fig20}b traces held-out energy distance as a function of $\lambda$. Both replay methods trace a U, with a minimum near $\lambda=0.5$ that is more than an order of magnitude below naive fine-tuning, which has no mechanism for retaining the perturbation direction. The shape is the point: held-out error rises at both extremes, so the unseen combination is recovered neither by staying close to the historical model nor by fitting the incoming context alone, but only where the update balances the two. The two replay methods are indistinguishable at the optimum and separate only at high $\lambda$, where trajectory replay degrades faster.\looseness=-1

\Cref{fig: app_fig20}c shows the balance directly: the generated population coincides with the held-out target at $\lambda=0.5$ and is displaced from it at either extreme. Recovering an unseen combination therefore requires fitting the incoming context without discarding the transferable structure carried from the historical ones.\looseness=-1

\begin{figure}[!htbp]
\centering
\includegraphics[width=0.8\textwidth]{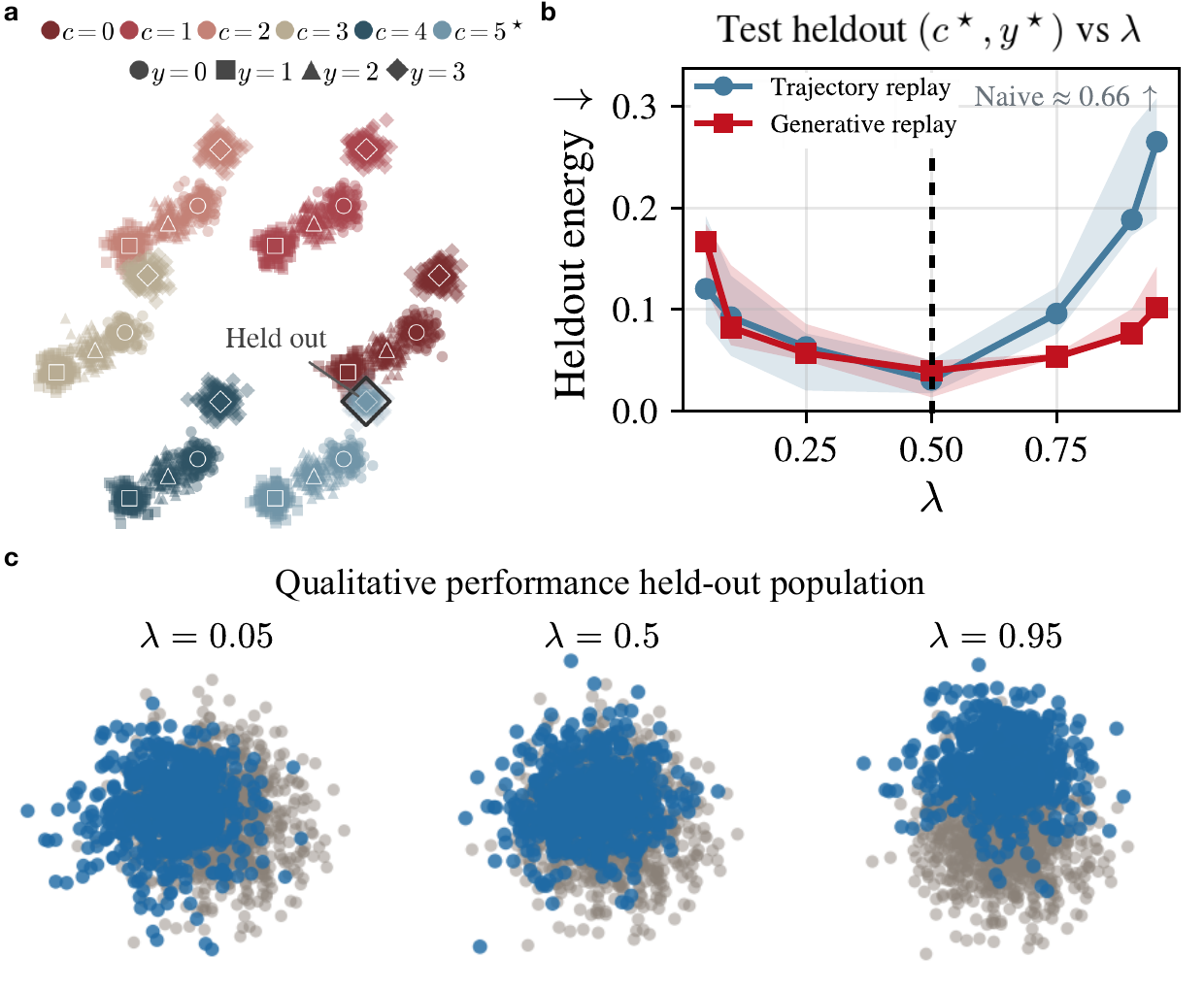}
\caption{\textbf{Synthetic perturbation-shift generalization.}
(\textbf{a}) Layout of the synthetic data, where color encodes the context $c$ and marker shape the perturbation $y$. The held-out combination $(c^\star, y^\star)$ is marked separately.
(\textbf{b}) Energy distance on the held-out combination as a function of the replay weight $\lambda$ for generative and trajectory replay, with the dashed vertical line at $\lambda=0.5$ and the naive fine-tuning value indicated at the top right. Lines show the mean and shaded bands the variability across seeds, and lower is better.
(\textbf{c}) Samples generated by trajectory replay for the held-out combination (blue) against the observed held-out cells (grey), at $\lambda\in\{0.05,0.5,0.95\}$. The two clouds coincide at $\lambda=0.5$, whereas both extremes displace the generated population away from its target, toward the incoming context at low $\lambda$ and toward the historical one at high $\lambda$.}
\label{fig: app_fig20}
\end{figure}

\FloatBarrier
\subsection{Application: perturbation modeling in single-cell transcriptomics}\label{app: perturbation_tx}
\FloatBarrier
\subsubsection{Experimental details}\label{app: perturbation_tx_details}
\textbf{Datasets.} We use two perturbation datasets whose context variable is a biological or technical batch rather than a semantic class. Throughout, a \textit{stratum} is one context-by-perturbation combination, the unit at which conditional distributions are modeled and at which every metric below is computed. Exact values for every setting are in \cref{tab:tx_hparams}.

\begin{itemize}[leftmargin=*, itemindent=0pt]
  \item \emph{Replogle-Nadig} \citep{Replogle2022, Nadig2025}. Genome-scale CRISPRi screens in which the context is the cell line (Jurkat, K562, RPE1, HepG2) and the perturbation is the targeted gene, $2{,}023$ of them plus a non-targeting control.
  \item \emph{PBMC cytokine screen} \citep{Oesinghaus2025.12.12.693897}. Cytokine stimulations in which the context is the donor, twelve in total, and the perturbation is the applied cytokine, $90$ of them plus a PBS (phosphate-buffered saline) control. Cells additionally carry one of $18$ annotated cell types, used as a covariate rather than as a context.
  \item \emph{Preprocessing.} Following \citet{Adduri2026}, both are restricted to the $2{,}000$ highly variable genes, library-size normalized to $10^4$ counts and $\log(1{+}x)$ transformed, with no further scaling.
  \item \emph{Splits.} Split labels follow the few-shot definitions of \citet{Adduri2026}: cells in non-few-shot contexts are training data, while in few-shot contexts the perturbations on that context's test list become held-out test cells and the remainder fine-tuning data. Controls stay trainable everywhere.
  \item \emph{Latent space.} To avoid leakage, flow matching operates on a zero-centered PCA basis fit only on task-0 training cells, so later contexts never influence the coordinate system. We use $50$ components on Replogle and $100$ on PBMC, retaining $34.2\%$ and $58.4\%$ of the variance (see \cref{tab:tx_architecture_selection}).
  \item \emph{Decoding.} Latents return to gene space through a per-task decoder $D_k$ composing the tied task-0 PCA inverse with a per-gene affine calibration $(a_{k,g},b_{k,g})$ fit by scalar least squares on that task's training cells, so that $D_k(\bm{z})=\mathbf{A}_k(\mathbf{W}^{\!\top}\bm{z}+\bm{\mu})+\bm{b}_k$ with $\mathbf{A}_k$ diagonal. A single tied inverse cannot absorb the per-task shift in gene scale, which the calibration corrects, but being diagonal, it cannot recover variance discarded by the truncation to $d$ components: it fixes gene scale, not the subspace. The bank is fit once per dataset, frozen, and shared by every method and seed, so no part of the reported gap can come from a better decoder. A qualitative reconstruction display on the pooled later-task datasets is in \cref{fig: app_fig19}.
\end{itemize}

\textbf{Streams.} Replogle is a two-task stream: task~0 trains jointly on Jurkat, K562, and RPE1, and task~1 introduces HepG2. PBMC is a five-task stream over eight pretraining donors followed by Donor1, Donor4, Donor9, and Donor12. The perturbation vocabulary is shared across contexts, so each task adds a context row rather than a new label space. The model conditions on context and perturbation embeddings, plus a cell-type covariate on PBMC. Task~0 is trained once per seed and shared by every method.

\textbf{Model and training.} The velocity field is an MLP over PCA latents with $128$-dimensional sinusoidal time and condition embeddings concatenated to the input, trained on the linear noise-to-data interpolant with target $\bm{x}_1-\bm{x}_0$. Replogle uses a flat SiLU MLP of width $2048$ and depth $4$, and PBMC uses a pre-normalized residual MLP of width $2048$ and depth $6$. Both were fixed by the two-stage selection study in \cref{tab:tx_architecture_selection}.
\begin{itemize}[leftmargin=*, itemindent=0pt]
  \item \emph{PCA width} (panel a), from single-seed pilots at a fixed $1024\times4$ backbone. The fit falls monotonically with width on both datasets, giving $50$ components on Replogle, while on PBMC $50$ and $100$ are within a thousandth of each other, and we take $100$ following \citet{Klein2025}.
  \item \emph{Backbone} (panel b), sweeping width, depth, and residual connections under the final sequential protocol and selecting on task-0 held-out $\Delta$. The flat MLP wins on Replogle and the smaller residual model on PBMC.
\end{itemize}
Both selection studies use seed $42$ alone, and no method is tuned against them afterwards. Every method uses Adam at $10^{-4}$ with batch size $1024$, under the budgets in \cref{tab:tx_hparams}, for seeds $\{42,69,1714\}$.

\textbf{Replay.} Replay follows the same formulation as in the image experiments, with the teacher integrated by Euler at step size $0.05$ and equal-task weighting setting $\lambda_k=k/(k+1)$. Two things differ here. Replay conditions are \textit{sampled proportionally} to the stratum frequencies of the old training pool, since the conditioning space is a context-by-perturbation grid rather than a class label. The adaptive policy selects from $\lambda\in\{0.25,0.5,0.75\}$ by a balanced Pearson $\Delta$, the unweighted mean of the old-task and new-task stratum macros, scored on the training strata of every task seen so far with $K=200$ generated cells per stratum. The MeanFlow backends distill the task-0 teacher with logit-normal time sampling ($\mu=0.4$, $\sigma=1.0$) and diagonal fraction $0.25$, under the same step budget as task-0 training itself (\cref{tab:tx_hparams}), then replay with $N\in\{1,4\}$ jumps.

\begin{table}[!ht]
\centering
\definecolor{grouprow}{HTML}{F0F0F0}
\setlength{\tabcolsep}{4.5pt}
\small
\caption{Configuration for the continual transcriptomics experiments.}
\label{tab:tx_hparams}
\resizebox{0.8\textwidth}{!}{%
\begin{tabular}{@{}lcc@{}}
\toprule
Setting & Replogle-Nadig & PBMC \\
\midrule
\rowcolor{grouprow} \multicolumn{3}{@{}l}{\textit{Data}} \\
Cells & $643{,}413$ & $9{,}697{,}974$ \\
Genes (highly variable) & $2{,}000$ & $2{,}000$ \\
Contexts & $4$ cell lines & $12$ donors \\
Perturbations & $2{,}023$ genes & $90$ cytokines \\
Control & non-targeting & PBS \\
Covariate conditioning & none & cell type ($18$) \\
Normalization & \multicolumn{2}{c}{$10^4$ library size, $\log(1{+}x)$} \\
PCA width (task-0 basis) & $50$ & $100$ \\
Variance retained & $0.342$ & $0.584$ \\
\midrule
\rowcolor{grouprow} \multicolumn{3}{@{}l}{\textit{Stream}} \\
Tasks & $2$ & $5$ \\
Task 0 & Jurkat, K562, RPE1 & $8$ donors \\
Sequential tasks & HepG2 & Donor1, 4, 9, 12 \\
\midrule
\rowcolor{grouprow} \multicolumn{3}{@{}l}{\textit{Model}} \\
Backbone & MLP & residual MLP \\
Width / depth & $2048$ / $4$ & $2048$ / $6$ \\
Time embedding & $128$ & $128$ \\
Condition embedding & $128$ & $128$ \\
\midrule
\rowcolor{grouprow} \multicolumn{3}{@{}l}{\textit{Training}} \\
Optimizer & Adam, $10^{-4}$ & Adam, $10^{-4}$ \\
Batch size & $1024$ & $1024$ \\
Gradient clip / EMA & none & none \\
Steps, task 0 & $100{,}000$ & $300{,}000$ \\
Steps, later tasks & $50{,}000$ & $150{,}000$ \\
Seeds & \multicolumn{2}{c}{$42$, $69$, $1714$} \\
\midrule
\rowcolor{grouprow} \multicolumn{3}{@{}l}{\textit{Replay}} \\
Teacher sampler & \multicolumn{2}{c}{Euler, step $0.05$ ($20$ evaluations)} \\
Condition sampling & \multicolumn{2}{c}{proportional to old stratum frequency} \\
Equal-task $\lambda$ & $1/2$ & $1/2, 2/3, 3/4, 4/5$ \\
Adaptive $\lambda$ grid & \multicolumn{2}{c}{$\{0.25,0.5,0.75\}$} \\
Selection metric & \multicolumn{2}{c}{old/new balanced Pearson $\Delta$, $K=200$} \\
Selection data & \multicolumn{2}{c}{seen-task training strata} \\
MeanFlow jumps & \multicolumn{2}{c}{$N\in\{1,4\}$} \\
MeanFlow $(\mu,\sigma)$ & \multicolumn{2}{c}{$(0.4,1.0)$, logit-normal} \\
MeanFlow diagonal rate  & \multicolumn{2}{c}{$0.25$} \\
MeanFlow adaptive loss weight ($T_0$ only) & \multicolumn{2}{c}{$0.75$} \\
MeanFlow optimizer & \multicolumn{2}{c}{AdamW, $10^{-4}$, $\beta_2=0.95$, clip $1.0$} \\
MeanFlow distillation budget & \multicolumn{2}{c}{same as task-0 training} \\
\midrule
\rowcolor{grouprow} \multicolumn{3}{@{}l}{\textit{Evaluation}} \\
Generated cells per stratum & $500$ & $500$ \\
Control rows & \multicolumn{2}{c}{decoded observed control latents} \\
Tracks & \multicolumn{2}{c}{fit, retention, stream, heldout} \\
Metrics & \multicolumn{2}{c}{Pearson $\Delta$, DE Overlap@$N$, residual $\Delta$, L1 discr.} \\
\bottomrule
\end{tabular}}
\end{table}
\FloatBarrier

\textbf{Baselines.} All baselines share the stream, backbone, budget, decoder bank, and evaluation protocol, differing only in what is updated or replayed.
\begin{itemize}[leftmargin=*, itemindent=0pt]
  \item \emph{Naive fine-tuning} trains on the incoming context alone, sweeping three independent freeze switches: whether the velocity field trains, whether perturbation rows seen earlier train, and whether earlier context rows train. Rows first appearing on the new task always train.
  \item \emph{LoRA} adapts the MLP with rank $r\in\{4,8,16\}$ and $\alpha=2r$, leaving the output layer frozen and merging the adapter once the task finishes.
  \item \emph{Gradual fine-tuning} self-distills from the frozen previous model with a distillation weight annealed from $1$ to $0$, under sigmoid and linear schedules.
  \item \emph{Two-stage fine-tuning} spends the first $10\%$ or $50\%$ of the task training only on the new context embedding row, then unfreezes the full field.
  \item \emph{Real replay} mixes stored real cells from earlier tasks into each batch, a privileged reference rather than a competing method because it retains raw data.
\end{itemize}
Generative and trajectory replay are run with all three teacher backends and both $\lambda$ policies.

\textbf{Evaluation.} All numbers come from a single evaluation of the final checkpoint, reported on four tracks.
\begin{itemize}[leftmargin=*, itemindent=0pt]
  \item \emph{Retention}, the macro over the task-0 training strata: Jurkat, K562 and RPE1 on Replogle, the eight pretraining donors on PBMC.
  \item \emph{Fit}, the unweighted mean of the per-context macros for every context introduced sequentially thereafter: HepG2 on Replogle, and Donor1, Donor4, Donor9 and Donor12 on PBMC.
  \item \emph{Stream}, the union of the two above (reported, for example, in \cref{fig: app_fig18}).
  \item \emph{Held-out}, the test perturbations of every context seen so far.
\end{itemize}
We generate $K=500$ cells per non-control stratum and take the basal from decoded observed control latents rather than from generated controls, so only the treated side is synthesized. Following the \textit{CellEval} metric suite in \citet{Adduri2026}, the headline metric is Pearson $\Delta$, the correlation between predicted and observed perturbation-minus-control profiles, macro-averaged over strata. We additionally report DE Overlap@$N$, the fraction of the $N$ observed significant differentially expressed genes recovered among the top-$N$ predicted, with $N$ set per perturbation. Results are means with standard errors over seeds.

\subsubsection{Additional results}\label{app:results-transcriptomics}

\textbf{Velocity model selection and full results.} \Cref{tab:tx_architecture_selection} reports the PCA dimension and backbone sweeps, both run on task~0 alone to avoid leakage. Fit falls monotonically with PCA width on both datasets (panel a), since wider bases spend capacity on directions that carry little perturbation signal. On Replogle, the $50$-component basis is clearly best, whereas on PBMC $50$ and $100$ are within a thousandth of each other, and we take $100$ following \citet{Klein2025}. The backbone sweep (panel b) separates the two datasets: on Replogle, the flat MLP wins on fit and task-0 held-out with retention tied within a thousandth, while on PBMC the residual backbone is ahead on both retention and held-out, and among residual configurations the smaller $2048{\times}6$ model gives up little on retention while scoring higher on held-out, so we keep it.

\Cref{tab:replogle_cl_full,tab:pbmc_cl_full} give every configuration behind \cref{fig: tx_figure}. On the single Replogle update, the replay families are close, with trajectory replay retaining slightly more than generative replay ($0.604$--$0.620$ against $0.573$--$0.594$ Pearson $\Delta$) and held-out tied at $0.39$--$0.41$. Over the four sequential updates of PBMC, the gap widens on every axis, to $0.430$--$0.451$ retention against $0.306$--$0.403$ and $0.332$--$0.350$ held-out against $0.289$--$0.330$, placing trajectory replay level with real replay on fit and held-out despite storing no cells. The MeanFlow backends leave it unchanged on both streams but cost generative replay substantially on PBMC, where MF-1 retention drops to $0.306$.

The freeze grid explains why a memory-free non-replay baseline can still top the retention column. Retention depends almost entirely on the velocity field, and adaptation almost entirely on it too, so the three switches are not three comparable knobs.
\begin{itemize}[leftmargin=*, itemindent=0pt]
    \item \emph{Field frozen, old embeddings frozen} ($\mathrm{vf}{=}0$, $\mathrm{pert}{=}0$, $\mathrm{ctx}{=}0$). Only the new context row is learned, so task~0 is preserved almost exactly, and PBMC retention is the highest of any memory-free method at $0.491$. A single embedding vector cannot express the new context's perturbation responses, so fit stalls at $0.233$ against $0.515$ for trajectory replay.
  \item \emph{Field frozen, old embeddings trainable} ($\mathrm{vf}{=}0$, $\mathrm{pert}{=}1$, $\mathrm{ctx}{=}1$). The old rows drift to compensate for a field that cannot move, costing retention ($0.400$) for no fit at all ($0.232$) on PBMC. On Replogle, the same relaxation does buy fit, $0.359$ against $0.193$, at a smaller retention cost.
    \item \emph{Field trainable} ($\mathrm{vf}{=}1$, any embedding setting). Fit recovers to $0.289$ and retention collapses to $0.056$--$0.064$, while the embedding switches change neither, confirming that the field rather than the conditioning rows carries the historical task.
\end{itemize}
Freezing therefore buys retention only by refusing to adapt, which is why the two axes have to be read jointly rather than ranked one column at a time.

\FloatBarrier
\begin{table*}[!htbp]
\centering
\definecolor{grouprow}{HTML}{F0F0F0}
\definecolor{selectedrow}{HTML}{E5EEF3}
\setlength{\tabcolsep}{4.5pt}
\small
\caption{\textbf{Transcriptomics architecture selection.}
(\textbf{a}) PCA width sweep with a fixed MLP $1024{\times}4$ backbone, reporting fit Pearson $\Delta$ from single-seed pilots.
(\textbf{b}) Backbone width and depth under the final sequential protocol with $500$ generated cells per non-control stratum, reporting fit, retention, and task-0 held-out Pearson $\Delta$. All rows use seed $42$, and blue rows mark the selected configuration.}
\label{tab:tx_architecture_selection}
\vspace{4pt}

\begin{tabular}{@{}llc@{}}
\toprule
Dataset & PCA width & $\uparrow$ Pilot fit $\Delta$ \\
\midrule
\rowcolor{grouprow} \multicolumn{3}{@{}l}{\textit{(a) PCA width sweep}} \\
\rowcolor{selectedrow} Replogle & $50$ & $\mathbf{0.486}$ \\
Replogle & $100$ & $0.458$ \\
Replogle & $256$ & $0.423$ \\
Replogle & $512$ & $0.389$ \\
\addlinespace
PBMC & $50$ & $0.353$ \\
\rowcolor{selectedrow} PBMC & $100$ & $0.352$ \\
PBMC & $256$ & $0.282$ \\
PBMC & $512$ & $0.263$ \\
\bottomrule
\end{tabular}

\vspace{12pt}

\begin{tabular}{@{}llccccc@{}}
\toprule
Dataset & Backbone & Width & Depth & $\uparrow$ Fit $\Delta$ & $\uparrow$ Retention $\Delta$ & $\uparrow$ $T_0$ held-out $\Delta$ \\
\midrule
\rowcolor{grouprow} \multicolumn{7}{@{}l}{\textit{(b) Final-protocol model-capacity check ($K=500$)}} \\
\rowcolor{selectedrow} Replogle & MLP & $2048$ & $4$ & $\mathbf{0.611}$ & $0.608$ & $\mathbf{0.410}$ \\
Replogle & Residual MLP & $2048$ & $6$ & $0.599$ & $\mathbf{0.609}$ & $0.392$ \\
\addlinespace
PBMC & MLP & $4096$ & $4$ & $\mathbf{0.546}$ & $0.441$ & $0.297$ \\
PBMC & MLP & $3072$ & $4$ & $0.528$ & $0.443$ & $0.308$ \\
PBMC & Residual MLP & $3072$ & $6$ & $0.532$ & $\mathbf{0.455}$ & $0.352$ \\
\rowcolor{selectedrow} PBMC & Residual MLP & $2048$ & $6$ & $0.514$ & $0.445$ & $\mathbf{0.355}$ \\
\bottomrule
\end{tabular}
\end{table*}

\FloatBarrier
\begin{table*}[!htbp]
\centering
\definecolor{trajectoryrow}{HTML}{E5EEF3}
\definecolor{referencerow}{HTML}{F0F0F0}
\setlength{\tabcolsep}{4.2pt}
\small
\caption{\textbf{Continual perturbation-response modeling on Replogle-Nadig.} Final-checkpoint Pearson $\Delta$ and DE Overlap@N after adaptation from three task-0 cell lines to HepG2, reported as fit on the later task, retention on task 0, and held-out on perturbations excluded from training. Means with standard errors over seeds 42, 69, and 1714. Adaptive $\lambda$ is selected on validation strata and equal-task uses $\lambda_k=k/(k+1)$. Gray marks the privileged real-replay reference and blue marks trajectory replay. Higher is better.}
\label{tab:replogle_cl_full}
\resizebox{\textwidth}{!}{%
\begin{tabular}{@{}lcccccc@{}}
\toprule
Method & \multicolumn{3}{c}{Pearson $\Delta$} & \multicolumn{3}{c}{DE Overlap@N} \\
\cmidrule(lr){2-4}\cmidrule(l){5-7}
& $\uparrow$ Fit & $\uparrow$ Retention & $\uparrow$ Held-out & $\uparrow$ Fit & $\uparrow$ Retention & $\uparrow$ Held-out \\
\midrule
Naive (vf${=}0$, pert${=}0$, ctx${=}0$) & $0.193$ {\scriptsize $(\pm\,0.004)$} & $0.629$ {\scriptsize $(\pm\,0.001)$} & $0.163$ {\scriptsize $(\pm\,0.004)$} & $0.172$ {\scriptsize $(\pm\,0.001)$} & $0.287$ {\scriptsize $(\pm\,0.001)$} & $0.161$ {\scriptsize $(\pm\,0.001)$} \\
Naive (vf${=}0$, pert${=}1$, ctx${=}1$) & $0.359$ {\scriptsize $(\pm\,0.003)$} & $0.595$ {\scriptsize $(\pm\,0.001)$} & $0.169$ {\scriptsize $(\pm\,0.012)$} & $0.213$ {\scriptsize $(\pm\,0.000)$} & $0.275$ {\scriptsize $(\pm\,0.000)$} & $0.165$ {\scriptsize $(\pm\,0.002)$} \\
Naive (vf${=}1$, pert${=}0$, ctx${=}0$) & $0.616$ {\scriptsize $(\pm\,0.001)$} & $0.216$ {\scriptsize $(\pm\,0.006)$} & $0.351$ {\scriptsize $(\pm\,0.002)$} & $0.260$ {\scriptsize $(\pm\,0.001)$} & $0.156$ {\scriptsize $(\pm\,0.001)$} & $0.198$ {\scriptsize $(\pm\,0.001)$} \\
Naive (vf${=}1$, pert${=}0$, ctx${=}1$) & $0.616$ {\scriptsize $(\pm\,0.001)$} & $0.216$ {\scriptsize $(\pm\,0.006)$} & $0.351$ {\scriptsize $(\pm\,0.002)$} & $0.260$ {\scriptsize $(\pm\,0.001)$} & $0.156$ {\scriptsize $(\pm\,0.001)$} & $0.198$ {\scriptsize $(\pm\,0.001)$} \\
Naive (vf${=}1$, pert${=}1$, ctx${=}0$) & $0.616$ {\scriptsize $(\pm\,0.001)$} & $0.214$ {\scriptsize $(\pm\,0.007)$} & $0.353$ {\scriptsize $(\pm\,0.002)$} & $0.260$ {\scriptsize $(\pm\,0.001)$} & $0.156$ {\scriptsize $(\pm\,0.001)$} & $0.198$ {\scriptsize $(\pm\,0.000)$} \\
Naive (vf${=}1$, pert${=}1$, ctx${=}1$) & $0.616$ {\scriptsize $(\pm\,0.001)$} & $0.214$ {\scriptsize $(\pm\,0.007)$} & $0.353$ {\scriptsize $(\pm\,0.002)$} & $0.260$ {\scriptsize $(\pm\,0.001)$} & $0.156$ {\scriptsize $(\pm\,0.001)$} & $0.198$ {\scriptsize $(\pm\,0.000)$} \\
\midrule
LoRA ($r{=}4$, $\alpha{=}8$) & $0.551$ {\scriptsize $(\pm\,0.000)$} & $0.259$ {\scriptsize $(\pm\,0.005)$} & $0.391$ {\scriptsize $(\pm\,0.004)$} & $0.251$ {\scriptsize $(\pm\,0.000)$} & $0.170$ {\scriptsize $(\pm\,0.002)$} & $0.216$ {\scriptsize $(\pm\,0.000)$} \\
LoRA ($r{=}8$, $\alpha{=}16$) & $0.585$ {\scriptsize $(\pm\,0.001)$} & $0.260$ {\scriptsize $(\pm\,0.002)$} & $0.386$ {\scriptsize $(\pm\,0.004)$} & $0.258$ {\scriptsize $(\pm\,0.002)$} & $0.169$ {\scriptsize $(\pm\,0.000)$} & $0.214$ {\scriptsize $(\pm\,0.001)$} \\
LoRA ($r{=}16$, $\alpha{=}32$) & $0.608$ {\scriptsize $(\pm\,0.001)$} & $0.264$ {\scriptsize $(\pm\,0.011)$} & $0.382$ {\scriptsize $(\pm\,0.002)$} & $0.261$ {\scriptsize $(\pm\,0.001)$} & $0.169$ {\scriptsize $(\pm\,0.002)$} & $0.212$ {\scriptsize $(\pm\,0.001)$} \\
GFT (sigmoid, $vf{=}1$) & $0.616$ {\scriptsize $(\pm\,0.001)$} & $0.226$ {\scriptsize $(\pm\,0.006)$} & $0.361$ {\scriptsize $(\pm\,0.005)$} & $0.259$ {\scriptsize $(\pm\,0.001)$} & $0.157$ {\scriptsize $(\pm\,0.001)$} & $0.202$ {\scriptsize $(\pm\,0.001)$} \\
GFT (linear, $vf{=}1$) & $0.615$ {\scriptsize $(\pm\,0.001)$} & $0.226$ {\scriptsize $(\pm\,0.006)$} & $0.363$ {\scriptsize $(\pm\,0.005)$} & $0.259$ {\scriptsize $(\pm\,0.000)$} & $0.157$ {\scriptsize $(\pm\,0.001)$} & $0.203$ {\scriptsize $(\pm\,0.001)$} \\
Two-stage (10\% new ctx $\to$ $vf{=}1$) & $0.618$ {\scriptsize $(\pm\,0.000)$} & $0.227$ {\scriptsize $(\pm\,0.004)$} & $0.361$ {\scriptsize $(\pm\,0.003)$} & $0.257$ {\scriptsize $(\pm\,0.001)$} & $0.159$ {\scriptsize $(\pm\,0.001)$} & $0.201$ {\scriptsize $(\pm\,0.000)$} \\
Two-stage (50\% new ctx $\to$ $vf{=}1$) & $0.617$ {\scriptsize $(\pm\,0.001)$} & $0.266$ {\scriptsize $(\pm\,0.004)$} & $0.377$ {\scriptsize $(\pm\,0.001)$} & $0.260$ {\scriptsize $(\pm\,0.001)$} & $0.166$ {\scriptsize $(\pm\,0.000)$} & $0.208$ {\scriptsize $(\pm\,0.001)$} \\
\midrule
\rowcolor{referencerow} Real replay (adaptive $\lambda$) & $0.610$ {\scriptsize $(\pm\,0.001)$} & $0.632$ {\scriptsize $(\pm\,0.001)$} & $0.406$ {\scriptsize $(\pm\,0.004)$} & $0.260$ {\scriptsize $(\pm\,0.000)$} & $0.288$ {\scriptsize $(\pm\,0.001)$} & $0.218$ {\scriptsize $(\pm\,0.001)$} \\
\rowcolor{referencerow} Real replay (equal-task $\lambda$) & $0.613$ {\scriptsize $(\pm\,0.001)$} & $0.620$ {\scriptsize $(\pm\,0.001)$} & $0.401$ {\scriptsize $(\pm\,0.001)$} & $0.257$ {\scriptsize $(\pm\,0.001)$} & $0.285$ {\scriptsize $(\pm\,0.002)$} & $0.217$ {\scriptsize $(\pm\,0.001)$} \\
\midrule
Generative replay (ODE, adaptive $\lambda$) & $0.611$ {\scriptsize $(\pm\,0.001)$} & $0.594$ {\scriptsize $(\pm\,0.005)$} & $0.403$ {\scriptsize $(\pm\,0.005)$} & $0.257$ {\scriptsize $(\pm\,0.001)$} & $0.276$ {\scriptsize $(\pm\,0.002)$} & $0.217$ {\scriptsize $(\pm\,0.001)$} \\
Generative replay (ODE, equal-task $\lambda$) & $0.613$ {\scriptsize $(\pm\,0.000)$} & $0.588$ {\scriptsize $(\pm\,0.002)$} & $0.400$ {\scriptsize $(\pm\,0.003)$} & $0.257$ {\scriptsize $(\pm\,0.003)$} & $0.275$ {\scriptsize $(\pm\,0.003)$} & $0.217$ {\scriptsize $(\pm\,0.001)$} \\
Generative replay (MF-1, adaptive $\lambda$) & $0.611$ {\scriptsize $(\pm\,0.001)$} & $0.580$ {\scriptsize $(\pm\,0.005)$} & $0.403$ {\scriptsize $(\pm\,0.005)$} & $0.257$ {\scriptsize $(\pm\,0.001)$} & $0.280$ {\scriptsize $(\pm\,0.003)$} & $0.217$ {\scriptsize $(\pm\,0.000)$} \\
Generative replay (MF-1, equal-task $\lambda$) & $0.613$ {\scriptsize $(\pm\,0.000)$} & $0.573$ {\scriptsize $(\pm\,0.002)$} & $0.400$ {\scriptsize $(\pm\,0.002)$} & $0.256$ {\scriptsize $(\pm\,0.002)$} & $0.277$ {\scriptsize $(\pm\,0.002)$} & $0.217$ {\scriptsize $(\pm\,0.000)$} \\
Generative replay (MF-4, adaptive $\lambda$) & $0.609$ {\scriptsize $(\pm\,0.001)$} & $0.588$ {\scriptsize $(\pm\,0.003)$} & $0.407$ {\scriptsize $(\pm\,0.005)$} & $0.256$ {\scriptsize $(\pm\,0.001)$} & $0.279$ {\scriptsize $(\pm\,0.003)$} & $0.219$ {\scriptsize $(\pm\,0.001)$} \\
Generative replay (MF-4, equal-task $\lambda$) & $0.613$ {\scriptsize $(\pm\,0.000)$} & $0.581$ {\scriptsize $(\pm\,0.002)$} & $0.400$ {\scriptsize $(\pm\,0.002)$} & $0.256$ {\scriptsize $(\pm\,0.003)$} & $0.278$ {\scriptsize $(\pm\,0.002)$} & $0.218$ {\scriptsize $(\pm\,0.001)$} \\
\midrule
\rowcolor{trajectoryrow} Trajectory replay (ODE, adaptive $\lambda$) & $0.612$ {\scriptsize $(\pm\,0.001)$} & $0.619$ {\scriptsize $(\pm\,0.002)$} & $0.399$ {\scriptsize $(\pm\,0.002)$} & $0.256$ {\scriptsize $(\pm\,0.002)$} & $0.285$ {\scriptsize $(\pm\,0.001)$} & $0.217$ {\scriptsize $(\pm\,0.000)$} \\
\rowcolor{trajectoryrow} Trajectory replay (ODE, equal-task $\lambda$) & $0.610$ {\scriptsize $(\pm\,0.001)$} & $0.606$ {\scriptsize $(\pm\,0.002)$} & $0.398$ {\scriptsize $(\pm\,0.001)$} & $0.257$ {\scriptsize $(\pm\,0.001)$} & $0.282$ {\scriptsize $(\pm\,0.001)$} & $0.217$ {\scriptsize $(\pm\,0.001)$} \\
\rowcolor{trajectoryrow} Trajectory replay (MF-1, adaptive $\lambda$) & $0.613$ {\scriptsize $(\pm\,0.001)$} & $0.620$ {\scriptsize $(\pm\,0.001)$} & $0.404$ {\scriptsize $(\pm\,0.001)$} & $0.256$ {\scriptsize $(\pm\,0.002)$} & $0.285$ {\scriptsize $(\pm\,0.001)$} & $0.218$ {\scriptsize $(\pm\,0.001)$} \\
\rowcolor{trajectoryrow} Trajectory replay (MF-1, equal-task $\lambda$) & $0.608$ {\scriptsize $(\pm\,0.002)$} & $0.604$ {\scriptsize $(\pm\,0.001)$} & $0.392$ {\scriptsize $(\pm\,0.002)$} & $0.257$ {\scriptsize $(\pm\,0.001)$} & $0.282$ {\scriptsize $(\pm\,0.001)$} & $0.217$ {\scriptsize $(\pm\,0.001)$} \\
\rowcolor{trajectoryrow} Trajectory replay (MF-4, adaptive $\lambda$) & $0.613$ {\scriptsize $(\pm\,0.001)$} & $0.620$ {\scriptsize $(\pm\,0.001)$} & $0.404$ {\scriptsize $(\pm\,0.001)$} & $0.256$ {\scriptsize $(\pm\,0.002)$} & $0.285$ {\scriptsize $(\pm\,0.001)$} & $0.218$ {\scriptsize $(\pm\,0.001)$} \\
\rowcolor{trajectoryrow} Trajectory replay (MF-4, equal-task $\lambda$) & $0.608$ {\scriptsize $(\pm\,0.002)$} & $0.604$ {\scriptsize $(\pm\,0.001)$} & $0.392$ {\scriptsize $(\pm\,0.002)$} & $0.258$ {\scriptsize $(\pm\,0.000)$} & $0.282$ {\scriptsize $(\pm\,0.001)$} & $0.217$ {\scriptsize $(\pm\,0.001)$} \\
\bottomrule
\end{tabular}%
}
\end{table*}

\begin{table*}[!htbp]
\centering
\definecolor{trajectoryrow}{HTML}{E5EEF3}
\definecolor{referencerow}{HTML}{F0F0F0}
\setlength{\tabcolsep}{4.2pt}
\small
\caption{\textbf{Continual perturbation-response modeling on Parse-PBMC.} Final-checkpoint Pearson $\Delta$ and DE Overlap@N after four sequential donor updates, reported as fit on the later tasks, retention on task 0, and held-out on perturbations excluded from training. Means with standard errors over seeds 42, 69, and 1714. Adaptive $\lambda$ is selected on validation strata and equal-task uses $\lambda_k=k/(k+1)$. Gray marks the privileged real-replay reference and blue marks trajectory replay. Higher is better.}
\label{tab:pbmc_cl_full}
\resizebox{\textwidth}{!}{%
\begin{tabular}{@{}lcccccc@{}}
\toprule
Method & \multicolumn{3}{c}{Pearson $\Delta$} & \multicolumn{3}{c}{DE Overlap@N} \\
\cmidrule(lr){2-4}\cmidrule(l){5-7}
& $\uparrow$ Fit & $\uparrow$ Retention & $\uparrow$ Held-out & $\uparrow$ Fit & $\uparrow$ Retention & $\uparrow$ Held-out \\
\midrule
Naive (vf${=}0$, pert${=}0$, ctx${=}0$) & $0.233$ {\scriptsize $(\pm\,0.003)$} & $0.491$ {\scriptsize $(\pm\,0.001)$} & $0.235$ {\scriptsize $(\pm\,0.004)$} & $0.100$ {\scriptsize $(\pm\,0.004)$} & $0.144$ {\scriptsize $(\pm\,0.001)$} & $0.104$ {\scriptsize $(\pm\,0.003)$} \\
Naive (vf${=}0$, pert${=}1$, ctx${=}1$) & $0.232$ {\scriptsize $(\pm\,0.005)$} & $0.400$ {\scriptsize $(\pm\,0.003)$} & $0.207$ {\scriptsize $(\pm\,0.002)$} & $0.112$ {\scriptsize $(\pm\,0.002)$} & $0.133$ {\scriptsize $(\pm\,0.001)$} & $0.096$ {\scriptsize $(\pm\,0.002)$} \\
Naive (vf${=}1$, pert${=}0$, ctx${=}0$) & $0.290$ {\scriptsize $(\pm\,0.000)$} & $0.056$ {\scriptsize $(\pm\,0.000)$} & $0.151$ {\scriptsize $(\pm\,0.001)$} & $0.114$ {\scriptsize $(\pm\,0.001)$} & $0.074$ {\scriptsize $(\pm\,0.001)$} & $0.087$ {\scriptsize $(\pm\,0.002)$} \\
Naive (vf${=}1$, pert${=}0$, ctx${=}1$) & $0.290$ {\scriptsize $(\pm\,0.000)$} & $0.056$ {\scriptsize $(\pm\,0.000)$} & $0.151$ {\scriptsize $(\pm\,0.001)$} & $0.114$ {\scriptsize $(\pm\,0.001)$} & $0.074$ {\scriptsize $(\pm\,0.001)$} & $0.087$ {\scriptsize $(\pm\,0.002)$} \\
Naive (vf${=}1$, pert${=}1$, ctx${=}0$) & $0.289$ {\scriptsize $(\pm\,0.001)$} & $0.064$ {\scriptsize $(\pm\,0.001)$} & $0.154$ {\scriptsize $(\pm\,0.002)$} & $0.114$ {\scriptsize $(\pm\,0.002)$} & $0.078$ {\scriptsize $(\pm\,0.001)$} & $0.092$ {\scriptsize $(\pm\,0.001)$} \\
Naive (vf${=}1$, pert${=}1$, ctx${=}1$) & $0.289$ {\scriptsize $(\pm\,0.001)$} & $0.064$ {\scriptsize $(\pm\,0.001)$} & $0.154$ {\scriptsize $(\pm\,0.002)$} & $0.114$ {\scriptsize $(\pm\,0.002)$} & $0.078$ {\scriptsize $(\pm\,0.001)$} & $0.092$ {\scriptsize $(\pm\,0.001)$} \\
\midrule
LoRA ($r{=}4$, $\alpha{=}8$) & $0.273$ {\scriptsize $(\pm\,0.004)$} & $0.154$ {\scriptsize $(\pm\,0.007)$} & $0.214$ {\scriptsize $(\pm\,0.002)$} & $0.123$ {\scriptsize $(\pm\,0.002)$} & $0.083$ {\scriptsize $(\pm\,0.003)$} & $0.111$ {\scriptsize $(\pm\,0.001)$} \\
LoRA ($r{=}8$, $\alpha{=}16$) & $0.279$ {\scriptsize $(\pm\,0.003)$} & $0.155$ {\scriptsize $(\pm\,0.008)$} & $0.217$ {\scriptsize $(\pm\,0.001)$} & $0.122$ {\scriptsize $(\pm\,0.002)$} & $0.085$ {\scriptsize $(\pm\,0.003)$} & $0.110$ {\scriptsize $(\pm\,0.002)$} \\
LoRA ($r{=}16$, $\alpha{=}32$) & $0.287$ {\scriptsize $(\pm\,0.003)$} & $0.166$ {\scriptsize $(\pm\,0.006)$} & $0.225$ {\scriptsize $(\pm\,0.003)$} & $0.122$ {\scriptsize $(\pm\,0.003)$} & $0.086$ {\scriptsize $(\pm\,0.004)$} & $0.109$ {\scriptsize $(\pm\,0.002)$} \\
GFT (sigmoid, $vf{=}1$) & $0.291$ {\scriptsize $(\pm\,0.001)$} & $0.061$ {\scriptsize $(\pm\,0.001)$} & $0.156$ {\scriptsize $(\pm\,0.001)$} & $0.114$ {\scriptsize $(\pm\,0.002)$} & $0.075$ {\scriptsize $(\pm\,0.001)$} & $0.087$ {\scriptsize $(\pm\,0.002)$} \\
GFT (linear, $vf{=}1$) & $0.291$ {\scriptsize $(\pm\,0.001)$} & $0.060$ {\scriptsize $(\pm\,0.001)$} & $0.157$ {\scriptsize $(\pm\,0.001)$} & $0.113$ {\scriptsize $(\pm\,0.001)$} & $0.074$ {\scriptsize $(\pm\,0.001)$} & $0.087$ {\scriptsize $(\pm\,0.002)$} \\
Two-stage (10\% new ctx $\to$ $vf{=}1$) & $0.289$ {\scriptsize $(\pm\,0.001)$} & $0.059$ {\scriptsize $(\pm\,0.002)$} & $0.155$ {\scriptsize $(\pm\,0.002)$} & $0.113$ {\scriptsize $(\pm\,0.001)$} & $0.076$ {\scriptsize $(\pm\,0.000)$} & $0.086$ {\scriptsize $(\pm\,0.002)$} \\
Two-stage (50\% new ctx $\to$ $vf{=}1$) & $0.298$ {\scriptsize $(\pm\,0.002)$} & $0.091$ {\scriptsize $(\pm\,0.002)$} & $0.171$ {\scriptsize $(\pm\,0.002)$} & $0.121$ {\scriptsize $(\pm\,0.001)$} & $0.077$ {\scriptsize $(\pm\,0.001)$} & $0.092$ {\scriptsize $(\pm\,0.002)$} \\
\midrule
\rowcolor{referencerow} Real replay (adaptive $\lambda$) & $0.498$ {\scriptsize $(\pm\,0.002)$} & $0.483$ {\scriptsize $(\pm\,0.000)$} & $0.350$ {\scriptsize $(\pm\,0.003)$} & $0.165$ {\scriptsize $(\pm\,0.001)$} & $0.143$ {\scriptsize $(\pm\,0.001)$} & $0.137$ {\scriptsize $(\pm\,0.001)$} \\
\rowcolor{referencerow} Real replay (equal-task $\lambda$) & $0.496$ {\scriptsize $(\pm\,0.001)$} & $0.500$ {\scriptsize $(\pm\,0.002)$} & $0.355$ {\scriptsize $(\pm\,0.005)$} & $0.167$ {\scriptsize $(\pm\,0.002)$} & $0.147$ {\scriptsize $(\pm\,0.001)$} & $0.141$ {\scriptsize $(\pm\,0.002)$} \\
\midrule
Generative replay (ODE, adaptive $\lambda$) & $0.475$ {\scriptsize $(\pm\,0.001)$} & $0.383$ {\scriptsize $(\pm\,0.002)$} & $0.320$ {\scriptsize $(\pm\,0.003)$} & $0.122$ {\scriptsize $(\pm\,0.001)$} & $0.059$ {\scriptsize $(\pm\,0.002)$} & $0.100$ {\scriptsize $(\pm\,0.002)$} \\
Generative replay (ODE, equal-task $\lambda$) & $0.471$ {\scriptsize $(\pm\,0.003)$} & $0.403$ {\scriptsize $(\pm\,0.004)$} & $0.330$ {\scriptsize $(\pm\,0.003)$} & $0.125$ {\scriptsize $(\pm\,0.001)$} & $0.061$ {\scriptsize $(\pm\,0.001)$} & $0.107$ {\scriptsize $(\pm\,0.001)$} \\
Generative replay (MF-1, adaptive $\lambda$) & $0.437$ {\scriptsize $(\pm\,0.001)$} & $0.306$ {\scriptsize $(\pm\,0.007)$} & $0.289$ {\scriptsize $(\pm\,0.002)$} & $0.141$ {\scriptsize $(\pm\,0.005)$} & $0.088$ {\scriptsize $(\pm\,0.005)$} & $0.112$ {\scriptsize $(\pm\,0.006)$} \\
Generative replay (MF-1, equal-task $\lambda$) & $0.440$ {\scriptsize $(\pm\,0.004)$} & $0.340$ {\scriptsize $(\pm\,0.001)$} & $0.314$ {\scriptsize $(\pm\,0.004)$} & $0.148$ {\scriptsize $(\pm\,0.001)$} & $0.101$ {\scriptsize $(\pm\,0.002)$} & $0.124$ {\scriptsize $(\pm\,0.002)$} \\
Generative replay (MF-4, adaptive $\lambda$) & $0.446$ {\scriptsize $(\pm\,0.002)$} & $0.316$ {\scriptsize $(\pm\,0.007)$} & $0.305$ {\scriptsize $(\pm\,0.001)$} & $0.150$ {\scriptsize $(\pm\,0.005)$} & $0.101$ {\scriptsize $(\pm\,0.004)$} & $0.122$ {\scriptsize $(\pm\,0.003)$} \\
Generative replay (MF-4, equal-task $\lambda$) & $0.441$ {\scriptsize $(\pm\,0.004)$} & $0.345$ {\scriptsize $(\pm\,0.007)$} & $0.323$ {\scriptsize $(\pm\,0.007)$} & $0.152$ {\scriptsize $(\pm\,0.001)$} & $0.108$ {\scriptsize $(\pm\,0.002)$} & $0.130$ {\scriptsize $(\pm\,0.000)$} \\
\midrule
\rowcolor{trajectoryrow} Trajectory replay (ODE, adaptive $\lambda$) & $0.496$ {\scriptsize $(\pm\,0.004)$} & $0.435$ {\scriptsize $(\pm\,0.003)$} & $0.332$ {\scriptsize $(\pm\,0.006)$} & $0.162$ {\scriptsize $(\pm\,0.002)$} & $0.128$ {\scriptsize $(\pm\,0.004)$} & $0.129$ {\scriptsize $(\pm\,0.002)$} \\
\rowcolor{trajectoryrow} Trajectory replay (ODE, equal-task $\lambda$) & $0.515$ {\scriptsize $(\pm\,0.002)$} & $0.450$ {\scriptsize $(\pm\,0.003)$} & $0.348$ {\scriptsize $(\pm\,0.007)$} & $0.161$ {\scriptsize $(\pm\,0.003)$} & $0.129$ {\scriptsize $(\pm\,0.003)$} & $0.131$ {\scriptsize $(\pm\,0.003)$} \\
\rowcolor{trajectoryrow} Trajectory replay (MF-1, adaptive $\lambda$) & $0.493$ {\scriptsize $(\pm\,0.003)$} & $0.430$ {\scriptsize $(\pm\,0.003)$} & $0.333$ {\scriptsize $(\pm\,0.003)$} & $0.158$ {\scriptsize $(\pm\,0.001)$} & $0.127$ {\scriptsize $(\pm\,0.001)$} & $0.127$ {\scriptsize $(\pm\,0.001)$} \\
\rowcolor{trajectoryrow} Trajectory replay (MF-1, equal-task $\lambda$) & $0.514$ {\scriptsize $(\pm\,0.003)$} & $0.449$ {\scriptsize $(\pm\,0.007)$} & $0.349$ {\scriptsize $(\pm\,0.006)$} & $0.161$ {\scriptsize $(\pm\,0.004)$} & $0.129$ {\scriptsize $(\pm\,0.005)$} & $0.132$ {\scriptsize $(\pm\,0.004)$} \\
\rowcolor{trajectoryrow} Trajectory replay (MF-4, adaptive $\lambda$) & $0.493$ {\scriptsize $(\pm\,0.003)$} & $0.431$ {\scriptsize $(\pm\,0.003)$} & $0.333$ {\scriptsize $(\pm\,0.003)$} & $0.158$ {\scriptsize $(\pm\,0.001)$} & $0.127$ {\scriptsize $(\pm\,0.001)$} & $0.127$ {\scriptsize $(\pm\,0.001)$} \\
\rowcolor{trajectoryrow} Trajectory replay (MF-4, equal-task $\lambda$) & $0.515$ {\scriptsize $(\pm\,0.002)$} & $0.451$ {\scriptsize $(\pm\,0.005)$} & $0.350$ {\scriptsize $(\pm\,0.007)$} & $0.161$ {\scriptsize $(\pm\,0.005)$} & $0.130$ {\scriptsize $(\pm\,0.005)$} & $0.132$ {\scriptsize $(\pm\,0.004)$} \\
\bottomrule
\end{tabular}%
}
\end{table*}

\FloatBarrier

\textbf{Alternative metrics.} \Cref{fig: app_fig15} repeats the fit-retention view under two metrics from \citet{Adduri2026} that stress different failure modes. L1 discrimination (top) ranks each predicted profile against the observed profiles of all strata by $L_1$ distance and reports how far its own target sits in that ranking, so it rewards predictions that are specific to their condition rather than merely close to the average response. Residual Pearson $\Delta$ (bottom) subtracts the mean perturbation effect across strata before correlating, leaving only the context-specific deviation, so a method predicting the shared response alone scores near zero. The layout is unchanged under both: freezing sits top-left, the adapters and fine-tuning variants low on retention, and the replay family in the upper-right corner with trajectory replay adjacent to real replay. Residual $\Delta$ matters most, since it rules out the reading that all methods recover the same global shift and differ only in how much of it they keep.

\begin{figure}[!htbp]
\centering
\includegraphics[width=1\textwidth]{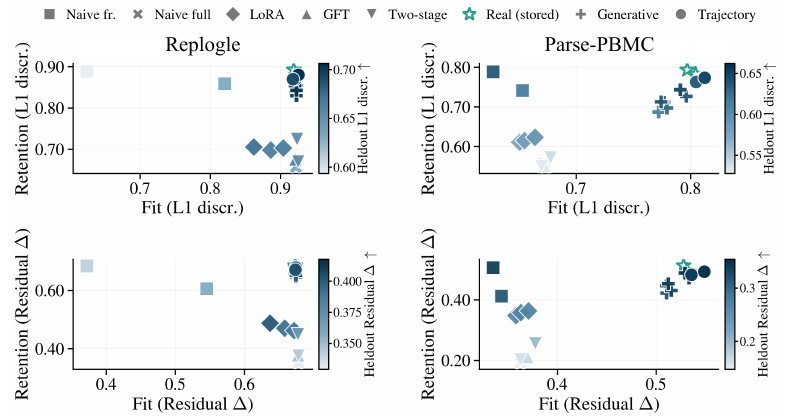}
\caption{\textbf{Fit--retention trade-off under alternative metrics.}
Retention against fit for every configuration on Replogle-Nadig (left) and Parse-PBMC (right), measured by L1 discrimination (top) and residual Pearson $\Delta$ (bottom). Marker shape denotes the method family and marker color the heldout score for the same metric, with a separate color scale per panel. Higher is better on all axes.}
\label{fig:tx_alt_metrics}
\label{fig: app_fig15}
\end{figure}

\textbf{Sequential behavior on PBMC.} \Cref{fig: app_fig16}a tracks held-out Pearson $\Delta$ after each donor update. Naive fine-tuning collapses at the first update, while the three replay variants decline in parallel from roughly $0.43$ to $0.33$--$0.35$, with trajectory replay tracking real replay. That the privileged reference declines equally points to the difficulty of covering more donors at fixed capacity rather than to forgetting specific to generated replay. Panel b shows all three selecting weights well below the equal-task schedule, real replay included, so the criterion favors the incoming donor rather than discounting generated samples, consistent with a shared perturbation vocabulary where each new context mainly needs its own basal position fitted.

\begin{figure}[!htbp]
\centering
\includegraphics[width=0.9\textwidth]{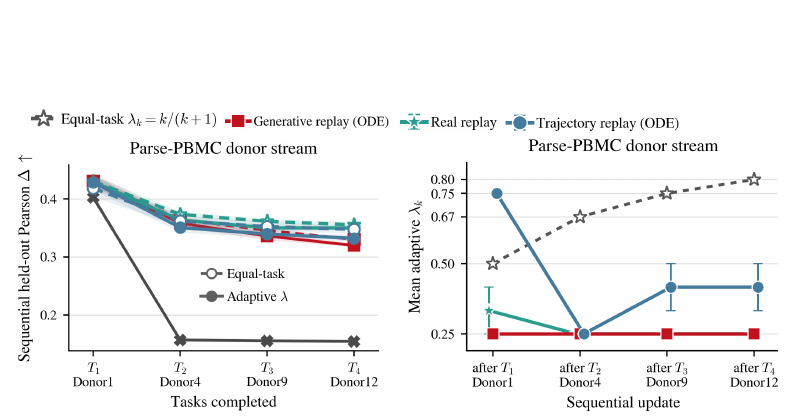}
\caption{\textbf{Parse-PBMC donor stream.} (\textbf{a}) Held-out Pearson $\Delta$ after each donor update for naive fine-tuning, real replay, and generative and trajectory replay with an ODE backend. Filled markers with solid lines denote adaptive $\lambda$ and open markers with dashed lines equal-task $\lambda$, and shaded bands show variability across seeds. (\textbf{b}) Mean replay weight $\lambda_k$ selected at each update by the adaptive policy for the same replay methods, with the equal-task rule $\lambda_k=k/(k+1)$ as a dashed reference. Error bars show variability over seeds $42$, $69$, and $1714$. Higher is better in (\textbf{a}).}
\label{fig: app_fig16}
\end{figure}
\FloatBarrier

\textbf{Replay budget.} \Cref{fig: app_fig17} asks what the MeanFlow backends give up. Each point is a backend, MF-1, MF-4, or the ODE teacher, plotted at its replay-field evaluations per update and normalized by the same method's ODE run, so parity means the distilled teacher matches full integration. On Replogle, both families sit at parity across the whole range, so a single update is short enough that teacher fidelity does not matter. On PBMC, the two separate: trajectory replay stays at parity under both weighting policies, whereas generative replay loses about ten percent at one evaluation and only recovers as the budget grows. The asymmetry is the same as on the image streams, since a replay sample must be integrated to $t=1$ before it can serve as data, while trajectory replay supervises at an intermediate time where a coarse trajectory is still informative.

\begin{figure}[!htbp]
\centering
\includegraphics[width=0.8\textwidth]{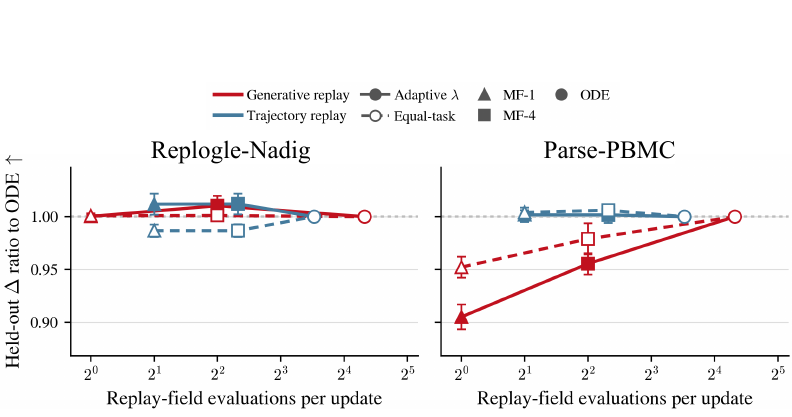}
\caption{\textbf{Replay budget on the transcriptomics streams.} Final held-out Pearson $\Delta$ relative to the matched ODE run as a function of replay-field evaluations per update, for Replogle-Nadig (left) and Parse-PBMC (right), with the MF-1, MF-4, and ODE backends labeled by their evaluation count. Colors distinguish generative from trajectory replay, filled markers with solid lines denote adaptive $\lambda$ and open markers with dashed lines equal-task $\lambda$, and the dotted line marks parity with the matched ODE run. Error bars show variability over seeds $42$, $69$, and $1714$. Higher is better.}
\label{fig: app_fig17}
\end{figure}
\FloatBarrier

\begin{figure}[!htbp]
\centering
\includegraphics[width=0.55\textwidth]{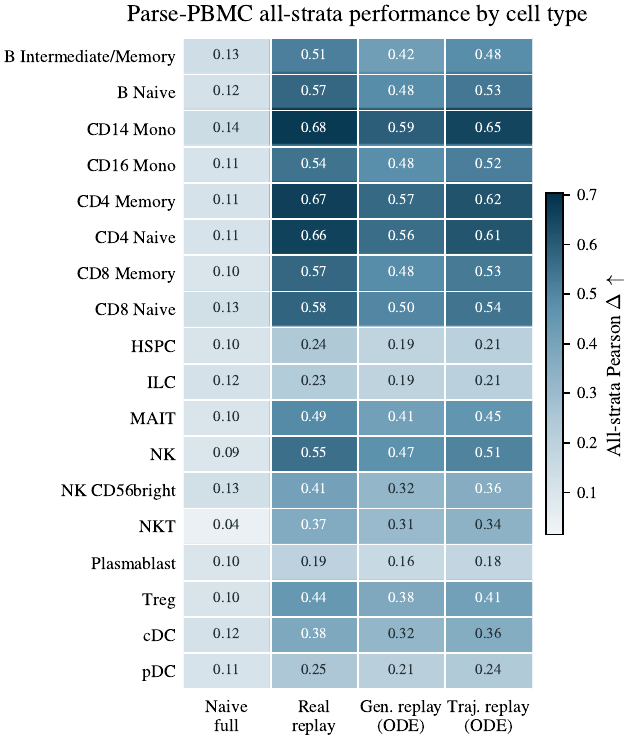}
\caption{\textbf{Parse-PBMC performance by cell type.}
Pearson $\Delta$ for each annotated cell type under naive full fine-tuning, real replay, and generative and trajectory replay with an ODE backend, all using equal-task $\lambda$. Each cell averages over the donor-cytokine strata of that cell type and over seeds 42, 69, and 1714. Higher is better.}
\label{fig: app_fig18}
\end{figure}
\FloatBarrier

\begin{figure}[!htbp]
\centering
\includegraphics[width=1\textwidth]{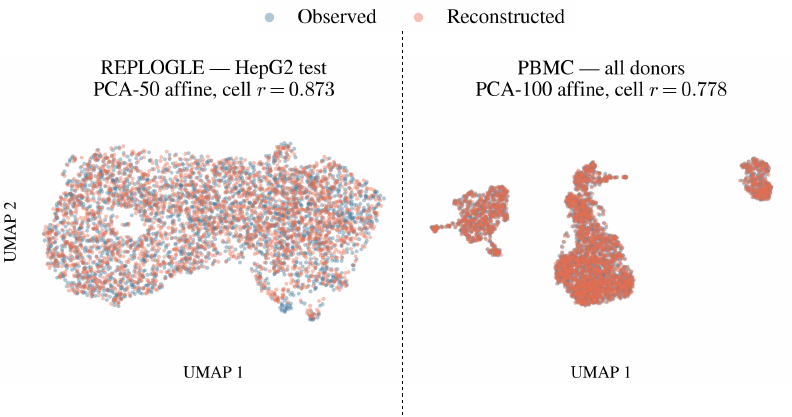}
\caption{\textbf{Decoder reconstruction quality.}
Joint UMAP of observed cells and their reconstructions from the observed PCA latents through the frozen affine decoder, for Replogle-Nadig HepG2 test cells with PCA-50 (left) and Parse-PBMC cells from all donors with PCA-100 (right). Panel titles report the mean per-cell reconstruction correlation. Each panel embeds observed and reconstructed cells together, so no flow-matching generation is involved.}
\label{fig: app_fig19}
\end{figure}
\FloatBarrier

\textbf{Performance by cell type.} \Cref{fig: app_fig18} breaks PBMC down by annotated cell type. The ordering is identical in all eighteen: naive fine-tuning near zero, then generative replay, then trajectory replay, then real replay, with trajectory replay closer to the stored-data reference than to generative replay throughout. Absolute values track cell-type abundance rather than method, from $0.65$ on CD14 monocytes to $0.18$ on plasmablasts, so the residual difficulty is in rare populations for which few cells are available per donor-cytokine stratum rather than in any method-specific failure.

\textbf{Decoder quality.} \Cref{fig: app_fig19} embeds observed cells together with their reconstructions through the frozen affine decoder, with no generation involved. The two clouds overlap throughout on both datasets, so the PCA basis and decoder preserve the population structure that the metrics are computed on, and the decoder is not a confound in the method comparison. It does, however, bound what any method can reach: per-cell correlation is $0.873$ on Replogle and $0.778$ on PBMC, so reported scores sit below a ceiling set by the task-0 PCA basis, which later tasks must also be expressed in.

\clearpage
\subsection{Application: Continual phenotypic modeling in microscopy}\label{app: perturbation_px}
\subsubsection{Experimental details}\label{app: perturbation_px_details}

\textbf{Data.} We use RxRx1 \citep{Sypetkowski_2023_CVPR}, a fluorescence-microscopy screen of gene knockdowns acquired in separate experimental batches, following the dataset choice, preprocessing pipeline, and modeling approach of \citet{zhang2025cellflux} and \citet{jones2026elucidating}. Exact values for every setting below are in \cref{tab:px_hparams}.
\begin{itemize}[leftmargin=*, itemindent=0pt]
  \item \emph{Content.} We restrict to the U2OS cell line as \citet{zhang2025cellflux}, spanning three acquisition experiments and $1{,}071$ perturbation identities, of which $100$ form the official evaluation subset. Since every crop is U2OS, cell-type conditioning carries no information and is disabled.
  \item \emph{Crops.} Each sample is a $96{\times}96$ crop with all six fluorescence channels retained, modeled natively rather than as an RGB projection or a latent code.
  \item \emph{Normalization.} Pixels are mapped to $[-1,1]$. At training time, we add uniform dequantization noise to avoid fitting the quantization lattice, and apply random flips. Neither is applied at evaluation.
  \item \emph{Conditioning.} The model receives perturbation and experiment identity as independent learned embeddings. Experiment identity is an observed acquisition covariate, not a phenotype.
  \item \emph{Channel projection.} Generated and real tensors are mapped to RGB through the same fixed linear projection and re-quantized before any metric, so the comparison never mixes projections.
\end{itemize}

\textbf{Stream.} Tasks arrive in the fixed order U2OS-01 $\rightarrow$ U2OS-02 $\rightarrow$ U2OS-03, so the shift is technical rather than semantic: the perturbation vocabulary is \textit{identical} in every task and only the acquisition batch changes. A deterministic $10\%$ of each experiment's training rows is held out for validation and $\lambda$ selection, and the official test rows are never trained on, replayed from, or used for selection. Task~0 is trained once per seed and shared by every method within each objective family, the flow matching and the diffusion one, so all differences arise from the updates.

\textbf{Model and training.} The velocity field is the ADM U-Net \citep{dhariwal2021diffusion} adapted to $96{\times}96{\times}6$ inputs, trained on the linear noise-to-data interpolant with target $\bm{x}_1-\bm{x}_0$. Every task receives the same budget of $25{,}000$ optimizer steps with Adam and a power-function EMA \citep{karras2024analyzing}, and all reported samples come from the EMA weights.

\textbf{Replay.} Generative and trajectory replay follow the same formulation as in the image experiments (\cref{sec:exp-image-cl}), with the teacher integrated by Euler at step size $0.05$. Equal-task weighting gives $\lambda=1/2$ at task~1 and $2/3$ at task~2, and the adaptive policy selects from $\{0.25,0.5,0.75\}$ by macro validation FID across the experiments seen so far. As in the image streams, we also ask whether a distilled mean velocity can replace the ODE teacher: we distill a MeanFlow model at task~0 for $25{,}000$ steps, matching the per-task budget but at batch size $16$ rather than $64$, and replay with $N\in\{1,4\}$ jumps. Here this is restricted to trajectory replay under \textit{equal-task} weighting, since the MeanFlow residual requires a Jacobian-vector product whose activation footprint scales with the $96{\times}96{\times}6$ state, making co-training substantially more expensive than at $32{\times}32$. The reduced batch is a direct consequence of the same cost.

\textbf{Baselines.} We report four reference points, all on the same stream, per-task budget, backbone, and evaluation protocol as our methods (\cref{tab:px_hparams}).
\begin{itemize}[leftmargin=*, itemindent=0pt]
  \item \emph{Joint training} retrains on the union of all real data seen so far, storing every past sample, and is therefore an upper target rather than a competing method.
  \item \emph{Real replay} mixes stored historical images into each batch. It is likewise privileged but respects the sequential budget, isolating the cost of replaying generated rather than stored samples.
  \item \emph{Naive fine-tuning} continues training on the incoming experiment alone with no retention mechanism, giving the lower target.
  \item \emph{Generative distillation (diffusion)} \citep{masip2025continual} replays samples from the frozen previous-task model but supervises them with the teacher's own noise prediction at a randomly drawn timestep rather than with the injected noise, so the student matches the teacher's denoising behavior along the whole trajectory instead of only reproducing its final samples. We run it on the identical backbone, stream, budget, and EMA as trajectory replay, with a full current batch plus a full generated batch ($32+32$), task-proportional weighting at replay coefficient $0.75$, the post-task online model as teacher, and zero-terminal-SNR DDIM with $20$ teacher and $50$ evaluation steps.
\end{itemize}
Because \citet{masip2025continual} evaluated only natural-image streams, every diffusion entry here is our \textit{own matched run} of their method on RxRx1.

\textbf{Evaluation.} At the final checkpoint, we evaluate each sequential experiment separately and then pool \citep{zhang2025cellflux}. Following prior work, with only a few dozen test images per perturbation, Fréchet distances cannot be estimated conditionally, so we report them marginally over perturbations and rely on composition matching to make the comparison meaningful: we generate exactly one image per test condition, so the generated and reference sets share the same perturbation-by-experiment composition and no perturbation is over-represented. The benchmark comprises $2{,}942$ treated test images, with the $120$ untreated controls excluded. We report Inception FID alongside a Fréchet distance in DINOv2 \citep{oquab2024dinov} feature space (FDD) at a pinned revision, sampling with an adaptive Dopri5 solver. All numbers are means with standard errors over seeds.
\begin{table}[t]
\centering
\definecolor{grouprow}{HTML}{F0F0F0}
\setlength{\tabcolsep}{4.5pt}
\small
\caption{Configuration for the continual microscopy experiments on RxRx1.}
\label{tab:px_hparams}
\resizebox{0.7\textwidth}{!}{%
\begin{tabular}{@{}ll@{}}
\toprule
Setting & Value \\
\midrule
\rowcolor{grouprow} \multicolumn{2}{@{}l}{\textit{Data}} \\
Cell line / experiments & U2OS, $3$ (U2OS-01/02/03) \\
Input & $6\times96\times96$, \texttt{uint8} $\rightarrow[-1,1]$ \\
Train augmentation & uniform dequantization $1/128$, random H/V flips \\
Perturbations & $1{,}071$ total, $100$ evaluated \\
Validation split & $10\%$ of each experiment's train rows \\
\midrule
\rowcolor{grouprow} \multicolumn{2}{@{}l}{\textit{Model}} \\
Backbone & Config-B ADM U-Net, $56.2$M parameters \\
Base channels / blocks & $128$ / $4$ per resolution \\
Channel multipliers & $(2,2,2)$, attention at one resolution \\
Dropout / label dropout & $0.3$ / $0.15$ \\
Conditioning & perturbation $+$ experiment (cell type disabled) \\
\midrule
\rowcolor{grouprow} \multicolumn{2}{@{}l}{\textit{Training}} \\
Steps per task & $25{,}000$ \\
Batch size & $64$ (one H100) \\
Optimizer & Adam, $10^{-4}$ \\
Gradient clip & $0.5$ (flow matching), $1.0$ (diffusion) \\
EMA & power-function, $\sigma_{\mathrm{rel}}=0.01$ \\
Seeds & $42$, $69$, $1714$ \\
\midrule
\rowcolor{grouprow} \multicolumn{2}{@{}l}{\textit{Replay}} \\
Teacher sampler & Euler, step $0.05$ ($20$ evaluations) \\
Equal-task $\lambda$ & $k/(k+1)$, i.e.\ $1/2$ then $2/3$ \\
Adaptive $\lambda$ grid & $\{0.25,0.5,0.75\}$, macro validation FID \\
MeanFlow jumps & $N\in\{1,4\}$, trajectory replay only \\
MeanFlow $(\mu,\sigma)$ & $(2.0,2.0)$, logit-normal \\
MeanFlow diagonal rate & $0.25$ \\
MeanFlow optimizer & AdamW, $10^{-4}$ \\
MeanFlow batch size & $16$ \\
MeanFlow distillation steps & $25{,}000$ (same as task-0 training) \\
\midrule
\rowcolor{grouprow} \multicolumn{2}{@{}l}{\textit{Evaluation}} \\
Generated samples & one per test condition (composition-matched) \\
Benchmark size & $2{,}942$ treated test images ($961/1000/981$) \\
Sampler & Dopri5, EMA weights, eval seed $2027$ \\
Metrics & Inception FID, DINOv2 FDD \\
Pooling & features concatenated across experiments \\
\bottomrule
\end{tabular}}
\end{table}

\FloatBarrier
\subsubsection{Additional results}\label{app:results-microscopy}

\textbf{Final-checkpoint performance.} \Cref{tab:px_cl_final} reports every configuration at the end of the stream. Trajectory replay is the best memory-free method on both pooled metrics, at $16.8$ FID and $96.1$ FDD against $22.0$ and $119.0$ for generative replay, narrowing the gap to the privileged references. The per-experiment columns locate the difference: the families are comparable on the two later experiments, but on U2OS-01 generative replay degrades to $70.9$--$71.4$ while trajectory replay stays at $39.6$--$40.5$. Since the perturbation vocabulary is identical across tasks, this is the oldest acquisition batch being overwritten rather than any loss of semantic coverage. The MeanFlow backends cost a little, $19.4$ FID at one jump against $17.2$ for the matched ODE run, but remain ahead of generative replay under either weighting. Generative distillation (diffusion), the strongest published no-storage baseline on natural-image streams, is the weakest replay method here on every column at $46.4$ FID and $218.2$ FDD.

\begin{table*}[!htbp]
\centering
\definecolor{trajectoryrow}{HTML}{E5EEF3}
\definecolor{referencerow}{HTML}{F0F0F0}
\setlength{\tabcolsep}{4.5pt}
\small
\caption{\textbf{Continual phenomic generation on RxRx1.}
Final-checkpoint performance after the U2OS-01$\rightarrow$U2OS-02$\rightarrow$U2OS-03 experiment stream. Per-experiment FID compares composition-matched generated and test images within each biological experiment and stream metrics pool all three experiments, with FDD denoting Fr\'echet distance in DINOv2 \citep{oquab2024dinov} feature space. Means with standard errors over seeds 42, 69, and 1714. Adaptive $\lambda$ is selected using validation FID and equal-task uses $\lambda_k=k/(k+1)$. Gray marks the privileged references that retain real historical data and blue marks trajectory replay. Bold marks the best memory-free mean in each column. Lower is better.}
\label{tab:px_cl_final}
\resizebox{\textwidth}{!}{%
\begin{tabular}{@{}lccccc@{}}
\toprule
Method & \multicolumn{3}{c}{Per-experiment FID $\downarrow$} & \multicolumn{2}{c}{Pooled stream $\downarrow$} \\
\cmidrule(lr){2-4}\cmidrule(l){5-6}
& U2OS-01 & U2OS-02 & U2OS-03 & FID & DINOv2 FDD \\
\midrule
\rowcolor{referencerow} \textbf{Joint training} (all data; upper bound) & $29.7$ {\scriptsize $(\pm\,0.2)$} & $19.9$ {\scriptsize $(\pm\,0.2)$} & $26.6$ {\scriptsize $(\pm\,0.2)$} & $12.3$ {\scriptsize $(\pm\,0.2)$} & $66.4$ {\scriptsize $(\pm\,1.4)$} \\
\rowcolor{referencerow} Real replay (stored data) & $29.7$ {\scriptsize $(\pm\,0.4)$} & $19.6$ {\scriptsize $(\pm\,0.2)$} & $26.3$ {\scriptsize $(\pm\,0.3)$} & $12.2$ {\scriptsize $(\pm\,0.3)$} & $66.1$ {\scriptsize $(\pm\,1.0)$} \\
\midrule
Naive fine-tuning & $\mathbf{36.0}$ {\scriptsize $(\pm\,0.6)$} & $83.0$ {\scriptsize $(\pm\,1.6)$} & $\mathbf{26.0}$ {\scriptsize $(\pm\,0.1)$} & $24.5$ {\scriptsize $(\pm\,0.3)$} & $110.1$ {\scriptsize $(\pm\,1.3)$} \\
Generative distillation (diffusion) \citep{masip2025continual} & $80.9$ {\scriptsize $(\pm\,6.9)$} & $82.1$ {\scriptsize $(\pm\,2.6)$} & $42.0$ {\scriptsize $(\pm\,0.3)$} & $46.4$ {\scriptsize $(\pm\,3.6)$} & $218.2$ {\scriptsize $(\pm\,13.4)$} \\
\midrule
Generative replay (ODE, adaptive $\lambda$) & $71.4$ {\scriptsize $(\pm\,0.3)$} & $27.1$ {\scriptsize $(\pm\,0.2)$} & $27.0$ {\scriptsize $(\pm\,0.2)$} & $22.0$ {\scriptsize $(\pm\,0.1)$} & $119.0$ {\scriptsize $(\pm\,1.6)$} \\
Generative replay (ODE, equal-task $\lambda$) & $70.9$ {\scriptsize $(\pm\,1.1)$} & $26.8$ {\scriptsize $(\pm\,0.1)$} & $27.7$ {\scriptsize $(\pm\,0.1)$} & $22.2$ {\scriptsize $(\pm\,0.2)$} & $121.4$ {\scriptsize $(\pm\,1.7)$} \\
\midrule
\rowcolor{trajectoryrow} Trajectory replay (ODE, adaptive $\lambda$) & $39.6$ {\scriptsize $(\pm\,0.3)$} & $\mathbf{22.5}$ {\scriptsize $(\pm\,0.6)$} & $28.8$ {\scriptsize $(\pm\,0.1)$} & $\mathbf{16.8}$ {\scriptsize $(\pm\,0.2)$} & $\mathbf{96.1}$ {\scriptsize $(\pm\,1.9)$} \\
\rowcolor{trajectoryrow} Trajectory replay (ODE, equal-task $\lambda$) & $40.5$ {\scriptsize $(\pm\,0.4)$} & $23.1$ {\scriptsize $(\pm\,0.6)$} & $28.6$ {\scriptsize $(\pm\,0.2)$} & $17.2$ {\scriptsize $(\pm\,0.4)$} & $99.5$ {\scriptsize $(\pm\,2.9)$} \\
\rowcolor{trajectoryrow} Trajectory replay (MF-1, equal-task $\lambda$) & $48.9$ {\scriptsize $(\pm\,4.5)$} & $25.4$ {\scriptsize $(\pm\,0.9)$} & $28.8$ {\scriptsize $(\pm\,0.2)$} & $19.4$ {\scriptsize $(\pm\,1.2)$} & $102.2$ {\scriptsize $(\pm\,3.2)$} \\
\rowcolor{trajectoryrow} Trajectory replay (MF-4, equal-task $\lambda$) & $46.4$ {\scriptsize $(\pm\,3.4)$} & $24.7$ {\scriptsize $(\pm\,0.8)$} & $28.9$ {\scriptsize $(\pm\,0.1)$} & $18.9$ {\scriptsize $(\pm\,0.7)$} & $101.7$ {\scriptsize $(\pm\,2.4)$} \\
\bottomrule
\end{tabular}%
}
\end{table*}

\textbf{Stream evolution.} \Cref{fig: app_fig21} tracks the same quantities after each update, and panel a shows the final checkpoint hiding a non-monotonic effect. Naive fine-tuning loses U2OS-01 at the first update, rising to roughly $97$ FID, then falls back to $36.0$, nominally the best memory-free value in that column. This is not retention but a \textit{similarity artifact}: U2OS-01 and U2OS-03 are visually close in illumination and staining intensity, whereas U2OS-02 is the outlier (\cref{fig:px_representative_samples}), so training on the third experiment partially restores the first. The excursion in between is the real measurement, and its size, a threefold FID increase in a single update, is what makes the point about forgetting under a purely technical shift. Both replay families avoid it, with trajectory replay flat across the stream and generative replay drifting upward from the second update. Panel b shows the pooled metric over a growing reference set, where trajectory replay separates at the first update and holds the margin.

\begin{figure}[!htbp]
\centering
\includegraphics[width=1\textwidth]{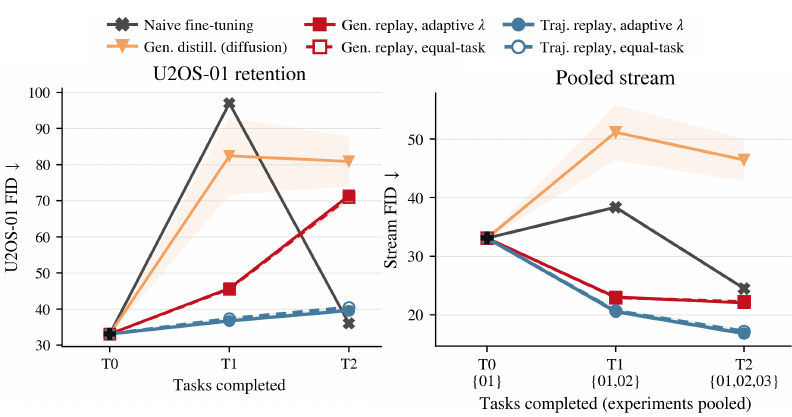}
\caption{\textbf{RxRx1 experiment stream.}
(\textbf{a}) FID on U2OS-01 and (\textbf{b}) pooled stream FID as a function of tasks completed, for naive fine-tuning, diffusion distillation, and generative and trajectory replay under adaptive and equal-task $\lambda$. The pooled reference set in (\textbf{b}) grows with each task, as indicated below the tick labels, whereas (\textbf{a}) keeps U2OS-01 fixed. Lines show the mean and shaded bands the variability over seeds 42, 69, and 1714. Lower is better.}
\label{fig: app_fig21}
\end{figure}
\FloatBarrier

\textbf{Weighting and backend.} \Cref{fig: app_fig22}a separates the two knobs. On the left, the choice of weighting policy barely moves either family, $16.8$ against $17.2$ FID for trajectory replay and $22.0$ against $22.2$ for generative replay, so the gap between them is not an artifact of how $\lambda$ was set. On the right, the selected weights nonetheless differ: trajectory replay tracks the equal-task rule across both updates, while generative replay settles below it and declines, the same pattern as on the image streams and consistent with the criterion discounting replayed samples whose fidelity is low.

Panel b sweeps the backend. The ODE teacher integrates to an intermediate time at step size $0.05$, averaging $11.5$ evaluations per update against the $20$ a full generation would cost, and two MeanFlow evaluations reach $19.4$ FID against its $17.2$, with the FDD ordering the same at a proportionally smaller spread. A near-sixfold reduction in replay cost therefore costs about two FID points, which is the regime in which the mean velocity is worth using.

\textbf{Qualitative samples.} \Cref{fig:px_representative_samples} shows real and trajectory-replay crops for three perturbations across the three experiments. The batch structure is visible directly: U2OS-01 and U2OS-03 share a warm, brightly stained appearance while U2OS-02 is darker and shifted toward green, which is the similarity that lets naive fine-tuning recover U2OS-01 after training on U2OS-03. Generated crops reproduce the staining balance and nuclear morphology of their own experiment rather than defaulting to the appearance of the most recent one.

\FloatBarrier
\begin{figure}[!htbp]
\centering
\includegraphics[width=0.9\textwidth]{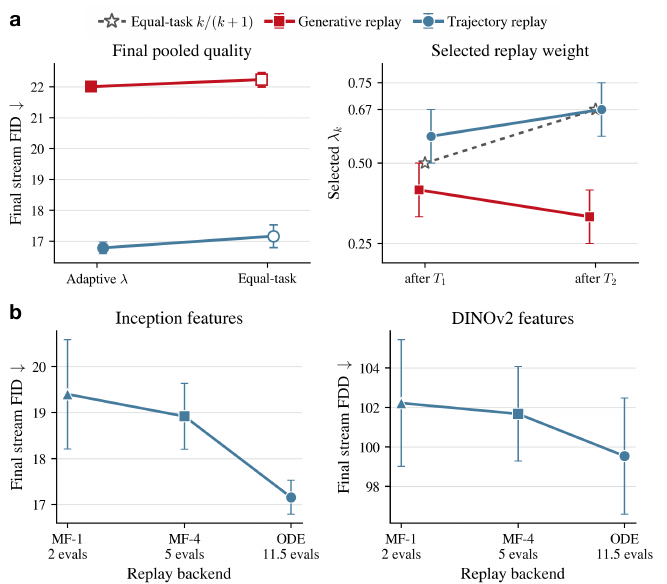}
\caption{\textbf{Replay-weight and backend ablations on RxRx1.}
(\textbf{a}) Final pooled stream FID under adaptive and equal-task $\lambda$ for generative and trajectory replay (left), with filled markers denoting adaptive $\lambda$ and open markers equal-task, and the mean replay weight $\lambda_k$ selected at each update (right), with the equal-task rule $\lambda_k=k/(k+1)$ as a dashed reference.
(\textbf{b}) Final pooled stream FID in Inception features (left) and final pooled stream Fr\'echet DINOv2 distance (FDD) in DINOv2 features (right) for trajectory replay with the MF-1, MF-4, and ODE backends, labeled by their replay-field evaluations per update. Error bars show variability over seeds 42, 69, and 1714. Lower is better for both metrics.}
\label{fig: app_fig22}
\end{figure}

\begin{figure}[!htbp]
\centering
\includegraphics[width=0.6\textwidth]{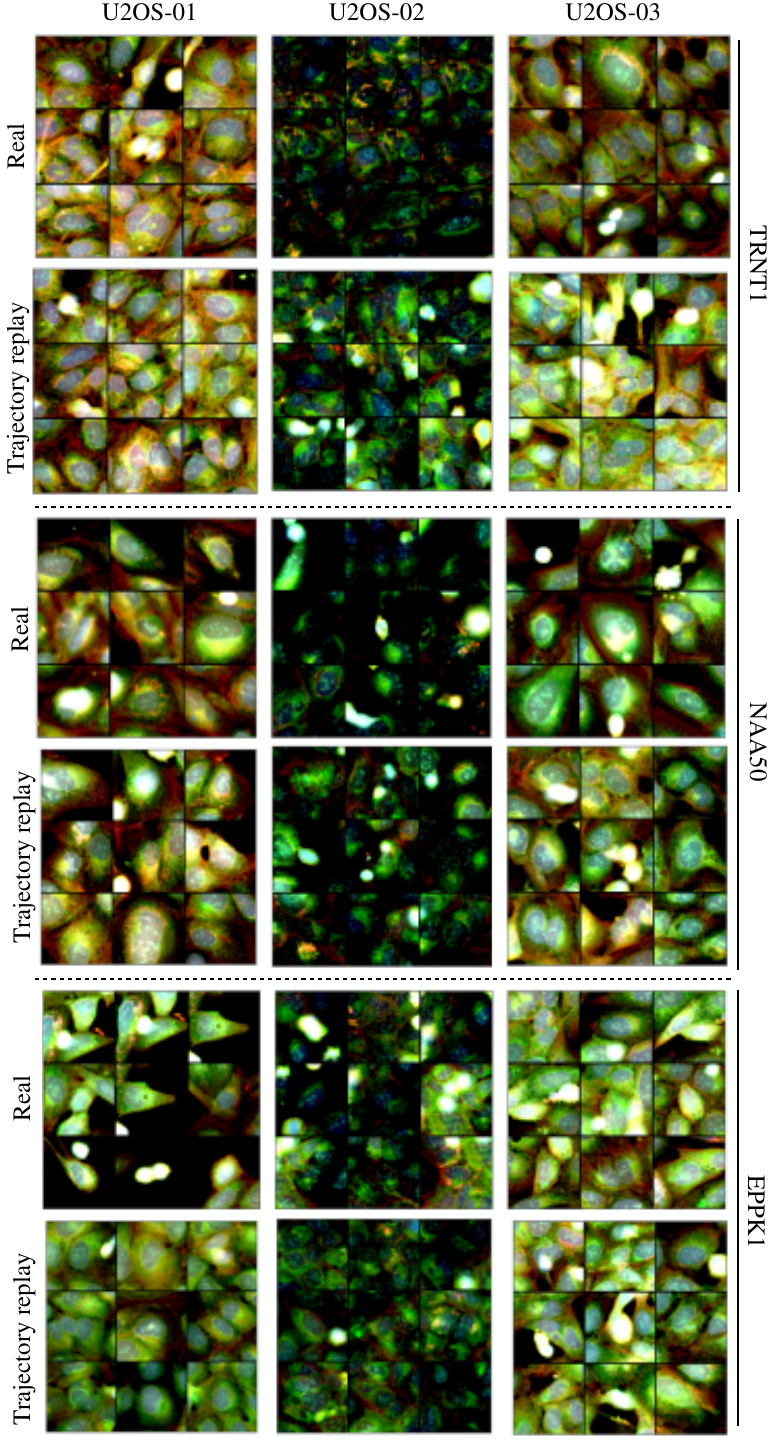}
\caption{\textbf{RxRx1 samples after the final update.}
Real test images and trajectory-replay samples for the perturbations TRNT1, NAA50, and EPPK1, shown for each biological experiment of the stream. Each block contains nine images from the corresponding perturbation and experiment, rendered through the fixed six-channel-to-RGB projection.}
\label{fig:px_representative_samples}
\label{fig: app_fig23}
\end{figure}

\end{document}